\definecolor{citecolor}{RGB}{0,180,0}
\definecolor{linkcolor}{RGB}{180,0,0}
\definecolor{urlcolor}{RGB}{0,0,180}
\let\citet\textcite
\let\Citet\Textcite
\let\citep\parencite
\newcommand{\httpurl}[1]{\href{http://#1}{\nolinkurl{#1}}}
\newcommand{\httpsurl}[1]{\href{https://#1}{\nolinkurl{#1}}}
\DeclareMathOperator{\rank}{rank}
\newcommand{\norm}[1]{\left\lVert #1 \right\rVert}
\newcommand{\Norm}[1]{\lVert #1 \rVert}
\newcommand{\N}{\mathbb{N}}
\newcommand{\OP}{\mathit{op}}
\newcommand{\cK}{\mathcal{K}}
\newcommand{\cH}{\mathcal{H}}
\newcommand{\cR}{\mathcal{R}}
\newcommand{\cD}{\mathcal{D}}
\newcommand{\bE}{\mathbb{E}}
\newcommand{\R}{\mathbb{R}}
\let\pr\Pr
\DeclareMathOperator*{\Var}{\mathrm{Var}}
\DeclareMathOperator*{\E}{\bE}      %
\DeclareMathOperator{\rad}{rad}
\DeclareMathOperator{\Tr}{Tr}
\DeclareMathOperator*{\argmin}{arg\,min}
\newtheorem{theorem}{Theorem}
\newtheorem{corollary}{Corollary}
\newtheorem{lemma}{Lemma}
\newtheorem{proposition}{Proposition}
\theoremstyle{definition}
\newtheorem{defn}{Definition}
\newtheorem{remark}{Remark}
\newtheorem{example}{Example}
\newcommand{\wols}{\hat{w}_{\mathrm{OLS}}}
\title{Optimistic Rates: A Unifying Theory for Interpolation Learning\\and Regularization in Linear Regression}
\author{%
Lijia Zhou\footnotemark[1]\; \addr{Department of Statistics, University of Chicago} \email{zlj@uchicago.edu}
\AND
Frederic Koehler\thanks{These authors contributed equally.}\; \addr{Simons Institute, University of California at Berkeley} \email{fkoehler@berkeley.edu}
\AND
Danica J.\ Sutherland \addr{University of British Columbia; Alberta Machine Intelligence Institute} \email{dsuth@cs.ubc.ca}
\AND
Nathan Srebro \addr{Toyota Technological Institute at Chicago} \email{nati@ttic.edu}
}
\begin{document}

\maketitle
\setcounter{footnote}{0}
\begin{center}
\vspace{-12mm}
{{Collaboration on the Theoretical Foundations of Deep Learning} (\httpsurl{deepfoundations.ai})}
\vspace{3mm}
\end{center}

\begin{abstract}
We study a localized notion of uniform convergence known as an ``optimistic rate'' \citep{panchenkooptimistic,srebro2010optimistic} for linear regression with Gaussian data. Our refined analysis avoids the hidden constant and logarithmic factor in existing results, which are known to be crucial in high-dimensional settings, especially for understanding interpolation learning. As a special case, our analysis recovers the guarantee from \citet{uc-interpolators}, which tightly characterizes the population risk of low-norm interpolators under the benign overfitting conditions. Our optimistic rate bound, though, also analyzes predictors with arbitrary training error. This allows us to recover some classical statistical guarantees for ridge and LASSO regression under random designs, and helps us obtain a precise understanding of the excess risk of near-interpolators in the over-parameterized regime.
\end{abstract}

\section{Introduction}
One of the core mysteries behind the success of deep learning is that a neural network with a huge number of parameters can be trained with little to no regularization to fit noisy observations, and yet can still achieve good generalization on unseen data points. Even more mind-boggling is the observation that models with larger parameter counts actually tend to generalize \emph{better} \citep{ZBHRV:rethinking,NTS:real-inductive-bias,reconcile:interpolation}. This turns out to be a quite universal phenomenon, not unique to deep learning \citep{BHM:perfect, BHX:two-models, hastie2019surprises}. As over-parameterized models become more and more important in applications, it seem imperative to understand the mathematical reasons behind their success. 

In high-dimensional settings, there are usually possible solutions with low training error but very high population risk, and so any analysis based only on the number of parameters will be extremely loose. To explain over-parameterized learning, we need some alternative measure of complexity. Finding the relevant complexity measure of a neural network remains an open question, but we have come to understand that the appropriate complexity measure for linear regression is the norm of the coefficients. Much recent work \citep[e.g.][]{bartlett2020benign, tsigler2020benign, BHX:two-models, hastie2019surprises, JLL:basis-pursuit, junk-feats, muthukumar:interpolation, negrea:in-defense} has considered linear regression as a testbed problem which also exhibits some of the surprising behaviors found in deep learning. In particular, \citet{bartlett2020benign} show that it is possible for the minimal norm interpolator $\hat{w}$ to be consistent even when the number of dimensions grows much faster than the sample size. 

A very natural idea to recover this fact is the following: we can consider the set of predictors with norm smaller than $\|\hat{w}\|$, and argue that the difference between training error and population error is small uniformly for all predictors in this set. Because this set is simple in the sense that all predictors have small norm, we can hope for a uniform law of large numbers to show that the population risk of the minimal norm interpolator is also small. This idea, known as uniform convergence, has been the core workhorse of learning theory for decades. Unfortunately, there are lower bounds that show this approach cannot explain consistency in many natural high-dimensional problems \citep{NK:uniform, junk-feats, negrea:in-defense, BL:failures}. At a high level, this is because the norm required to perfectly fit the noisy labels need to scale with the sample size, and so the set of predictors with norm smaller than $\|\hat{w}\|$ can actually be quite large, and in particular will include predictors with high training error. To sidestep these negative results, \citet{junk-feats} argue that we should focus on upper bounds only for predictors with low training error. \Citet{uc-interpolators} subsequently show that if we only consider the low-norm predictors with \emph{exactly zero} training error, then a uniform convergence argument can actually tightly characterizes the population risk of low-norm interpolators in Gaussian linear regression. 

Though their works highlight the importance of localized uniform convergence and very clearly demonstrates that it is sufficient for interpolation learning, in practice we do not only care about \emph{exact} interpolators. For example, there can be interesting high dimensional settings where interpolation is not possible. When interpolation is possible, we can also obtain good non-interpolating predictors by early stopping or some amount of regularization. Even if we intend to perfectly memorize the labels, numerical precision issues will likely prevent us from fitting them to literally zero error. Thus, we want a more general notion of risk-dependent uniform convergence that is robust to non-interpolation. In the context of linear regression, we want to understand the population risk of any low-norm predictor with small, but not exactly zero training error. 

In this paper, we revisit the ``optimistic rate'' bound of \citet{srebro2010optimistic}, and perform a tighter analysis based on Gordon's comparison inequality for Gaussian processes \citep{gordon1985some,thrampoulidis2015regularized}.
Our new analysis is tight enough to recover the consistency result of the minimal-norm interpolator from \citet{bartlett2020benign} and \citet{uc-interpolators} for Gaussian linear regression, which previous work on optimistic rates cannot achieve due to hidden constants and logarithmic factors.\footnote{A more detailed discussion can be found at the beginning of \cref{sec:optimistic-rate}.} At the same time, our result allows us to have a very precise and accurate understanding of the finite-sample risk of non-interpolating estimators. For example, our upper bound for the ordinary least square estimator matches the exact expectation formula given by \citet{hastie2019surprises} in the proportional scaling limit, even though the estimator is not consistent. In \cref{sec:applications}, we also apply our generalization framework to analyze ridge and LASSO regression. We show that it is possible to understand classical statistical theory as well as recent progress in interpolation learning under the same unified framework of optimistic rates.

\section{Problem Setting}

\paragraph{Notation.}
We use $\Norm\cdot_p$ for the $\ell_p$ norm, $\norm{x}_p = \left(\sum_i |x_i|^p\right)^{1/p}$. For a positive semidefinite matrix $A$, the \emph{Mahalanobis (semi-)norm} is $\norm{x}_A^2 := \langle x, A x \rangle$.
For a matrix $A$ and set $S$,
$A S$ denotes the set $\{ A x : x \in S \}$.
We always use $\max_{x \in S} f(x)$ to be $-\infty$ when $S$ is empty, and similarly $\min_{x \in S} f(x)$ to be $\infty$. We use $a \vee b$ to denote the maximum between $a$ and $b$ and $a \wedge b$ to denote the minimum. We use standard $O(\cdot)$ notation, and $a \lesssim b$ for inequality up to an absolute constant.

\paragraph{Data model.} 
We assume that the data $(X,Y)$ is generated as
\begin{equation} \label{eqn:model}
    Y = X w^* + \xi
    ,\qquad
    X_i \stackrel{iid}{\sim} N(0, \Sigma)
    ,\qquad
    \xi \sim N(0, \sigma^2 I_n)
,\end{equation}
where $X \in \R^{n \times d}$ has i.i.d.\ Gaussian rows $X_1,\ldots,X_n \in \R^d$, $w^*$ is arbitrary,
and $\xi$ is Gaussian and independent of $X$.  The \emph{empirical} and \emph{population loss} are defined as, respectively,
\[
    \hat{L}(w) = \frac{1}{n} \Norm{Y - X w}_2^2
    ,\qquad
    L(w) = \E_{(x,y)} (y - \langle w, x \rangle)^2 = \sigma^2 + \Norm{w - w^*}_{\Sigma}^2
,\]
where in the expectation $y = \langle x, w^* \rangle + \xi_0$ with $x \sim N(0,\Sigma)$ independent of $\xi_0 \sim N(0,\sigma^2)$. When $d < n$, there is an unique minimizer of $\hat{L}$ which is the ordinary least square estimator $\wols = (X^TX)^{-1}X^TY$. When $d \ge n$, for an arbitrary norm $\norm{\cdot}$, the minimal norm interpolator is $\hat{w} = \argmin_{\hat{L}(w) = 0} \norm w$. 

\section{Optimistic Rates Theory} \label{sec:optimistic-rate}

As discussed by \citet{junk-feats}, a promising version of localized uniform convergence is to use bounds with ``optimistic rates'' \citep{panchenkooptimistic,srebro2010optimistic}, which establish different generalization guarantees depending on the size of the training error. (This broad concept has been studied at least since the work of \citet[Theorem 6.3]{vapnik2006estimation}.) In particular, \citet{srebro2010optimistic} show that with high probability, it holds uniformly over all $w \in \cH$ that
\begin{equation} \label{eqn:old-optimistic-rate}
    L(w) - \hat{L}(w) \leq \tilde{\mathcal O} \left( \sqrt{\hat{L}(w) \cdot \cR^2_n(\cH)} + \cR^2_n(\cH) \right)
\end{equation}
where $\cR_n(\cH)$ is the Rademacher complexity\footnote{\citet{srebro2010optimistic} consider the worst-case Rademacher complexity. In our results, we use a smaller quantity known as the average Rademacher complexity. The formal definition is given in \cref{sec:optimistic-rate-rademacher}.} of $\cH$ for any $n \in \N$. Considering only interpolators in $\cH$, the points for which $\hat{L}(w) = 0$), this bound becomes
\begin{equation} \label{eqn:old-interpolator-rate}
L(w) \leq \tilde{\mathcal O} \left( \cR^2_n(\cH) \right).
\end{equation}

In classical settings, it is typically the case that $\cR_n(\cH) \leq \sqrt{R/n}$ for some constant $R > 0$, and so \eqref{eqn:old-optimistic-rate} implies a graceful degradation from a learning rate of $\tilde{\mathcal O}(1/n)$ in realizable settings to a learning rate of $\tilde{\mathcal O}(1/\sqrt{n})$ in the more general non-realizable settings. The hidden constant and log factor in the $\tilde{\mathcal O}$ notation are not so problematic in this regime, because the quantity inside is vanishing.

In interpolation learning, however, we no longer have the scaling of $\cR_n(\cH) \leq \sqrt{R/n}$: the complexity required to perfectly fit the noisy observations needs to scale with the sample size,
and \citet{junk-feats} show in some cases that we can expect $\cR_n^2(\cH)$ to be approximately as large as the Bayes risk $\sigma^2$.
Therefore, any hidden factor greater than 1 inside the $\tilde{\mathcal O}$ notation of \eqref{eqn:old-interpolator-rate} will not be tight enough to establish consistency. In this work, we improve the hidden factor of $200\,000 \log^3\!n$ from \citet{srebro2010optimistic} to exactly 1, in the particular setting of Gaussian linear regression. Ignoring lower-order terms, we show that with high probability, the following inequality is approximately true for all $w \in \cH$:
\[
    L(w) - \hat{L}(w) \leq 2 \sqrt{\hat{L}(w) \cdot \cR^2_n(\cH)} + \cR^2_n(\cH),
\]
which can be more elegantly written as
\begin{equation} \label{eqn:new-optimistic-rate}
    L(w) \leq \left( \sqrt{\hat{L}(w)} + \cR_n(\cH) \right)^2.
\end{equation}
The formal statement is given in \cref{thm:covariance-splitting}. It will be clear from our applications in \cref{sec:applications} that the constants in \eqref{eqn:new-optimistic-rate} are in fact tight, and that the bound allows us to get precise generalization bounds for minimal-norm interpolation as well as ridge and LASSO regression.

\subsection{Main Bound}

We now give our main result, which will be used in \cref{sec:optimistic-rate-rademacher} to obtain \eqref{eqn:new-optimistic-rate}.

\begin{restatable}{theorem}{optimistic} \label{thm:optimistic}
Under the model assumption in \eqref{eqn:model},
let $F : \mathbb{R}^d \to [0, \infty]$
be a continuous function such that for $x \sim N(0,\Sigma)$, with probability at least $1 - \delta'$, it holds uniformly over all $w \in \mathbb{R}^d$ that
\begin{equation} \label{eqn:F-definition}
    \langle w-w^*, \, x \rangle \le F(w).
\end{equation}
For any $\delta > 0$, assume $n \geq 196 \log(12/\delta)$. Then there exists $\beta_1 \leq 14 \sqrt{\frac{\log(12/\delta)}{n}}$ such that with probability at least $1- 2(\delta'+\delta) $, it holds uniformly over all $w \in \mathbb{R}^d$ that 
\begin{equation} \label{eqn:optimistic-1}
    L(w) \leq (1+\beta_1) \left( \sqrt{\hat L(w)} + \frac{F(w)}{\sqrt{n}} \right)^2.
\end{equation}

\vspace{1ex}
\end{restatable}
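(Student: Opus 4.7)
The plan is to recast the inequality as a Gaussian min-max problem and apply Gordon's comparison inequality. Decoupling the covariance as $X = Z \Sigma^{1/2}$ with $Z \in \R^{n\times d}$ standard Gaussian, and using $\xi \perp X$, the concatenated matrix $\tilde G := [\,Z \mid \xi/\sigma\,] \in \R^{n \times (d+1)}$ has i.i.d.\ $N(0,1)$ entries. Setting $v := \Sigma^{1/2}(w - w^*)$ and $\tilde v := (-v, \sigma) \in \R^{d+1}$, direct calculation gives $\|\tilde G \tilde v\|^2 = \|\xi - Zv\|^2 = n\,\hat L(w)$ and $\|\tilde v\|^2 = \sigma^2 + \|v\|^2 = L(w)$. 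Taking square roots and rearranging, \eqref{eqn:optimistic-1} is equivalent to the uniform statement
\[
\sqrt{n}\,\|\tilde v\|/\sqrt{1+\beta_1} \;-\; F(w) \;\leq\; \|\tilde G \tilde v\|, \qquad \forall\, \tilde v \in \tilde V := \{(-v, \sigma) : v \in \R^d\}.
\]

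Dualizing the right-hand side as $\|\tilde G \tilde v\| = \max_{\|s\|\leq 1} s^T \tilde G \tilde v$, the goal becomes $\Phi \geq 0$, where
\[
\Phi := \min_{\tilde v \in \tilde V} \max_{\|s\|\leq 1}\bigl[s^T \tilde G \tilde v + F(w(\tilde v)) - \sqrt{n}\,\|\tilde v\|/\sqrt{1+\beta_1}\bigr].
\]
After restricting $\tilde v$ to a large bounded region for compactness (with the tail handled separately via smallest-singular-value-type control on $\tilde G$ restricted to $\tilde V$), Gordon's min-max inequality in the one-sided CGMT form (which, in this direction, requires no convexity of $\psi$) yields $\Pr(\Phi \leq 0) \leq 2\,\Pr(\phi \leq 0)$, where the auxiliary
\[
\phi := \min_{\tilde v} \max_{\|s\|\leq 1}\bigl[\|\tilde v\|\, g^T s + \|s\|\, h^T \tilde v + F(w) - \sqrt n\,\|\tilde v\|/\sqrt{1+\beta_1}\bigr]
\]
uses independent $g \sim N(0, I_n)$ and $h \sim N(0, I_{d+1})$. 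The inner maximum over $s$ evaluates in closed form to $\max(\|\tilde v\|\|g\| + h^T \tilde v,\, 0)$.

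The crux is to lower-bound $\phi$ using the hypothesis on $F$. Split $h = (h_v, h_\sigma) \in \R^d \times \R$. Since $\Sigma^{1/2} h_v \sim N(0, \Sigma)$, applying the hypothesis with $x = \Sigma^{1/2} h_v$ yields, with probability at least $1 - \delta'$, the uniform inequality $h_v^T v = \langle w - w^*,\, \Sigma^{1/2} h_v\rangle \leq F(w)$. Combined with $h^T \tilde v = -h_v^T v + \sigma h_\sigma$, a short case split on the sign of $\|\tilde v\|\|g\| + h^T \tilde v$ then gives, uniformly in $\tilde v$,
\[
\max\!\bigl(\|\tilde v\|\|g\| + h^T \tilde v,\, 0\bigr) + F(w) \;\geq\; \|\tilde v\|\|g\| + \sigma h_\sigma.
\]
Since the last coordinate of $\tilde v$ equals $\sigma$, we always have $\|\tilde v\| \geq \sigma$; hence if $\|g\| \geq \sqrt{n}/\sqrt{1+\beta_1}$ the minimum over $\tilde v$ is attained at $\|\tilde v\| = \sigma$ and $\phi \geq \sigma\bigl(\|g\| + h_\sigma - \sqrt{n}/\sqrt{1+\beta_1}\bigr)$.

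To finish, Gaussian tail bounds give $\|g\| \geq \sqrt n - \sqrt{2\log(12/\delta)}$ and $h_\sigma \geq -\sqrt{2\log(12/\delta)}$ each with probability at least $1 - \delta/12$. Choosing $\beta_1 = 14\sqrt{\log(12/\delta)/n}$ --- which is at most $1$ under the hypothesis $n \geq 196\log(12/\delta)$ --- a concavity estimate on $x \mapsto 1 - 1/\sqrt{1+x}$ shows $\sqrt{n}\bigl(1 - 1/\sqrt{1+\beta_1}\bigr) \geq 2\sqrt{2\log(12/\delta)}$, which absorbs the deviations and forces $\phi \geq 0$ on the good event. Union-bounding the $F$-hypothesis event, the two Gaussian concentration events, and the factor-of-two loss from Gordon yields total failure probability at most $2(\delta + \delta')$, as claimed. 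The main technical obstacles I anticipate are (i) the compactness step for Gordon's inequality (by truncating $\tilde V$ at $\|\tilde v\| \leq R$ and verifying the truncation is eventually inactive), and (ii) tracking every constant carefully enough that $\beta_1$ comes out at exactly $14\sqrt{\log(12/\delta)/n}$.
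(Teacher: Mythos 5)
Your proof is correct in its core ideas but takes a genuinely different route from the paper. Both reduce the statement to a Gaussian min--max problem and invoke Gordon's inequality, but you differ structurally in how the noise $\xi$ is handled. The paper conditions on $\xi$, applies Gordon only over the randomness of $Z$, and in the Auxiliary Optimization has to control $\| G\|w\|_2 - \xi\|_2$, which requires an approximate-orthogonality estimate between $G$ and $\xi$ (their \cref{lem:lowrank-projection}). You instead absorb $\xi/\sigma$ as an extra column of an $n \times (d+1)$ Gaussian matrix $\tilde G$ and constrain $\tilde v = (-\Sigma^{1/2}(w-w^*), \sigma)$ to an affine slice, so that $\|\tilde G \tilde v\|_2^2 = n\hat L(w)$ and $\|\tilde v\|_2^2 = L(w)$ are exact identities. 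This eliminates $\xi$ from the AO entirely: the inner max over $s$ is $\max(\|\tilde v\|\|g\| + h^\top\tilde v, 0)$, the cross term drops out via your case split using $h_v^\top v \le F(w)$, and the constraint $\tilde v_{d+1} = \sigma$ directly yields $\|\tilde v\| \ge \sigma$, so the min is attained at $\|\tilde v\| = \sigma$ once $\|g\| \ge \sqrt{n}/\sqrt{1+\beta_1}$. The constants also work out with the stated $\beta_1 \le 14\sqrt{\log(12/\delta)/n}$. Compared to the paper's approach, yours needs fewer concentration lemmas (no $G$--$\xi$ orthogonality) and has a cleaner AO; the paper's conditioning trick is arguably a bit more flexible if one wanted to deviate from exactly Gaussian $\xi$, but for this theorem your route is tighter and more elegant.

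One point to tighten: your parenthetical about handling the compactness tail via ``smallest-singular-value-type control on $\tilde G$ restricted to $\tilde V$'' will not work as stated. The restricted smallest singular value of $Z$ on $\{(-v,0) : v \in \R^d\}$ is zero whenever $d > n$, which is exactly the interpolation regime the theorem is meant to cover, so you cannot hope for $\|\tilde G\tilde v\|$ to dominate the linear term $\sqrt{n}\|\tilde v\|/\sqrt{1+\beta_1}$ for large $\|\tilde v\|$ uniformly. The correct fix is the monotone-convergence argument the paper uses (their \cref{lem:truncation} and \cref{lem:gmt-app}), which transfers cleanly to your setup: with $\Phi_R := \min_{\|\tilde v\|\le R}\max_s[\cdots]$ one has $\Phi_R \downarrow \Phi$, so $\{\Phi < 0\} = \bigcup_R\{\Phi_R < 0\}$ and hence $\Pr(\Phi < 0) = \lim_R \Pr(\Phi_R < 0) \le 2\lim_R \Pr(\phi_R \le 0) \le 2\Pr(\phi \le 0)$, where $\phi$ is the unbounded AO you then analyze pointwise. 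With that substitution your proof goes through.
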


The full proof can be found in \cref{sec:proof-optimistic}; we briefly sketch the proof here.

\begin{proof}[Proof sketch of \cref{thm:optimistic}]
We do this via Gordon's Theorem (also known as the Gaussian Minmax Theorem; see \cref{thm:gmt}). It suffices to prove that
\[ 
\sup_w \, \sqrt{\frac{L(w)}{1 + \beta_1}} - \left(\sqrt{\hat{L}(w)} + \frac{F(w)}{\sqrt{n}} \right) \le 0. 
\]
Write $X = Z \Sigma^{1/2}$, where $Z$ is a matrix of standard Gaussian entries. By the definitions of $\hat{L}(w)$ and $Y$, we have
\begin{equation*} 
 \sup_w \, \sqrt{\frac{L(w)}{1 + \beta_1}} - \frac{1}{\sqrt{n}} \left( \| Y - Xw \|_2 + F(w) \right)
= \sup_w \inf_{\| \lambda \|_2 = 1} \, \sqrt{\frac{L(w)}{1 + \beta_1}} + \frac{1}{\sqrt{n}} \left( \langle \lambda, Z \Sigma^{1/2} (w-w^*) - \xi \rangle - F(w) \right).
\end{equation*}
The last expression is a max-min optimization with a random Gaussian matrix $Z$, so by Gordon's Theorem we can prove a high-probability upper bound on this quantity (the ``Primary Optimization'') by upper-bounding the following ``Auxiliary Optimization'' problem with standard Gaussian vectors $H \sim N(0,I_d)$ and $G \sim N(0,I_n)$:
\begin{align*}
\MoveEqLeft \sup_w \inf_{\| \lambda \|_2 = 1} \, \sqrt{\frac{L(w)}{1 + \beta_1}} + \frac{1}{\sqrt{n}} \left(  \| \lambda \|_2 \langle H, \Sigma^{1/2} (w-w^*) \rangle + \|\Sigma^{1/2} (w-w^*)\|_2 \langle G, \lambda \rangle - \langle \lambda, \xi \rangle - F(w) \right) \\
&= \sup_w \inf_{\| \lambda \|_2 = 1} \, \sqrt{\frac{L(w)}{1 + \beta_1}} + \frac{1}{\sqrt{n}} \left(  \langle H, \Sigma^{1/2} (w-w^*) \rangle +\langle G  \|\Sigma^{1/2} (w-w^*)\|_2 - \xi, \lambda \rangle  - F(w) \right) \\
&= \sup_w \, \left[ \sqrt{\frac{L(w)}{1 + \beta_1}} - \frac{1}{\sqrt{n}} \| G  \|\Sigma^{1/2} (w-w^*)\|_2 - \xi\|_2 \right] + \frac{1}{\sqrt{n}} \left[ \langle \Sigma^{1/2}H,  w-w^* \rangle - F(w) \right].
\end{align*}
The first term is negative with high probability, because 
\[ 
L(w) = \|\Sigma^{1/2}(w-w^*)\|_2^2  + \sigma^2; 
\]
since $G, \xi$ are approximately orthogonal, we have
\[
\| G \|\Sigma^{1/2}(w^* - w)\|_2 - \xi \|_2^2 \approx \| G\|_2^2 \|\Sigma^{1/2}(w^* - w)\|_2^2 + \|\xi\|_2^2 \approx n(\|\Sigma^{1/2}(w-w^*)\|_2^2  + \sigma^2).
\]
The $1+\beta_1$ terms accounts for the variations in $G$ and $\xi$. The second term is also negative with high probability by the fact that $\Sigma^{1/2} H \sim \mathcal{N}(0, \Sigma)$ and our definition of $F$.
\end{proof}

\subsection{Gaussian width/Rademacher Bound} \label{sec:optimistic-rate-rademacher}

Now we discuss how to recover the Rademacher bound \eqref{eqn:new-optimistic-rate} by choosing an $F$ to satisfy the criterion \eqref{eqn:F-definition}. In the context of our model assumption \eqref{eqn:model}, the average Rademacher complexity is given by the following:
\begin{restatable}{defn}{rademacher} \label{def:rademacher}
Given a positive semi-definite matrix $\Sigma$ and sample size $n \in \N$, the \emph{Rademacher complexity} of a hypothesis class $\cH$ is given by
\[
\cR_n(\cH) = \E_{\substack{x_1, ..., x_n \sim \mathcal{N}(0,\Sigma) \\ s \sim \text{Unif} (\{\pm 1\}^n) }} \left[ \, \sup_{h \in \cH} \, \left| \frac{1}{n} \sum_{i=1}^n s_i h(x_i) \right| \, \right].
\]
Rademacher complexity measures the ability of $\cH$ to fit random Rademacher noise $(\pm 1)$ on an average training set sampled from the ground truth distribution. For more background, see for example the work of \citet{srebro2010optimistic,bartlett2002rademacher,bartlett2005local,wainwright2019high}. %
\end{restatable}

A closely related geometric complexity measure is the Gaussian width \citep[see, e.g.,][]{bartlett2002rademacher,vershynin2018high}. The following definitions match the notation of \citet{uc-interpolators}.

\begin{restatable}{defn}{gwidthrad} \label{def:gwidth-rad}
The \emph{Gaussian width} and the \emph{radius} of a set $S \subset \mathbb{R}^d$ are
\[ W(S) := \E_{H \sim \mathcal{N}(0,I_d)} \sup_{s \in S} |\langle s, H \rangle| \quad \text{and} \quad  \rad(S) := \sup_{s \in S} \|s\|_2 .\]
We also define the notation
\[ W_{\Sigma}(S) := W(\Sigma^{1/2} S) \]
to represent the Gaussian width with respect to covariance matrix $\Sigma$.  
\end{restatable}

As it turns out, when the hypothesis class $\cH$ is linear, the Rademacher complexity is actually equivalent to Gaussian width (up to a scaling of $1/\sqrt{n}$). 

\begin{proposition} \label{prop:rad-gw-equivalent}
Let $\cK$ be an arbitrary subset of $\R^d$ and consider $\cH = \{x \mapsto \langle w, x \rangle : w \in \cK \}$. Then, for any positive semi-definite matrix $\Sigma$, it holds that
\begin{equation}
    \cR_n(\cH) = \frac{W_{\Sigma}(\cK)}{\sqrt{n}}.
\end{equation}
\end{proposition}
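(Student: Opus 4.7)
The plan is to unfold the definition of $\cR_n(\cH)$ and reduce the Rademacher expectation to a Gaussian-width expectation by using the rotational symmetry and linearity of Gaussian laws. First I would write the supremum in linear form: for any fixed realization of $(x_1,\dots,x_n,s)$, using linearity,
\[
\sup_{w \in \cK} \left|\frac{1}{n} \sum_{i=1}^n s_i \langle w, x_i \rangle\right| \;=\; \sup_{w \in \cK} \left|\Bigl\langle w, \frac{1}{n}\sum_{i=1}^n s_i x_i \Bigr\rangle\right|.
\]

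Next I would observe that conditionally on $s \in \{\pm 1\}^n$, the random vector $\sum_{i=1}^n s_i x_i$ is a sum of independent $N(0,\Sigma)$ vectors (since $s_i^2 = 1$), and therefore has distribution $N(0, n\Sigma)$, independent of the particular sign pattern $s$. Consequently, the expectation over $s$ is vacuous, and we may replace $\frac{1}{n}\sum_i s_i x_i$ in distribution by $\frac{1}{\sqrt{n}} g$ with $g \sim N(0,\Sigma)$. Then I would represent $g = \Sigma^{1/2} H$ for $H \sim N(0, I_d)$, giving $\langle w, g \rangle = \langle \Sigma^{1/2} w, H \rangle$.

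Putting these steps together yields
\[
\cR_n(\cH) \;=\; \frac{1}{\sqrt{n}}\, \E_{H \sim N(0, I_d)} \sup_{w \in \cK} \bigl|\langle \Sigma^{1/2} w, H \rangle\bigr| \;=\; \frac{1}{\sqrt{n}}\, \E_H \sup_{s \in \Sigma^{1/2}\cK} |\langle s, H \rangle| \;=\; \frac{W(\Sigma^{1/2}\cK)}{\sqrt{n}} \;=\; \frac{W_{\Sigma}(\cK)}{\sqrt{n}},
\]
which is the claimed identity.

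There is essentially no hard step here; the proof is a short chain of distributional identifications. The one point that deserves care is justifying the swap to the Gaussian-width form when $\cK$ is unbounded or the suprema are a priori infinite: one should note that both sides are defined by the same supremum of a centered Gaussian process indexed by $\Sigma^{1/2}\cK$, so the identity holds as an equality in $[0, \infty]$ regardless, and the finiteness of one side is equivalent to that of the other.
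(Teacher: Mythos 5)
Your proof matches the paper's: both reduce the Rademacher sum to an inner product with $\frac{1}{n}\sum_i s_i x_i$, identify its law as $N(0, \frac{1}{n}\Sigma)$ (you do so by conditioning on $s$, which is the same observation phrased slightly differently), and then rewrite via $\Sigma^{1/2} H$ to land on the Gaussian width. The closing remark about the identity holding in $[0,\infty]$ is a harmless extra observation not present in the paper.
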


\begin{proof}
Observe that for $x_1, ..., x_n \sim \mathcal{N}(0,\Sigma)$ independent of $s \sim \text{Unif} (\{\pm 1\}^n)$, we have
$
\frac{1}{n} \sum_{i=1}^n s_i x_i \sim \mathcal{N} \left(0, \frac{1}{n}\Sigma \right)
$.
The rest just follows from definitions:
\begin{equation*}
    \begin{split}
        \cR_n(\cH) 
        &= \E_{\substack{x_1, ..., x_n \sim \mathcal{N}(0,\Sigma) \\ s \sim \text{Unif} (\{\pm 1\}^n) }} \left[ \, \sup_{w \in \cK} \, \left| \frac{1}{n} \sum_{i=1}^n s_i \langle w, x_i \rangle \right| \, \right] \\
        &= \E_{\substack{x_1, ..., x_n \sim \mathcal{N}(0,\Sigma) \\ s \sim \text{Unif} (\{\pm 1\}^n) }} \left[ \, \sup_{w \in \cK} \, \left| \big\langle w, \frac{1}{n} \sum_{i=1}^n s_i x_i \big\rangle \right| \, \right] = \E_{H \sim \mathcal{N}(0, I_d)} \left[ \, \sup_{w \in \cK} \, \left| \big\langle w, \tfrac{1}{\sqrt{n}} \Sigma^{\frac12} H \big\rangle \right| \, \right] \\
        &= n^{-1/2} W_{\Sigma}(\cK). \qedhere
    \end{split}
\end{equation*}
\end{proof}

Consequently, to prove \eqref{eqn:new-optimistic-rate}, we can replace Rademacher complexity with Gaussian width, and we can see that the definition of $F$ in \cref{thm:optimistic} is very related to Gaussian width. To get tighter upper bounds, we recall the definition of covariance splitting \citep{uc-interpolators}, which is also used by \citet{bartlett2020benign}:

\begin{restatable}[Covariance splitting]{defn}{covsplit}
Given a positive semidefinite matrix $\Sigma \in \R^{d \times d}$,
we write $\Sigma = \Sigma_1 \oplus \Sigma_2$ if
$\Sigma = \Sigma_1 + \Sigma_2$,
each matrix is positive semidefinite,
and their spans are orthogonal.
\end{restatable}

To satisfy the definition of $F$ in condition \eqref{eqn:F-definition}, we can write $x = \Sigma^{1/2} H$, where $H \sim N(0,I_d)$. For any splitting $\Sigma = \Sigma_1 \oplus \Sigma_2$, let $H_1$ be the orthogonal projection of $H$ onto the span of $\Sigma_1$, and $H_2$ that onto the span of $\Sigma_2$.

\begin{example}[Gaussian width and \cref{thm:optimistic}]
If we are only interested in predictors from a fixed hypothesis class $\cK$, then by orthogonality, it holds that for all $w \in \cK$,
\begin{equation*}
    \begin{split}
        \langle w^* - w, \, x \rangle 
        &= \langle w^* - w, \, \Sigma^{1/2}_1 H \rangle + \langle w^* - w, \, \Sigma^{1/2}_2 H \rangle \\
        &= \langle w^* - w, \, \Sigma^{1/2}_1 H_1 \rangle + \langle w^* - w, \, \Sigma^{1/2}_2 H_2 \rangle \\
        &\leq \| \Sigma^{1/2} (w-w^*) \|_2 \cdot \| H_1 \|_2  + |\langle \Sigma_2^{1/2} w^*, H_2 \rangle| + \sup_{w \in \Sigma_2^{1/2}\cK} | \langle w, H_2 \rangle|.
    \end{split}
\end{equation*}
Hence, by standard concentration results and the fact that $\| \Sigma^{1/2} (w-w^*) \|_2 = \sqrt{L(w)-\sigma^2}$, we can choose 
\begin{equation*}
    F(w) = \left( \sqrt{\rank \Sigma_1} + 2\sqrt{\log(16/\delta')} \right) \sqrt{L(w) - \sigma^2} + W_{\Sigma_2}(\cK) + \left( \rad(\Sigma_2^{1/2} \cK) + \norm{w^*}_{\Sigma_2} \right) \sqrt{2 \log(16/\delta')}\\
\end{equation*}
for $w \in \cK$, and let $F(w) = \infty$ for $w \notin \cK$. 
\end{example}

Plugging into \cref{thm:optimistic} and rearranging the $\sqrt{L(w) - \sigma^2}$ term, we obtain the following:

\begin{restatable}{theorem}{CovSplitGen} \label{thm:covariance-splitting}
Under the model assumptions in \eqref{eqn:model}, let $\cK$ be an arbitrary compact set, and
take any covariance splitting $\Sigma = \Sigma_1 \oplus \Sigma_2$.
Fixing $\delta \le 1/4$, let $\beta_2 = 32 \left(\sqrt{\frac{\log(1/\delta)}{n}} + \sqrt{\frac{\rank(\Sigma_1) }{n}} \right)$.
If $n$ is large enough that $\beta_2 \leq 1$, then the following holds with probability at least $1 - \delta$ for all $w \in \cK$:
\begin{equation} \label{eqn:cov-split-weak}
    L(w) \leq (1+\beta_2) \left( \sqrt{\hat L(w)} + \frac{W_{\Sigma_2}(\cK)}{\sqrt{n}}+ \left[\|w^*\|_{\Sigma_2} + \rad(\Sigma_2^{1/2} \cK) \right] \sqrt{\frac{2\log(32/\delta)}{n}} \right)^2.
\end{equation}
Moreover, a stronger version of the above is also true: it holds that uniformly over all dilation factors $\alpha \ge 0$ and $w \in \alpha \cK$, we have
\begin{equation} \label{eqn:cov-split-strong}
    L(w) \leq (1+\beta_2) \left( \sqrt{\hat L(w)} + \frac{\alpha W_{\Sigma_2}( \cK)}{\sqrt{n}}+ \left[\|w^*\|_{\Sigma_2} + \alpha \rad(\Sigma_2^{1/2} \cK) \right] \sqrt{\frac{2\log(32/\delta)}{n}} \right)^2.
\end{equation}
\end{restatable}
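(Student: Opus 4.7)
The plan is to derive both \eqref{eqn:cov-split-weak} and \eqref{eqn:cov-split-strong} simultaneously by invoking \cref{thm:optimistic} with a dilation-aware version of the function $F$ displayed in the example preceding the statement. The three sub-tasks are: (i) verify that the concentration hypothesis \eqref{eqn:F-definition} holds with probability at least $1 - \delta'$ for such an $F$; (ii) rearrange the resulting inequality to isolate $L(w)$, since $F$ itself contains $\sqrt{L(w) - \sigma^2}$; and (iii) handle uniformity over dilation factors $\alpha \geq 0$.

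For (i), I would write $x = \Sigma^{1/2} H$ with $H \sim N(0, I_d)$ and, using the orthogonal splitting $\Sigma = \Sigma_1 \oplus \Sigma_2$, decompose $H = H_1 + H_2$ into independent standard Gaussians on the two ranges. Applying Cauchy--Schwarz on the $\Sigma_1$ component, and splitting $w = (w - w^*) + w^*$ on the $\Sigma_2$ component before taking a sup over $w \in \alpha \cK$, yields
\[
\langle w - w^*, \, x \rangle \leq \| H_1 \|_2 \sqrt{L(w) - \sigma^2} + |\langle \Sigma_2^{1/2} w^*, H_2 \rangle| + \alpha \sup_{v \in \Sigma_2^{1/2} \cK} |\langle v, H_2 \rangle|.
\]
Three standard Gaussian concentration bounds then give $\| H_1 \|_2 \leq \sqrt{\rank \Sigma_1} + O(\sqrt{\log(1/\delta')})$ by chi-square tails, $|\langle \Sigma_2^{1/2} w^*, H_2 \rangle| \leq \|w^*\|_{\Sigma_2} \sqrt{2 \log(1/\delta')}$ by the one-dimensional Gaussian tail, and $\sup_{v \in \Sigma_2^{1/2} \cK} |\langle v, H_2 \rangle| \leq W_{\Sigma_2}(\cK) + \rad(\Sigma_2^{1/2} \cK) \sqrt{2 \log(1/\delta')}$ by the Borell--TIS inequality (whose expectation is exactly $W_{\Sigma_2}(\cK)$). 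Union-bounding these three events and setting $F(w) = \infty$ for $w$ outside the cone $\bigcup_{\alpha \geq 0} \alpha \cK$ furnishes the required function $F$.

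Plugging this $F$ into \cref{thm:optimistic} gives a bound of the form
\[
L(w) \leq (1 + \beta_1)\left( A + \frac{c_1}{\sqrt{n}} \sqrt{L(w) - \sigma^2} \right)^2,
\]
where $A$ collects $\sqrt{\hat L(w)}$, the $\alpha$-scaled Gaussian-width term, and the $\|w^*\|_{\Sigma_2}$ term, and $c_1 = \sqrt{\rank \Sigma_1} + O(\sqrt{\log(1/\delta')})$. For (ii), bound $\sqrt{L(w) - \sigma^2} \leq \sqrt{L(w)}$, set $u = \sqrt{L(w)}$, and solve the resulting linear inequality $u \leq \sqrt{1 + \beta_1}\,(A + c_1 u/\sqrt{n})$; provided $c_1 \sqrt{(1+\beta_1)/n} < 1$ (guaranteed by the hypothesis $\beta_2 \leq 1$), this yields $L(w) \leq (1+\beta_1)(1 - c_1 \sqrt{(1+\beta_1)/n})^{-2} A^2$. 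Absorbing the prefactor into $(1 + \beta_2)$ using $1/(1-t)^2 \leq 1 + O(t)$ for $t$ bounded away from $1$, and then combining with $\beta_1$, produces the stated $\beta_2$ after choosing $\delta'$ of order $\delta$ and union-bounding with the failure event of \cref{thm:optimistic}.

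For (iii), observe that the only $\alpha$-dependent random quantity in step (i) is $\sup_{v \in \Sigma_2^{1/2} \cK} |\langle v, H_2 \rangle|$, which scales exactly linearly in $\alpha$. Consequently, a single Borell--TIS event on the base set $\cK$ suffices simultaneously for all $\alpha \geq 0$, so $F$ can be defined with $\alpha(w) = \inf\{\alpha \geq 0 : w \in \alpha \cK\}$, and the algebraic rearrangement in (ii) directly gives \eqref{eqn:cov-split-strong}; restricting to $w \in \cK$ (i.e., $\alpha = 1$) recovers \eqref{eqn:cov-split-weak}. I expect the main obstacle to be the constant bookkeeping in step (ii): tracking all the $O(\sqrt{\log(\cdot)/n})$ and $O(\sqrt{\rank \Sigma_1 / n})$ contributions through the quadratic rearrangement to hit the precise $\beta_2 = 32 \bigl( \sqrt{\log(1/\delta)/n} + \sqrt{\rank(\Sigma_1)/n} \bigr)$ and the $\log(32/\delta)$ factor. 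Everything else follows from the structure already built into \cref{thm:optimistic} and the example.
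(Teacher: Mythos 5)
Your proposal reproduces the paper's own proof: the same $H = H_1 + H_2$ decomposition with Cauchy--Schwarz on the $\Sigma_1$ block, the same three concentration events (chi-square tail for $\|H_1\|$, Borell--TIS for the width term, one-dimensional Gaussian tail for the $w^*$ term), the same substitution into \cref{thm:optimistic} followed by linear rearrangement in $\sqrt{L(w)}$, and the same observation that the width term scales linearly in $\alpha$ so a single concentration event covers all dilations via $\alpha(w) := \inf\{\alpha \ge 0 : w \in \alpha\cK\}$. No gaps; this matches the paper essentially step for step.
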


The full proof can be found in \cref{sec:proof-optimistic}.
As discussed by \citet{uc-interpolators}, we can usually find a split such that the $\sqrt{\log(32/\delta)/n}$ term is negligible compared to the Gaussian width term, and so ignoring lower-order terms, our \cref{eqn:cov-split-weak} basically shows that 
\[
L(w) \leq \left( \sqrt{\hat L(w)} + \frac{W_{\Sigma_2}(\cK)}{\sqrt{n}} \right)^2,
\]
which, in light of \cref{prop:rad-gw-equivalent}, is the same as \eqref{eqn:new-optimistic-rate}. In addition, our stronger bound \eqref{eqn:cov-split-strong} shows that for any predictor $w$ outside $\cK$, we can always dilate $\cK$ by $\alpha$ and the Gaussian width term inside the corresponding upper bound will also be scaled by $\alpha$. Since our guarantee is uniform over $\alpha$, we are able to adapt our upper bounds to predictors with different norms and training errors at the same time. This will be useful for our applications in \cref{sec:optimistic-rate-flatness}, where we prove uniform generalization guarantees for all predictors along the regularization path.

\subsection{Special Case: Uniform Convergence of Interpolators}
\label{sec:optimistic-rate-uc-interpolators}

If we only look at interpolators in the set $\cK$, we immediately recover the uniform convergence of interpolators guarantee from \eqref{eqn:cov-split-weak}:

\begin{corollary}[Theorem 1 of \cite{uc-interpolators}] \label{cor:old-uc-interpolators}
Under the assumptions of \cref{thm:covariance-splitting}, we have with probability at least $1 - \delta$ that
\begin{equation}
    \sup_{w \in \cK, \hat{L}(w) = 0} L(w) \le \frac{1+\beta_2}{n} \left[ W_{\Sigma_2}(\cK) + \left[\|w^*\|_{\Sigma_2} + \rad(\Sigma_2^{1/2} \cK) \right]\sqrt{2\log\left( \frac{32}{\delta}\right)} \right]^2.
\end{equation}
\end{corollary}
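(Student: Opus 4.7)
The plan is to derive this corollary as an immediate specialization of \cref{thm:covariance-splitting}. First I would apply the weak version \eqref{eqn:cov-split-weak} of that theorem to the set $\cK$ and the given covariance splitting $\Sigma = \Sigma_1 \oplus \Sigma_2$. On the high-probability event of measure at least $1-\delta$ guaranteed by \cref{thm:covariance-splitting}, the inequality
\[
L(w) \leq (1+\beta_2) \left( \sqrt{\hat L(w)} + \frac{W_{\Sigma_2}(\cK)}{\sqrt{n}}+ \left[\|w^*\|_{\Sigma_2} + \rad(\Sigma_2^{1/2} \cK) \right] \sqrt{\frac{2\log(32/\delta)}{n}} \right)^2
\]
holds uniformly over $w \in \cK$.

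Second, I would restrict to the subset of $\cK$ consisting of exact interpolators, i.e., those $w$ with $\hat{L}(w) = 0$. On this subset the $\sqrt{\hat{L}(w)}$ term vanishes. Pulling the common factor of $1/\sqrt{n}$ outside the square and taking the supremum over all such interpolators preserves the inequality, since the right-hand side no longer depends on $w$. Rearranging gives exactly
\[
\sup_{w \in \cK,\, \hat{L}(w) = 0} L(w) \le \frac{1+\beta_2}{n} \left[ W_{\Sigma_2}(\cK) + \left[\|w^*\|_{\Sigma_2} + \rad(\Sigma_2^{1/2} \cK) \right]\sqrt{2\log(32/\delta)} \right]^2,
\]
which is the claim. (If the interpolator set is empty, the left-hand side is $-\infty$ by the notational convention stated in the problem setting, so the bound is vacuous.)

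There is no serious obstacle here: the corollary is really just a corollary, obtained by plugging in $\hat{L}(w) = 0$ into the already-established uniform bound. Any subtlety is absorbed into \cref{thm:covariance-splitting} itself (the Gordon-inequality argument sketched above), and the only ingredient used here is that uniform convergence over $\cK$ implies uniform convergence over the zero-training-error slice of $\cK$ on the same event.
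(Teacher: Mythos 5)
Your proof is correct and follows exactly the route the paper intends: specialize \eqref{eqn:cov-split-weak} from \cref{thm:covariance-splitting} to $\hat{L}(w) = 0$, factor out $1/\sqrt{n}$, and take the supremum over interpolators in $\cK$. The paper itself treats this as immediate and gives no separate proof, so there is nothing further to compare.
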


It was shown that the above result can be used to tightly characterize the population risk of interpolating predictors. In particular, when the set $\cK = \{ w \in \R^d: \| w\| \leq B \}$ is a norm ball for some arbitrary choice of norm $\| \cdot \|$ and $B >0$, then the Gaussian width is 
\[
W_{\Sigma}(\cK) = B \cdot \E \| x\|_*
\]
where $\| \cdot \|_*$ is the dual norm and $x \sim \mathcal{N}(0,\Sigma)$. For example, if we consider the minimal-norm interpolator $\hat{w} = \argmin_{w: \hat{L}(w) = 0} \| w\|$ and choose $B$ to be a high probability upper bound of $\| \hat{w} \|$, then we approximately have
\begin{equation}
    L(\hat{w}) \leq \left(1 + o(1) \right) \cdot \frac{B^2 \left( \E \| x\|_* \right)^2}{n}.
\end{equation}
Combined with a norm analysis, \citet{uc-interpolators} show that \cref{cor:old-uc-interpolators} can recover the nearly-matching necessary and sufficient conditions from \citet{bartlett2020benign} for the consistency of the minimal $\ell_2$ norm interpolator. In particular, they show that
\[
B^2 \approx \sigma^2 \frac{n}{\left( \E \| x\|_* \right)^2}
\]
with lower-order terms depending on the effective ranks. In the context of $\ell_2$ penalty, recall the following definition of effective ranks:

\begin{restatable}[\cite{bartlett2020benign}]{defn}{euclideffrank} \label{def:ranks-l2}
The \emph{effective ranks} of a covariance matrix $\Sigma$ are
\[
r(\Sigma) = \frac{\Tr(\Sigma)}{\norm{\Sigma}_\OP} \quad \text{and} \quad R(\Sigma) = \frac{\Tr(\Sigma)^2}{\Tr(\Sigma^2)}
.\]
\end{restatable}

The lower-order terms will vanish when the $\ell_2$ benign overfitting conditions hold: there exists a sequence of covariance splits $\Sigma = \Sigma_1 \oplus \Sigma_2$ such that 
\begin{equation} \label{eqn:benign-overfitting-ridge}
    \frac{\rank(\Sigma_1)}{n} \to 0
    ,\qquad
    \norm{w^*}_2 \sqrt{\frac{\Tr(\Sigma_2)}{n}} \to 0
    ,\qquad
    \frac{n}{R(\Sigma_2)} \to 0.
\end{equation}
In this case, we have $L(\hat{w}) \to \sigma^2$ in probability with $\hat{w} = X^T(XX^T)^{-1}Y$ when $\| \cdot \|$ is the Euclidean norm, recovering in the Gaussian case the consistency result of \citet{bartlett2020benign,tsigler2020benign}. \citet{uc-interpolators} also demonstrate that \cref{cor:old-uc-interpolators} can establish the consistency of minimal-$\ell_1$ norm interpolators in certain settings, for which there are lower bounds that suggest the convergence rate from this analysis is nearly optimal \citep{chatterji2021foolish, muthukumar:interpolation}. We refer the reader to \citet{uc-interpolators} for details. 

\subsection{General Consequence: Flatness of Loss under Benign Overfitting Conditions} \label{sec:optimistic-rate-flatness}

In this section, we illustrate another consequence of \cref{thm:covariance-splitting} in the context of benign overfitting. As just discussed, even in situations where the labels have noise, there can be low-norm predictors that exactly interpolate the data and nevertheless generalize well. We see that our bounds from \cref{thm:covariance-splitting} and its special case \cref{cor:old-uc-interpolators} are sufficient to explain this phenomenon. In fact, they can tell us something more: the curve of the population loss along the regularization path will become flat in these settings, as long as the regularization parameter is small enough for us to obtain a predictor with norm larger than $\|w^*\|$. In other words, once we fit all of the signals, it does not matter how much noise is fitted, and all low norm near-interpolators can achieve consistency at the same time. 

In particular, if we take $\cK = \{w: \| w\|\leq 1\}$, then it is clear that for any $w \in \R^d$, we have $w \in \| w\| \cdot \cK$. To apply \eqref{eqn:cov-split-strong} of \cref{thm:covariance-splitting}, we define
\begin{equation} \label{eqn:C-definition}
    C_{\Sigma}(\| w\|) := \frac{\|w\| W_{\Sigma}(\cK)}{\sqrt{n}} +\left[\|w^*\|_{\Sigma} + \|w\| \rad(\Sigma^{1/2} \cK) \right] \sqrt{\frac{2\log(32/\delta)}{n}}.
\end{equation}

By virtue of \eqref{eqn:cov-split-strong}, if $w' \in \R^d$ (e.g., the minimal-norm interpolator) satisfies
\[
\hat{L}(w') = 0 \quad \text{ and } \quad C_{\Sigma_2} (\| w' \|) = \sigma + o(1),
\]
then $w'$ is a benign interpolator: $L(w') = \sigma^2 + o(1)$. Moreover, when the above holds, we can also establish consistency for any constrained empirical risk minimizer $\hat{w}_R$ of the form:
\begin{equation} \label{eqn:constrained-erm}
    \hat{w}_R := \argmin_{\|w\| \le R} \hat{L}(w)
\end{equation}
as long as $R$ is larger than $\| w^* \|$, and with the convention that if there are multiple minimizers then the minimum-norm minimizer is chosen.

\begin{restatable}{theorem}{Flatness} \label{thm:flatness-norm} 
Under the model assumptions in \eqref{eqn:model},
let $\|\cdot\|$ be an arbitrary norm on $\mathbb{R}^d$ and consider the complexity functional $C_{\Sigma}$ and the constrained ERM $\hat{w}_R$ given by \eqref{eqn:C-definition} and \eqref{eqn:constrained-erm}. Suppose there is a split $\Sigma = \Sigma_1 \oplus \Sigma_2$ and $\epsilon > 0$ such that with probability at least $1 - \delta$, it holds that 
\begin{equation} \label{eqn:flatness-w*-condition}
    \sqrt{\hat{L}(w^*)} \le (1 + \epsilon) \sigma 
    \quad \text{ and } \quad C_{\Sigma_2}(\| w^* \|) \leq \epsilon
\end{equation}
and there exists $w' \in \R^d$ such that
\begin{equation} \label{eqn:flatness-w'-condition}
    \hat{L}(w') = 0 \quad \text{ and } \quad C_{\Sigma_2}(\| w'\|) \le (1 + \epsilon) \sigma + \epsilon.
\end{equation}
Then, with probability at least $1-2\delta$, it holds uniformly over any $R \geq \|w^*\|$ that
\begin{equation} 
    L(\hat{w}_R) \le \left( \sigma+ 5 (\epsilon + \beta_2) (\sigma \vee 1) \right)^2.
\end{equation}
for the same choice of $\beta_2$ as in \cref{thm:covariance-splitting}.
\end{restatable}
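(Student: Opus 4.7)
The strategy is to combine the uniform optimistic rate of \cref{thm:covariance-splitting} with an explicit convex-combination argument that interpolates between $w^*$, which controls the training error, and $w'$, which controls the complexity. To begin, I apply the strong bound \eqref{eqn:cov-split-strong} with $\cK = \{w : \|w\| \le 1\}$ and dilation factor $\alpha = \|w\|$; this produces, with probability at least $1 - \delta$, the uniform inequality
\begin{equation}
L(w) \le (1+\beta_2)\bigl(\sqrt{\hat L(w)} + C_{\Sigma_2}(\|w\|)\bigr)^2 \qquad \text{for all } w \in \R^d.
\end{equation}
A union bound with the assumed event \eqref{eqn:flatness-w*-condition}--\eqref{eqn:flatness-w'-condition} yields probability at least $1 - 2\delta$ for the remainder of the argument.

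Next, for a fixed $R \ge \|w^*\|$ I construct a specific feasible point: set $\lambda := \min\bigl(1, (R - \|w^*\|)/(\|w'\| - \|w^*\|)\bigr) \in [0,1]$, with $\lambda := 1$ in the degenerate case $\|w'\| = \|w^*\|$, and let $\tilde w_\lambda := (1-\lambda) w^* + \lambda w'$. The triangle inequality and the choice of $\lambda$ give $\|\tilde w_\lambda\| \le R$, so $\tilde w_\lambda$ is feasible for the constrained problem defining $\hat w_R$. The crucial observation is that since $\hat L(w') = 0$ forces $Y = Xw'$,
\begin{equation}
Y - X \tilde w_\lambda = (1-\lambda)(Y - X w^*) + \lambda (Y - X w') = (1-\lambda) \xi,
\end{equation}
hence the \emph{exact} identity $\hat L(\tilde w_\lambda) = (1-\lambda)^2 \hat L(w^*)$, which is substantially stronger than the generic convexity bound $(1-\lambda)\hat L(w^*) + \lambda \hat L(w')$. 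Because $\hat w_R$ minimizes $\hat L$ on the constraint ball, this gives $\sqrt{\hat L(\hat w_R)} \le (1-\lambda)\sqrt{\hat L(w^*)} \le (1-\lambda)(1+\epsilon)\sigma$.

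To control the norm, I distinguish two cases. If $R \le \|w'\|$, then $\|\hat w_R\| \le R = (1-\lambda)\|w^*\| + \lambda\|w'\|$. If $R > \|w'\|$, then $w'$ achieves $\hat L = 0$ within the ball, so by the min-norm tie-breaking convention $\|\hat w_R\| \le \|w'\|$; in this regime $\lambda = 1$, so the same affine expression $(1-\lambda)\|w^*\| + \lambda\|w'\|$ upper-bounds $\|\hat w_R\|$. Because $C_{\Sigma_2}$ is affine in its scalar argument with non-negative slope,
\begin{equation}
C_{\Sigma_2}(\|\hat w_R\|) \le (1-\lambda)C_{\Sigma_2}(\|w^*\|) + \lambda C_{\Sigma_2}(\|w'\|) \le (1-\lambda)\epsilon + \lambda\bigl((1+\epsilon)\sigma + \epsilon\bigr).
\end{equation}
Plugging both bounds into the optimistic rate, the $(1-\lambda)$ and $\lambda$ terms combine exactly to $(1+\epsilon)\sigma + \epsilon$, giving $L(\hat w_R) \le (1+\beta_2)\bigl((1+\epsilon)\sigma + \epsilon\bigr)^2$; elementary estimates using $\sqrt{1+\beta_2} \le 1 + \beta_2/2$, $\beta_2 \le 1$, and $\sigma + 1 \le 2(\sigma \vee 1)$ then yield the stated bound.

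The main obstacle is identifying the right convex combination and exploiting the quadratic $(1-\lambda)^2$ structure of the exact identity. A naive convexity bound would give only a $\sqrt{1-\lambda}$ scaling on the training-error term, which combines with the $\lambda$ term in $C_{\Sigma_2}$ to produce a leading constant of up to $5/4$ in front of $\sigma$ rather than the tight $1$ claimed by the theorem. The interpolation property $Y = Xw'$ is precisely what upgrades the training-error factor from $(1-\lambda)$ to $(1-\lambda)^2$, and this exact matching with the affine interpolation of $C_{\Sigma_2}$ is what enables the flatness phenomenon.
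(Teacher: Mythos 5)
Your proof is correct and is essentially the same argument as the paper's. The paper factors the deterministic core into Lemma~\ref{lem:flatness} for a general convex~$F$, and then in the proof of Theorem~\ref{thm:flatness-norm} specializes to $F(w) = \sqrt{n}\,C_{\Sigma_2}(\|w\|)$; you carry out the same convex-combination argument directly with $C_{\Sigma_2}$, using its affinity in the scalar argument in place of abstract convexity. The three load-bearing observations are identical in both: (i) the feasible point $\tilde w_\lambda = (1-\lambda)w^* + \lambda w'$ with $\lambda$ matched to $R$, (ii) the exact identity $Y - X\tilde w_\lambda = (1-\lambda)\xi$ (using $Y = Xw'$), so $\hat L(\tilde w_\lambda) = (1-\lambda)^2\hat L(w^*)$ rather than the lossy convexity bound, and (iii) the affine interpolation of the complexity term, so the $(1-\lambda)$ and $\lambda$ contributions telescope exactly to $(1+\epsilon)\sigma + \epsilon$. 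One trivial bookkeeping wrinkle: your definition $\lambda := \min\bigl(1, (R-\|w^*\|)/(\|w'\|-\|w^*\|)\bigr)$ only lands in $[0,1]$ when $\|w'\| > \|w^*\|$; if $\|w'\| \le \|w^*\|$ you should simply set $\lambda := 1$ (then $w'$ is feasible, $\hat L(\hat w_R)=0$, and the min-norm convention gives $\|\hat w_R\| \le \|w'\|$, so the rest goes through unchanged). The paper sidesteps this by assuming WLOG that $w'$ is the minimal-norm interpolator and treating $R>\|w'\|$ as a separate trivial case.
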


The full proof, in \cref{sec:proof-optimistic}, follows based on a simple argument (\cref{lem:flatness}) which can be applied even more generally. The condition \eqref{eqn:flatness-w*-condition} can easily be satisfied using standard concentration results, whereas \eqref{eqn:flatness-w'-condition} requires some benign overfitting conditions. When there exists a benign interpolator, we can expect $\epsilon \to 0$ for a sufficiently large sample size, and so $L(\hat{w}_R)$ will converge to $\sigma^2$ uniformly. In the context of ridge regression ($\ell_2$ penalty), we want the condition \eqref{eqn:benign-overfitting-ridge} to hold. 
\begin{restatable}{corollary}{FlatnessRidge} \label{corr:ridge-flat}
Let $\sigma > 0$ be fixed. 
Under the assumptions of \cref{thm:flatness-norm} with $\| \cdot \|$ as the Euclidean norm, suppose that $\Sigma = \Sigma(n)$ is a sequence of covariance matrices with splits $\Sigma = \Sigma_1 \oplus \Sigma_2$ satisfying the benign overfitting conditions \eqref{eqn:benign-overfitting-ridge}. Then it holds that
\begin{equation}
    \sup_{R \geq \| w^* \|_2} L(\hat{w}_{R}) \to \sigma^2 \quad \text{in probability.}
\end{equation}
\end{restatable}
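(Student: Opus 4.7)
The plan is to apply \cref{thm:flatness-norm} with a sequence $\epsilon = \epsilon_n \to 0$, so that the conclusion $L(\hat{w}_R) \le (\sigma + 5(\epsilon+\beta_2)(\sigma\vee 1))^2$ holds uniformly in $R \ge \|w^*\|_2$, and then combine with the trivial lower bound $L(\hat{w}_R) \ge \sigma^2$. Because $\|\cdot\|$ is the Euclidean norm, $\cK$ is the unit ball, so the dual norm is again $\ell_2$, giving the closed-form bounds
\[
W_{\Sigma_2}(\cK) \;=\; \E_{H \sim N(0,I_d)} \|\Sigma_2^{1/2} H\|_2 \;\le\; \sqrt{\Tr(\Sigma_2)},
\qquad
\rad(\Sigma_2^{1/2}\cK) \;=\; \sqrt{\|\Sigma_2\|_{\OP}},
\]
and $\|w^*\|_{\Sigma_2} \le \|w^*\|_2 \sqrt{\|\Sigma_2\|_{\OP}}$. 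Under the benign overfitting conditions \eqref{eqn:benign-overfitting-ridge}, each of $\Tr(\Sigma_2)/n$, $\|w^*\|_2^2\|\Sigma_2\|_{\OP}/n$, $\|\Sigma_2\|_{\OP}/n$ is $o(1)$ (the last because $\|\Sigma_2\|_{\OP} \le \Tr(\Sigma_2)$ and $\Tr(\Sigma_2)/n \to 0$ is implied by $\Tr(\Sigma_2)/n \le \|w^*\|_2 \sqrt{\Tr(\Sigma_2)/n}$ after noting $\|w^*\|_2$ may be taken bounded, or more directly via $\|\Sigma_2\|_{\OP} \le \Tr(\Sigma_2)/R(\Sigma_2) \cdot R(\Sigma_2)^{1/2}$-type estimates; in any case $\|\Sigma_2\|_{\OP} \to 0$ follows from the hypotheses). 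Thus $\beta_2 \to 0$ in probability as well, since $\rank(\Sigma_1)/n \to 0$.

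For condition \eqref{eqn:flatness-w*-condition}, note $\hat{L}(w^*) = \|\xi\|_2^2/n$, which concentrates around $\sigma^2$ by standard $\chi^2$ concentration, so $\sqrt{\hat{L}(w^*)} \le (1+\epsilon_n)\sigma$ with high probability for some $\epsilon_n \to 0$. Substituting the estimates above,
\[
C_{\Sigma_2}(\|w^*\|_2) \;\le\; \|w^*\|_2 \sqrt{\tfrac{\Tr(\Sigma_2)}{n}} + \Big(\|w^*\|_2\sqrt{\|\Sigma_2\|_{\OP}} + \|w^*\|_2\sqrt{\|\Sigma_2\|_{\OP}}\Big)\sqrt{\tfrac{2\log(32/\delta)}{n}},
\]
which is $o(1)$ by \eqref{eqn:benign-overfitting-ridge}.

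For condition \eqref{eqn:flatness-w'-condition}, take $w' = X^T(XX^T)^{-1}Y$, the minimum $\ell_2$-norm interpolator; then $\hat{L}(w') = 0$ by construction. The remaining task is to control $\|w'\|_2$ so that $C_{\Sigma_2}(\|w'\|_2) \le (1+\epsilon_n)\sigma + \epsilon_n$. This is exactly the norm analysis invoked at the end of \cref{sec:optimistic-rate-uc-interpolators}: under \eqref{eqn:benign-overfitting-ridge} one has, with high probability, $\|w'\|_2^2 \le (1 + o(1)) \sigma^2 n / \Tr(\Sigma_2)$, so that the dominant term $\|w'\|_2 W_{\Sigma_2}(\cK)/\sqrt{n} \le \|w'\|_2 \sqrt{\Tr(\Sigma_2)/n}$ is at most $(1+o(1))\sigma$, while the $\sqrt{\log(1/\delta)/n}$ correction is negligible by the same computation as for $w^*$, using additionally that $\|w'\|_2\sqrt{\|\Sigma_2\|_{\OP}/n} \le \sigma \sqrt{\|\Sigma_2\|_{\OP}/\Tr(\Sigma_2)} \to 0$ because $\|\Sigma_2\|_{\OP}/\Tr(\Sigma_2) = 1/r(\Sigma_2) \le 1/\sqrt{R(\Sigma_2)} \to 0$ under \eqref{eqn:benign-overfitting-ridge}.

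Putting these pieces together, \cref{thm:flatness-norm} yields $\sup_{R \ge \|w^*\|_2} L(\hat{w}_R) \le (\sigma + 5(\epsilon_n+\beta_2)(\sigma\vee 1))^2 \to \sigma^2$ in probability, while $L(\hat{w}_R) \ge \sigma^2$ deterministically, completing the proof. The main obstacle is the norm bound on the minimum-norm interpolator $w'$: verifying $C_{\Sigma_2}(\|w'\|_2) \le (1+o(1))\sigma$ requires the sharp control of $\|w'\|_2$ from the benign overfitting literature, but this is precisely the analysis already carried out in \citet{uc-interpolators} and \citet{bartlett2020benign}, so it can be quoted rather than redone.
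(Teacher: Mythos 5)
Your proposal matches the paper's proof in both structure and essential ingredients: verify the two hypotheses of \cref{thm:flatness-norm} via (i) $\chi^2$-concentration of $\hat{L}(w^*)=\|\xi\|_2^2/n$ around $\sigma^2$ and (ii) the norm bound $\|w'\|_2^2 \le (1+o(1))\sigma^2 n/\Tr(\Sigma_2)$ for the minimum-$\ell_2$-norm interpolator, quoted from \citet{uc-interpolators} (Theorems 2 and 3), then choose $\epsilon=\epsilon_n\to 0$ as the max of the resulting vanishing quantities. The paper's proof is phrased more compactly but uses exactly this route.

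One small caveat on your exposition: the intermediate assertions that $\Tr(\Sigma_2)/n \to 0$ and $\|\Sigma_2\|_{\OP}\to 0$ follow from \eqref{eqn:benign-overfitting-ridge} are not actually implied in general (the second condition only controls $\|w^*\|_2\sqrt{\Tr(\Sigma_2)/n}$, so if $\|w^*\|_2\to 0$ neither needs to vanish). Fortunately this does not matter, because the bounds you actually use always pair these quantities with the right weights: $C_{\Sigma_2}(\|w^*\|_2)$ is controlled via $\|w^*\|_2\sqrt{\Tr(\Sigma_2)/n}$ and $\|w^*\|_2\sqrt{\|\Sigma_2\|_{\OP}/n}\le\|w^*\|_2\sqrt{\Tr(\Sigma_2)/n}$, and for $w'$ you correctly use $\|\Sigma_2\|_{\OP}/\Tr(\Sigma_2)=1/r(\Sigma_2)\le 1/\sqrt{R(\Sigma_2)}\to 0$. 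So the proof is sound; only the parenthetical justification for "$\|\Sigma_2\|_{\OP}\to 0$" should be dropped as unnecessary and (in edge cases) incorrect.
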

In other words, we get a uniform convergence result along this entire component of the regularization path. 
It is straightforward to make this into a finite-sample bound by using the non-asymptotic bounds on the norm of the minimum-norm interpolator from \citet{uc-interpolators}, as well as to generalize the result to other norms under the appropriate benign overfitting conditions from that work. We omit the details here.   

\begin{figure}
    \centering
    \includegraphics[scale=0.6]{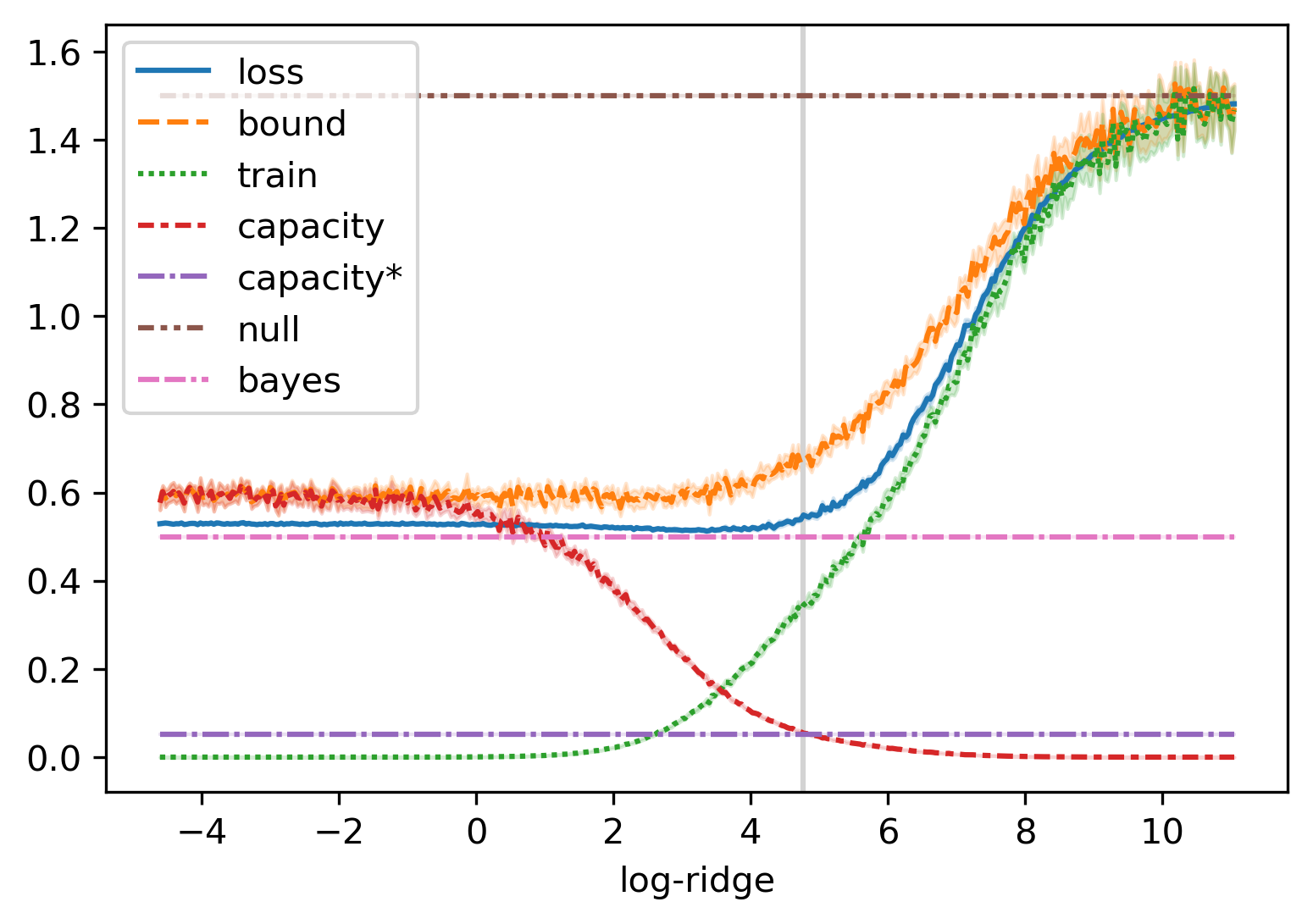}
    \caption{Loss along regularization path for ridge regression under benign overfitting conditions. Curve and error bars are computed from 10 trials with covariance matrix $\Sigma = \begin{bmatrix} 1 & 0 \\ 0 & \alpha^2 I_d \end{bmatrix}$, $\sigma^2 = 0.5$, $\alpha = 0.05$, and ground truth $w^* = (1,0,\ldots,0)$ from $n = 600$ samples with aspect ratio $d/n = 20$; the $x$-axis corresponds to the log of the ridge parameter. The curve ``bound'' corresponds to the generalization guarantee of \cref{thm:covariance-splitting}; it is close to the population loss (``loss'') along the whole regularization path. ``Null'' and ``bayes'' are $L(0)$ and $L(w^*)$. ``Capacity'' corresponds to the term $W(\mathcal{K})^2/n \approx \|w\|^2\Tr(\Sigma)/n$ for the ridge output $w$, and ``capacity\**'' is the same term with $\|w\|$ replaced by $\|w^*\|$. As predicted by \cref{thm:flatness-norm}, the population loss of the ridge regression is roughly flat once $\|w\| > \|w^*\|$ (threshold indicated by grey vertical line), and this is matched by the generalization bound, even though it is determined by the training error $\hat{L}(w)$ (curve ``train'') and capacity/norm $\|w\|$ which vary significantly.}
    \label{fig:my_label}
\end{figure}

\section{Applications}
\label{sec:applications}

In this section, we show how to apply our generalization bound to a variety of settings by choosing the appropriate complexity functional $F$ in \cref{thm:optimistic}, and by doing so we recover versions of classical results from compressed sensing, high-dimensional statistics, and statistical learning theory. Some aspects of our results are new: in particular, applying our theory always recovers finite-sample bounds and generally gives guarantees which apply to \emph{all predictors} in a class, not just the particular empirical risk minimizer. As further explained by \citet{junk-feats,uc-interpolators}, this is a crucial advantage of uniform-convergence based generalization bounds compared to other methods of analysis. For example, analyses based on random matrix theory methods or the asymptotic framework for applying the Convex Gaussian Minmax Theorem (CGMT) developed by \citet{thrampoulidis2015regularized,thrampoulidis2018precise} usually only give guarantees for the empirical risk minimizer and may have other limitations such as applying only in certain asymptotic limits. The key innovation here is \emph{not} that we can analyze convex M-estimators using Gordon's Theorem, which has indeed been done extensively in the literature, both in regularization and interpolation settings \citep[e.g.][]{rudelson2008sparse,chandrasekaran2012convex,stojnic2013framework,amelunxen2014living,deng2019model,oymak2010new,oymak2018universality,liang2020precise,raskutti2010restricted,montanari2019generalization} --- the point is the unifying power of the optimistic rates theory developed in the previous section, showing how many different phenomena can be understood from a simple and natural generalization theory approach. %

\subsection{Consistency of Optimally-tuned Regularized Regression}
To demonstrate the applicability of our \cref{thm:covariance-splitting} outside of the interpolation setting, we show how to apply it to derive consistency of optimally-tuned regularized least squares estimators such as the LASSO and Ridge regression. In particular, we will show the ridge estimator is consistent under a low effective dimension assumption on $\Sigma$; this kind of effective dimension condition was used, for example, by \citet{zhang2002effective,mendelson2003performance,tsigler2020benign}.

Given any predictor $w$, by the same reasoning in \cref{sec:optimistic-rate-uc-interpolators}, we obtain
\begin{equation} \label{eqn:uc-near-interpolators}
    L(w) \leq \left(1 + o(1) \right) \cdot \left( \sqrt{\hat{L}(w)} +  \frac{\| w\| \cdot \E \| x\|_*}{\sqrt{n}} \right)^2.
\end{equation}
For any $\lambda >0$, consider the regularized linear regression problem
\begin{equation} \label{eqn:regularized-regression}
    \hat{w}_{\lambda} = \argmin_w \, \hat{L}(w) + \lambda \| w \|.
\end{equation}
By comparing the KKT conditions, it is easy to see that there is some choice of $\lambda^*$ such that
\begin{equation*}
    \hat{w}_{\lambda^*} = \argmin_{\hat{L}(w) \leq \| \xi\|_2^2/n } \| w \|.
\end{equation*}
Since $\hat{L}(w^*) = \| \xi\|_2^2/n \approx \sigma$, it naturally follows that $\| \hat{w}_{\lambda^*} \| \leq \| w^* \|$. Plugging in the estimates into \eqref{eqn:uc-near-interpolators}, we obtain the following:

\begin{restatable}{corollary}{OptimallyTunedGen} \label{cor:optimally-tuned-gen}
Under the assumptions of \cref{thm:covariance-splitting},
consider the regularized regression estimators $\hat{w}_{\lambda}$ as in \eqref{eqn:regularized-regression} with an arbitrary norm $\| \cdot \|$. With probability at least $1-\delta$, there exists a $\lambda^* \geq 0$ such that
\begin{equation} \label{eqn:optimally-tuned-bound}
    L(\hat{w}_{\lambda^*}) \leq (1+3\beta_2) \left( \sigma + \frac{\| w^* \|}{\sqrt{n}} \left( \E_{x \sim \mathcal{N}(0, \Sigma_2)} \| x\|_{*} + \sup_{\| u\| \leq 1} \| u\|_{\Sigma_2} \cdot \sqrt{8\log(36/\delta)} \right) \right)^2.
\end{equation}
Hence, we have $L(\hat{w}_{\lambda^*}) \to \sigma^2$ in probability if 
\begin{equation} \label{eqn:optimally-tuned-condition}
    \frac{\rank(\Sigma_1)}{n} \to 0, \quad  \frac{\| w^* \| \cdot \E_{x \sim \mathcal{N}(0, \Sigma_2)} \| x\|_*}{\sqrt{n}} \to 0, \quad \text{and} \quad \frac{\| w^* \| \cdot \sup_{\| u\| \leq 1} \| u\|_{\Sigma_2}}{\sqrt{n}} \to 0.
\end{equation}
\end{restatable}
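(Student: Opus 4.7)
The plan is to apply the stronger dilation form \eqref{eqn:cov-split-strong} of \cref{thm:covariance-splitting} to the unit ball $\cK = \{w : \|w\| \le 1\}$ and to a specific $\lambda^*$ identified via KKT. For this choice, the Gaussian width reduces to $W_{\Sigma_2}(\cK) = \E_{x \sim N(0,\Sigma_2)} \|x\|_{*}$ (the expected dual norm, standard for a norm ball) and the radius is $\rad(\Sigma_2^{1/2} \cK) = \sup_{\|u\| \le 1} \|u\|_{\Sigma_2}$. So \eqref{eqn:cov-split-strong} applied with dilation $\alpha$ says that uniformly over $\alpha \ge 0$ and $w$ with $\|w\| \le \alpha$,
\[
L(w) \le (1+\beta_2)\left(\sqrt{\hat L(w)} + \tfrac{\alpha}{\sqrt{n}} \E \|x\|_{*} + \left[\|w^*\|_{\Sigma_2} + \alpha \sup_{\|u\|\le 1} \|u\|_{\Sigma_2}\right] \sqrt{\tfrac{2\log(32/\delta)}{n}}\right)^2.
\]

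Next, the choice of $\lambda^*$. By comparing KKT conditions for \eqref{eqn:regularized-regression} with those of the constrained program $\min \|w\|$ subject to $\hat L(w) \le t$, there exists $\lambda^* \ge 0$ corresponding to $t = \|\xi\|_2^2/n$, so that
\[
\hat{w}_{\lambda^*} = \argmin_{\hat L(w) \le \|\xi\|_2^2/n} \|w\|.
\]
Since $w^*$ is feasible (as $\hat L(w^*) = \|\xi\|_2^2/n$ exactly), we conclude $\|\hat{w}_{\lambda^*}\| \le \|w^*\|$ and $\hat L(\hat{w}_{\lambda^*}) \le \|\xi\|_2^2/n$. By standard chi-square concentration, $\|\xi\|_2/\sqrt{n} \le \sigma(1+\gamma)$ with high probability, where $\gamma \lesssim \sqrt{\log(1/\delta)/n}$ is small enough to be absorbed into the multiplicative constant.

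Now we set $\alpha = \|w^*\|$ in the dilation bound applied at $w = \hat{w}_{\lambda^*}$. The training error contributes at most $\sigma(1+\gamma)$, the Gaussian-width contribution is $\|w^*\| \cdot \E \|x\|_{*}/\sqrt{n}$, and the $\|w^*\|_{\Sigma_2}$ term is bounded by $\|w^*\| \cdot \sup_{\|u\|\le 1}\|u\|_{\Sigma_2}$, which combines additively with the radius term. Upper-bounding $(1+\beta_2)(1+\gamma)^2 \le 1+3\beta_2$ (after possibly rescaling the constant defining $\beta_2$, using the assumption $\beta_2 \le 1$ and that $\gamma$ is of the same order) yields \eqref{eqn:optimally-tuned-bound}. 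Consistency then follows immediately: the three conditions in \eqref{eqn:optimally-tuned-condition} ensure respectively that $\beta_2 \to 0$, that the Gaussian width term vanishes, and that the radius/$\|w^*\|_{\Sigma_2}$ lower-order term vanishes, so the right-hand side of \eqref{eqn:optimally-tuned-bound} converges to $\sigma^2$ in probability.

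The main obstacle is bookkeeping: the cleanest argument requires the dilation form \eqref{eqn:cov-split-strong} rather than \eqref{eqn:cov-split-weak}, because $\|\hat{w}_{\lambda^*}\|$ is data-dependent and we want to insert the deterministic surrogate $\|w^*\|$ after taking the uniform bound (otherwise one would need to first derive a high-probability norm bound on $\hat{w}_{\lambda^*}$, then invoke \eqref{eqn:cov-split-weak} on the corresponding norm ball, which requires a union bound over a covering of norm scales). A secondary technical point is folding the multiplicative noise concentration $\|\xi\|_2/\sqrt{n} \le (1+\gamma)\sigma$ into the $(1+\beta_2)$ factor to arrive at the claimed $(1+3\beta_2)$, which requires choosing the union-bound budget between the probability bound for \cref{thm:covariance-splitting}, the chi-square tail for $\|\xi\|_2^2$, and the KKT argument.
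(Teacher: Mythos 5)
Your proposal is correct and follows essentially the same route as the paper: identify $\lambda^*$ via KKT so that $\hat w_{\lambda^*}$ is the minimum-norm predictor with $\hat L \le \|\xi\|_2^2/n$, use feasibility of $w^*$ to get $\|\hat w_{\lambda^*}\| \le \|w^*\|$ and $\hat L(\hat w_{\lambda^*}) \le \|\xi\|_2^2/n$, invoke the dilation form \eqref{eqn:cov-split-strong} with $\cK$ the unit ball, bound $\|w^*\|_{\Sigma_2} \le \|w^*\| \sup_{\|u\|\le1}\|u\|_{\Sigma_2}$, and fold the chi-square concentration of $\|\xi\|_2$ into the $(1+3\beta_2)$ factor. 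You also correctly flag the two non-obvious points the paper handles implicitly (why the dilation form is needed for the data-dependent $\|\hat w_{\lambda^*}\|$, and the union-bound/constant bookkeeping).
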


In the context of ridge regression, \eqref{eqn:optimally-tuned-bound} can be simplified to
\begin{equation}
    L(\hat{w}_{\lambda^*}) \leq (1+3\beta_2) \left( \sigma + \sqrt{32\log(36/\delta) \cdot \frac{\| w^* \|^2_2 \Tr(\Sigma_2)}{n}} \right)^2
\end{equation}
because both $\E_{x \sim \mathcal{N}(0, \Sigma_2)} \| x\|_2$ and $ \sup_{\| u\|_2 \leq 1} \| u\|_{\Sigma_2} = \| \Sigma_2 \|_{\OP}^{1/2}$ can be upper bounded by $\sqrt{\Tr(\Sigma_2)}$. Therefore, a sufficient condition for the consistency of optimally-tuned ridge regression is
\begin{equation}
    \frac{\rank(\Sigma_1)}{n} \to 0 \quad \text{ and } \quad \| w^*\|_2 \sqrt{\frac{\Tr(\Sigma_2)}{n}} \to 0.
\end{equation}

We see that the above is weaker than the benign overfitting condition \eqref{eqn:benign-overfitting-ridge} because we don't need the last condition $\frac{n}{R(\Sigma_2)} \to 0$. However, from \cref{sec:optimistic-rate-flatness}, having that condition means we no longer need to tune the ridge parameter $\lambda$: any sufficiently small $\lambda$ will lead to consistency.

\subsection{LASSO}

\paragraph{Slow Rate under Bounded $\ell_1$ Norm.}
In the context of LASSO regression, assume without loss of generality that the maximum diagonal entry of $\Sigma$ is $1$. Then we have
\[
\E_{x \sim \mathcal{N}(0, \Sigma_2)} \| x\|_{\infty} + \sup_{\| u\|_1 \leq 1} \| u\|_{\Sigma_2} \cdot \sqrt{8\log(36/\delta)} \lesssim \sqrt{\log(d)},
\]
and \eqref{eqn:optimally-tuned-bound} translates to the convergence rate of $\sigma \|w^*\|_1 \sqrt{\frac{\log(d)}{n}} + \|w^*\|_1^2 \cdot \frac{\log(d)}{n}$ to $\sigma^2$, which is also known as the ``slow'' rate of LASSO. Moreover, if
$w^*$ is $k$-sparse, then we can bound
\[
\| w^* \|_1 \leq k \|w^*\|_{\infty} 
\]
and so under these assumptions, the LASSO slow rate guarantee becomes $\sigma k \|w^*\|_{\infty} \sqrt{\frac{\log(d)}{n}} + k^2 \|w^*\|_{\infty}^2\cdot \frac{\log(d)}{n}$.
This analysis works for all predictors $w^*$ of bounded $\ell_1$-norm, and it is minimax optimal over this class, but when we assume that $w^*$ is $k$-sparse it is generally suboptimal and in particular does not give exact recovery when $\sigma = 0$. We now explain how our theory recovers the correct behavior in the sparse and well-conditioned setting commonly studied in the sparse linear regression literature. 

\paragraph{Performance under Sparsity and Compatability/Restricted Eigenvalue Condition.} 
We show how to recover well-known results from compressed sensing and high-dimensional statistics about sparse linear regression with Gaussian designs. In particular, we prove a performance guarantee for the LASSO when the covariance matrix is well-conditioned, as previously analyzed by \citet{raskutti2010restricted}, or more generally satisfies a version of the \emph{compatability condition} \citep{van2009conditions}. We start with the following well-known lemma commonly used in the analysis of the LASSO \citep[see, e.g.,][]{vershynin2018high}.

\begin{restatable}{lemma}{LASSOCone} \label{lem:cone}
Suppose $w^*$ is $k$-sparse, i.e. supported on coordinate set $S \subset [d]$ with $|S| \le k$. Every $w$ with $\|w\|_1 \le \|w^*\|_1$ satisfies
\begin{equation} \label{eqn:cone-inequality}
\|(w - w^*)_{S^C}\|_1 \le \|(w^* - w_S)\|_1. 
\end{equation}
\end{restatable}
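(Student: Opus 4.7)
The plan is to prove the cone inequality by a direct decomposition and triangle-inequality argument, exploiting the fact that $w^*$ is supported on $S$ so that $w^*_{S^C} = 0$ and $(w - w^*)_{S^C} = w_{S^C}$, while $(w^* - w_S) = (w^* - w)_S$.

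First I would split the $\ell_1$ norm of $w$ along $S$ and $S^C$:
\[
\|w\|_1 = \|w_S\|_1 + \|w_{S^C}\|_1.
\]
Then I would apply the reverse triangle inequality to the $S$-component, using that $w^* = w^*_S$:
\[
\|w_S\|_1 \ge \|w^*_S\|_1 - \|w^*_S - w_S\|_1 = \|w^*\|_1 - \|(w^* - w)_S\|_1.
\]

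Combining these two displays and invoking the hypothesis $\|w\|_1 \le \|w^*\|_1$, I get
\[
\|w^*\|_1 \ge \|w\|_1 \ge \|w^*\|_1 - \|(w^* - w)_S\|_1 + \|w_{S^C}\|_1,
\]
and cancelling $\|w^*\|_1$ from both sides yields $\|w_{S^C}\|_1 \le \|(w^* - w)_S\|_1$, which is exactly \eqref{eqn:cone-inequality} after noting that $w_{S^C} = (w - w^*)_{S^C}$ and $(w^* - w)_S = w^* - w_S$. There is no real obstacle here; the only subtlety is keeping the bookkeeping between $w$, $w_S$, $w^*$ and their restrictions straight, which is purely notational once one observes that $w^*$ lives entirely on $S$.
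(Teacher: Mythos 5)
Your proof is correct and is essentially the same as the paper's: both rely on the decomposition $\|w\|_1 = \|w_S\|_1 + \|w_{S^C}\|_1$, the hypothesis $\|w\|_1 \le \|w^*\|_1$, and the triangle inequality in the form $\|w^*\|_1 - \|w_S\|_1 \le \|w^* - w_S\|_1$ (which you phrase as a reverse triangle inequality applied on $S$). The two arguments differ only in presentation order, not in substance.
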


The above lemma shows that the vector $w - w^*$ lies in the covex cone 
\[ \mathcal{C}(S) := \{ u : \|u_{S^C}\|_1 \le \|u_S\|_1 \}, \]
where $S$ is the support of $w^*$. Now we can state the version of the \emph{compatibility condition} \citep{van2009conditions} we use; the compatibility condition is a weakening of the \emph{restricted eigenvalue condition} \citep{bickel2009simultaneous,raskutti2010restricted}, and the compatibility condition is known to be a sufficient and almost necessary condition for the LASSO to perform exact recovery from $O(k\log d)$ samples in the Gaussian random design setting \citep{kelner2021power}.
\begin{defn}[Compatibility Condition; see \cite{van2009conditions}]\label{def:compatability-condition}
For a positive semidefinite matrix $\Sigma : n \times n$, $L \ge 1$, and set $S \subset [n]$, we say $\Sigma$ has \emph{$S$-restricted $\ell_1$-eigenvalue}
\[ \phi^2(\Sigma,S) = \min_{u \in \mathcal{C}(S)} \frac{|S| \cdot \langle u, \Sigma u \rangle}{\|u_S\|^2_1}.  \]
We say the \emph{S-compatibility condition} holds if the $S$-restricted $\ell_1$-eigenvalue is nonzero.
\end{defn}

\begin{example}[Application of \cref{thm:optimistic} to LASSO with sparsity]\label{ex:lasso}
Observe that for $x \sim N(0,\Sigma)$, we have by Holder's inequality,
the standard Gaussian tail bound, and the union bound that with probability at least $1 - \delta'$,
\begin{equation} \label{eqn:F-lasso-application}
\langle w - w^*, x \rangle \le \|w - w^*\|_1 \|x\|_{\infty} \le \|w - w^*\|_{1} \max_i \sqrt{2\Sigma_{ii} \log(2d/\delta')}.
\end{equation}
Thus, we can take $F(w)$ to be the right hand side of this inequality when applying \cref{thm:optimistic}.
\end{example}

Combining \eqref{eqn:F-lasso-application} with \cref{lem:cone} and the compatibility condition, we obtain the following:

\begin{restatable}{theorem}{LASSO} \label{thm:lasso-compatibility}
Under the model assumptions in \eqref{eqn:model}, additionally assume that:
\begin{enumerate}
    \item $w^*$ is a $k$-sparse vector.
    \item For $S \subset [d]$ the support of $w^*$, the covariance matrix $\Sigma$ satisfies the $S$-compatibility condition.
    \item The number of samples $n$ satisfies
    \[ n >  \frac{ 32 \max_i \Sigma_{ii} }{\phi^2(\Sigma,S)} \cdot k\log \left( \frac{32d}{\delta}\right). \]
\end{enumerate}
Then, for all $w$ satisfying $\|w\|_1 \le \|w^*\|_1$ and $\hat{L}(w) \le (1 + \epsilon)\sigma^2$ for an arbitrary $\epsilon$, we have
\begin{equation}
    L(w)-\sigma^2 \lesssim (\beta_1 + \epsilon)\sigma^2 + (1+\epsilon) \, \frac{ \max_i \Sigma_{ii}}{\phi(\Sigma,S)^2} \cdot \frac{\sigma^2 k \log(32d/\delta)}{n},
\end{equation}
where $\beta_1 = O(\sqrt{\log(1/\delta)/n})$ is as defined in \cref{thm:optimistic}. In particular, when $\sigma = 0$ we have that $\|w - w^*\|_{\Sigma} = 0$, and so if $\Sigma$ is positive definite then we have $w = w^*$ (exact recovery).
\end{restatable}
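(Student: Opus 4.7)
The plan is to combine \cref{thm:optimistic} (instantiated via \cref{ex:lasso}) with \cref{lem:cone} and the compatibility condition to reduce the claim to a self-bounding quadratic inequality in $B^2 := L(w)-\sigma^2 = \|w-w^*\|_\Sigma^2$. First, I would apply \cref{thm:optimistic} with the choice $F(w) = \|w - w^*\|_1 \cdot \max_i \sqrt{2\Sigma_{ii}\log(2d/\delta')}$ from \cref{ex:lasso}, taking $\delta' \asymp \delta$ so the logarithmic factors match $\log(32d/\delta)$. This yields that with probability at least $1 - O(\delta)$, uniformly in $w$,
\[ L(w) \;\le\; (1+\beta_1)\left(\sqrt{\hat L(w)} + \|w - w^*\|_1 \sqrt{\tfrac{2\max_i\Sigma_{ii}\log(2d/\delta)}{n}}\right)^{\!2}. \]
For any $w$ with $\|w\|_1 \le \|w^*\|_1$, \cref{lem:cone} puts $u := w - w^*$ in the cone $\mathcal{C}(S)$, so $\|u\|_1 \le 2\|u_S\|_1$. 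The $S$-compatibility condition applied to $u$ then gives $\|u_S\|_1^2 \le (k/\phi^2(\Sigma,S))\,\langle u, \Sigma u\rangle$, so altogether
\[ \|w - w^*\|_1 \;\le\; \tfrac{2\sqrt{k}}{\phi(\Sigma,S)} \cdot B. \]

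Setting $\kappa^2 := 8k\,\max_i\Sigma_{ii}\log(2d/\delta)/(\phi^2(\Sigma,S)\,n)$ and substituting the hypothesis $\hat L(w) \le (1+\epsilon)\sigma^2$ into the optimistic-rate bound turns it into the self-bounding inequality
\[ \sigma^2 + B^2 \;\le\; (1+\beta_1)\bigl(\sqrt{1+\epsilon}\,\sigma + \kappa B\bigr)^{\!2}. \]
Expanding and collecting yields a quadratic in $B$, namely $(1-(1+\beta_1)\kappa^2)B^2 \le 2(1+\beta_1)\sqrt{1+\epsilon}\,\sigma\kappa B + (\beta_1+\epsilon+\beta_1\epsilon)\sigma^2$. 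The sample-size hypothesis in the theorem is calibrated precisely so that $\kappa^2 \le 1/4$, hence for small $\beta_1$ the leading coefficient $1 - (1+\beta_1)\kappa^2$ is bounded below by an absolute constant $c_0 > 0$. Solving the quadratic via $\sqrt{a^2+4c_0 b} \le a + 2\sqrt{c_0 b}$ and applying $(x+y)^2 \le 2x^2+2y^2$ gives
\[ B^2 \;\lesssim\; (1+\epsilon)\,\sigma^2\kappa^2 \;+\; (\beta_1+\epsilon)\sigma^2, \]
which on re-substituting $\kappa^2$ is the claimed bound. The exact-recovery statement when $\sigma = 0$ is immediate from this self-bounding inequality: its right-hand side then has no additive constant, so the quadratic forces $B = \|w-w^*\|_\Sigma = 0$, and positive-definiteness of $\Sigma$ implies $w = w^*$.

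The conceptual content sits in closing the loop between the bound on $\|w-w^*\|_1$ coming from compatibility and the occurrence of $\|w-w^*\|_1$ inside $F(w)$ in the optimistic rate, which is what converts a linear-looking bound into a self-bounding one. The main obstacle is therefore purely algebraic bookkeeping in the quadratic solve: tracking how the sample-complexity assumption translates into $\kappa^2 \le 1/4$ with exactly the right numerical constants, and choosing $\delta'$ so the union bound over the two high-probability events (the one defining $F$ and the optimistic-rate event itself) lands at the stated failure probability $\delta$ with the logarithm $\log(32d/\delta)$ quoted in the statement.
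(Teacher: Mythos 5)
Your proposal is correct and follows essentially the same route as the paper's proof: apply \cref{thm:optimistic} with the $\ell_\infty$-Hölder complexity functional, use \cref{lem:cone} and the compatibility condition to convert $\|w-w^*\|_1$ into $O(\sqrt{k}/\phi(\Sigma,S))\cdot\|w-w^*\|_\Sigma$, and solve the resulting self-bounding quadratic in $\|w-w^*\|_\Sigma$ using the sample-size hypothesis to ensure the leading coefficient is bounded away from zero. The only cosmetic difference is that the paper folds the cone and compatibility bounds directly into its choice of $F$ (setting $F=\infty$ off the cone), whereas you invoke them after applying the theorem; the algebra and conclusions are the same.
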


To interpret the above bound, observe that when we consider the ERM, we know that $\epsilon = O(1/\sqrt{n})$ based on concentration of the norm of the noise (\cref{lem:norm-concentration}) and so the first term is $\sigma^2/\sqrt{n}$ and the second term, assuming $\Sigma$ is well-conditioned, is $O(\sigma^2k\log(d/\delta)/n)$, which is the well-known minimax rate for sparse linear regression \citep[see, e.g.,][]{rigollet2015high}. The above analysis is not very careful in terms of constant factors; in \cref{sec:lasso-isotropic} we show how to get sharp constants in the isotropic setting. Also, in \cref{sec:low-complexity} we show how to get rid of the first term on the right hand side of the bound above, when we are specially considering the constrained ERM $\hat w$ minimizing the squared loss over all $\|w\|_1 \le \|w^*\|_1$, i.e. the LASSO solution: see \cref{corr:lasso-lowcomplexity}.

\subsection{Ordinary Least Squares}

Next, we consider a high-dimensional setting when $d$ is smaller than $n$. For example, when $d = n/2$, the ordinary least squares estimator $\wols$ is the unique minimizer of the training error, but it does not interpolate the training data and so the uniform convergence analysis of \citet{uc-interpolators} cannot be applied. As it turns out, our \cref{thm:optimistic} is enough to tightly characterize the excess risk of $\wols$.

\begin{example}[Application of \cref{thm:optimistic} to OLS] 
By the Cauchy-Schwarz inequality, it holds that
\begin{equation*}
    \langle w^* - w, \, x \rangle \leq \norm{H}_2 \Norm{w^* - w}_{\Sigma}.
\end{equation*}
Using standard concentration inequalities and $L(w) - \sigma^2 = \Norm{w - w^*}_{\Sigma}^2$, we can choose 
\begin{equation} \label{eqn:F-ols}
    F(w) = \left(\sqrt{d} + 2 \sqrt{\log(4/\delta')}\right) \sqrt{L(w)-\sigma^2}.
\end{equation}
\end{example}

\begin{restatable}{theorem}{OLSbound} \label{thm:OLS}
Under the model assumptions in \eqref{eqn:model}, let $\gamma = d/n < 1$.
There exists some $\epsilon \lesssim \left( \frac{\log(36/\delta)}{n}\right)^{1/2}$ such that for all sufficiently large $n$, with probability $1-\delta$ it holds uniformly for all $w \in \R^d$ that
\begin{equation} \label{eqn:ols-bound}
    \left| \sqrt{L(w)-\sigma^2} - \sqrt{\frac{\gamma\hat{L}(w)}{(1-\gamma)^2}} \, \right| \leq \epsilon \sqrt{\hat{L}(w)} + \sqrt{\frac{1}{1-\gamma} \left( \frac{\hat{L}(w)}{1-\gamma} - \sigma^2 \right) + \epsilon \hat{L}(w)}.
\end{equation}
For the empirical risk minimizer $\wols = (X^TX)^{-1}X^T Y$, the right hand side of \eqref{eqn:ols-bound} is approximately zero because we also have
\begin{equation}
    \hat{L}(\wols) \leq \sigma^2 (1-\gamma) + \sigma^2 \epsilon \sqrt{1-\gamma}.
\end{equation}
Therefore, we obtain the following generalization bound:
\begin{equation} \label{eqn:ols-bound2}
    L(\wols) - \frac{\sigma^2}{1-\gamma} \lesssim \sigma^2 \left( \frac{\log(36/\delta)}{n} \right)^{1/4}.
\end{equation}
\end{restatable}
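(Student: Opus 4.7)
The plan is to apply \cref{thm:optimistic} with the complexity functional $F$ given by \eqref{eqn:F-ols}. To verify that this $F$ satisfies \eqref{eqn:F-definition}, I would write $x = \Sigma^{1/2} H$ with $H \sim N(0,I_d)$ and combine Cauchy--Schwarz,
\[ \langle w-w^*, x \rangle \le \|\Sigma^{1/2}(w-w^*)\|_2 \cdot \|H\|_2, \]
with the Laurent--Massart bound $\|H\|_2 \le \sqrt d + 2\sqrt{\log(4/\delta')}$ (which holds with probability at least $1-\delta'$), together with the identity $\|\Sigma^{1/2}(w-w^*)\|_2 = \sqrt{L(w)-\sigma^2}$. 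Plugging this $F$ into \eqref{eqn:optimistic-1} and writing $a := \sqrt{L(w)-\sigma^2}$, $b := \sqrt{\hat L(w)}$ yields a quadratic inequality of the form
\[ \bigl(1 - (1+\beta_1)\gamma(1+\eta)\bigr)\, a^2 - 2(1+\beta_1)\sqrt{\gamma(1+\eta)}\, a\, b + \sigma^2 - (1+\beta_1) b^2 \le 0, \]
with $\eta = O(\sqrt{\log(1/\delta)/d})$ arising from the concentration of $\|H\|_2/\sqrt d$. Since $\gamma < 1$ and both $\beta_1$ and $\eta$ are small, the quadratic formula gives an explicit upper bound on $a$. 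After a direct algebraic rearrangement (using $(1+\beta_1)b^2 - (1-(1+\beta_1)\gamma(1+\eta))\sigma^2 = (1-\gamma)^{-1}(\hat L/(1-\gamma) - \sigma^2)(1-\gamma)^2$ up to lower-order $\epsilon$), this produces the upper direction of the absolute value in \eqref{eqn:ols-bound}.

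For the matching lower direction of \eqref{eqn:ols-bound}, I would rerun the Gordon's-inequality argument of \cref{thm:optimistic} in the opposite direction. Gordon's minmax theorem is in fact two-sided: besides providing a stochastic upper bound on the primary max-min problem, it also gives a matching stochastic lower bound via the companion auxiliary problem (with the outer roles of $\sup$ and $\inf$ exchanged). Tracking the same concentration inequalities for $\|G\|_2$, $\|\xi\|_2$ and $\|H\|_2$ that were used for the upper bound, one obtains the mirror inequality $L(w) \ge (1-\beta_1')\bigl(\sqrt{\hat L(w)} - \sqrt{\gamma(1-\eta)}\sqrt{L(w)-\sigma^2}\bigr)_+^2$, which after solving the corresponding quadratic in $a$ yields the lower direction of the absolute value up to the same $\epsilon$. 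I expect this step to be the main obstacle, since the reversed Gordon argument requires care when the predictor $w$ ranges over an unbounded set; a standard workaround is to localize first to a ball of radius polynomial in the relevant parameters (where the bound is nontrivial) before invoking the inequality, and observe that outside that ball the two sides of \eqref{eqn:ols-bound} are both dominated by the $\sqrt{L(w)-\sigma^2}$ term and the bound is trivial.

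Finally, to derive \eqref{eqn:ols-bound2} from \eqref{eqn:ols-bound}, I would compute $\hat L(\wols)$ directly. Since $\wols$ projects $Y$ onto the column span of $X$, one has $Y - X\wols = P_{X^\perp}\xi$, so $\hat L(\wols) = \|P_{X^\perp}\xi\|_2^2/n$; because $P_{X^\perp}$ is an orthogonal projection onto an $(n-d)$-dimensional subspace independent of $\xi$, $\|P_{X^\perp}\xi\|_2^2 \sim \sigma^2\chi^2_{n-d}$, and the Laurent--Massart tail bound gives $\hat L(\wols) \le \sigma^2(1-\gamma) + \sigma^2\epsilon\sqrt{1-\gamma}$, as claimed. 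Substituting this into \eqref{eqn:ols-bound}, the term $\hat L(\wols)/(1-\gamma) - \sigma^2$ is $O(\sigma^2\epsilon/\sqrt{1-\gamma})$, so the RHS of \eqref{eqn:ols-bound} is $O(\sigma\sqrt{\epsilon})$ while the deterministic term satisfies $\sqrt{\gamma\hat L(\wols)/(1-\gamma)^2} \approx \sigma\sqrt{\gamma/(1-\gamma)}$. Squaring gives $L(\wols) \le \sigma^2/(1-\gamma) + O(\sigma^2\sqrt{\epsilon}) = \sigma^2/(1-\gamma) + O(\sigma^2(\log(36/\delta)/n)^{1/4})$, which is \eqref{eqn:ols-bound2}.
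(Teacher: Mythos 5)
Your choice of $F$, the computation of $\hat L(\wols)$ via the $\chi^2_{n-d}$ distribution, and the final substitution to get \eqref{eqn:ols-bound2} all match the paper. However, there is a conceptual misstep in how you propose to obtain the two-sided bound \eqref{eqn:ols-bound}: you claim the quadratic inequality produced by \eqref{eqn:optimistic-1} only yields the \emph{upper} direction, and you then invoke a separate ``reversed Gordon'' argument for the lower direction. This extra step is unnecessary. The inequality you wrote,
\[
\bigl(1 - (1+\beta_1)\gamma(1+\eta)\bigr)\, a^2 - 2(1+\beta_1)\sqrt{\gamma(1+\eta)}\, a\, b + \sigma^2 - (1+\beta_1) b^2 \le 0,
\]
has a \emph{positive} leading coefficient in $a$ once $\gamma < 1$ and $\beta_1, \eta$ are small. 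An upward-opening parabola is nonpositive only on an interval between its two roots, so the single application of \cref{thm:optimistic} already sandwiches $a = \sqrt{L(w)-\sigma^2}$ from \emph{both} sides. Concretely, dividing through and completing the square gives $(a - B)^2 \le B^2 - C$, i.e.\ $|a - B| \le \sqrt{B^2 - C}$, which is exactly the absolute-value form of \eqref{eqn:ols-bound}. This is what the paper's proof does. Your proposed reversed-Gordon route is not only redundant but also genuinely hard to justify here: the two-sided CGMT (the ``furthermore'' clause in \cref{thm:gmt}) requires a convex-concave structure that the objective in \cref{lem:rewrite-gap} does not obviously possess (both $\sqrt{n\sigma^2 + n\|w\|_2^2}$ and $F$ are convex in $w$ but appear with opposite signs, so the $w$-part is neither concave nor convex), and the domain of $w$ is unbounded. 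You correctly anticipated this would be ``the main obstacle'' --- the resolution is that you should not need to overcome it at all; just keep both roots of the quadratic.
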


We have a relatively complicated expression in \eqref{eqn:ols-bound} because our choice of $F$ according to \eqref{eqn:F-ols} depends on the excess risk $L(w)-\sigma^2$, and so after applying \eqref{eqn:optimistic-1} we need to solve a quadratic equation. All quantities in \eqref{eqn:ols-bound} are well-defined because $\hat{L} \geq 0$ and the $\epsilon \hat{L}(w)$ term inside the last square root ensures that with high probability it is positive. If we think of $\epsilon$ as zero for simplicity, then our uniform convergence guarantee \eqref{eqn:ols-bound} predicts that the excess risk $L(w) - \sigma^2$ of a predictor with training error $\hat{L}(w)$ cannot be larger than 
\[
\frac{1}{1-\gamma} \left( \sqrt{\frac{\gamma \hat{L}(w)}{1-\gamma} } + \sqrt{\frac{\hat{L}(w)}{1-\gamma} - \sigma^2} \right)^2.
\]
The minimal error is approximately $\sigma^2 (1-\gamma)$ and so all near empirical risk minimizer should enjoy an excess risk of $\sigma^2 \frac{\gamma}{1-\gamma}$, which agrees with the exact expectation formula in \citet{hastie2019surprises}; see their discussion for additional references. Since our approach also gives us a lower bound for free (by solving the quadratic equation), \cref{thm:OLS} is enough to show that $L(\wols)$ converges to $\sigma^2 \frac{1}{1-\gamma}$ in probability. We see that even though the empirical risk minimizer is not consistent, our localized uniform convergence approach can still provide an accurate understanding of the excess risk, and our bound for OLS is tight at least for the leading term. 

\begin{remark}\label{rmk:rates}
The $O(n^{-1/4})$ rate of \eqref{eqn:ols-bound2} comes from the fact that we need to take the square root of $\epsilon$ in the last term of \eqref{eqn:ols-bound}; it is not too difficult to see that this is sub-optimal for OLS. In fact, in \cref{thm:OLSvar}, we explicitly calculate the variance of $L(\wols)$ and show that in the proportional scaling regime (e.g., $\gamma = 0.5$), the right amount of deviation is of order $O(n^{-1/2})$. In the fixed-$d$ regime, the convergence rate can be accelerated to the more familiar rate of $O(n^{-1})$. In \cref{thm:OLShighprob}, we show how to use a more direct approach to obtain high probability bounds that match these variance calculations. Surprisingly, we can also show that the $O(n^{-1/4})$ rate is generally unavoidable for any uniform convergence analysis that only considers the size of $\hat{L}(w)$. Our analysis is tight in the sense that there are estimators whose training error is indistinguishable from $\wols$, but whose convergence rate is provably slower than $\Omega(n^{-1/4})$. For readers interested in the tightest rate of convergence, more details can be found in \cref{sec:improved-rate-ols}. 
\end{remark}

\subsection{Minimum-Euclidean Norm Interpolation with Isotropic Data and Proportional Scaling}

In the previous section, we saw that for OLS in the proportional scaling regime a simple application of our optimistic-rate bound recovers the limiting asymptotic population loss as a function of $d/n < 1$.
For $d/n > 1$, the OLS estimator is no longer defined, and instead we study the performance of the minimum-norm interpolator of the data. 
In \cref{thm:ridge-generalization} below, we show that with a slightly more careful\footnote{The specific choice of the complexity function $F$ follows from our \cref{lem:isotropic-bound} in the appendix.} application of \cref{thm:optimistic}, we can recover the loss curve at any aspect ratio (see \cref{fig:bd2}). Together with the previous result, we show that the optimistic-rate bound can capture the behavior of the pseudoinverse estimator $\hat w = X^+ Y$ on both sides of the double descent curve.  

\begin{restatable}{theorem}{RidgeGen} \label{thm:ridge-generalization}
Under the model assumptions in \eqref{eqn:model} with $\gamma = d/n > 1$ and $\Sigma = I_d$, there exists $\epsilon \lesssim \left(\frac{\log(18/\delta)}{n}\right)^{1/2}$ such that with probability at least $1-\delta$, the following holds uniformly over all $w$ such that $\hat{L}(w) = 0$:
\begin{equation}
    \left| L(w) - \left[ \sigma^2 + \|w\|_2^2 + \left( 1 -\frac{2 }{(1+\epsilon)\gamma}  \right) \|w^*\|_2^2 \right] \right| \leq 2\|w^*\|_2 \sqrt{\left( 1  -\frac{1 }{\gamma}  \right) \left( \|w\|_2^2 - \frac{\|w^*\|_2^2}{\gamma} \right)- \frac{\sigma^2}{\gamma} +  3\epsilon \| w \|_2^2}.
\end{equation}
\end{restatable}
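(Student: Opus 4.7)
The plan is to apply \cref{thm:optimistic} with a complexity function $F$ tailored to the isotropic case by decomposing the test point $x \sim N(0,I_d)$ along $w^*$ and its orthogonal complement; once restricted to interpolators, the resulting one-sided upper bound on $L(w)$ becomes a quadratic inequality in the single scalar $s := \langle w, w^*\rangle$, whose solution interval directly produces the two-sided bound.

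First I would set $e := w^*/\|w^*\|_2$ and decompose $x = \tau e + x_\perp$, where $\tau := \langle e, x\rangle \sim N(0,1)$ is independent of $x_\perp := (I - ee^T)x$, which is isotropic Gaussian on the $(d-1)$-dimensional subspace $e^\perp$. Using $\langle w^*, x_\perp\rangle = 0$,
\[
\langle w - w^*, x\rangle = \tau \langle w - w^*, e\rangle + \langle w, x_\perp\rangle \le |\tau| \cdot |\langle w - w^*, e\rangle| + \sqrt{\|w\|_2^2 - \langle w, w^*\rangle^2/\|w^*\|_2^2} \cdot \|x_\perp\|_2.
\]
Standard Gaussian and $\chi^2_{d-1}$ tail bounds give, with probability at least $1-\delta'$, simultaneously $|\tau| \le c_1$ and $\|x_\perp\|_2 \le \sqrt{d-1} + c_1$ for some $c_1 \lesssim \sqrt{\log(1/\delta')}$. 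Taking
\[
F(w) := c_1 \cdot |\langle w - w^*, e\rangle| + (\sqrt{d-1} + c_1) \cdot \sqrt{\|w\|_2^2 - \langle w, w^*\rangle^2/\|w^*\|_2^2}
\]
satisfies hypothesis \eqref{eqn:F-definition} of \cref{thm:optimistic}, and this $F$ depends on $w$ only through the two scalars $s$ and $B := \|w\|_2$.

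Feeding $F$ into \cref{thm:optimistic} and restricting to $\hat L(w)=0$ yields $L(w) \le (1+\beta_1) F(w)^2/n$. Setting $T := \|w^*\|_2$ and using the identities $L(w) = \sigma^2 + B^2 + T^2 - 2s$, $|\langle w - w^*, e\rangle|^2 = (s - T^2)^2/T^2$, $\|w\|_2^2 - \langle w, w^*\rangle^2/T^2 = B^2 - s^2/T^2$, and $(s-T^2)^2/T^2 + (B^2 - s^2/T^2) = L(w) - \sigma^2$, expanding the square $F(w)^2$ and applying AM-GM to the cross term gives
\[
F(w)^2/n \le \gamma (B^2 - s^2/T^2) + O(\epsilon)\cdot (L(w) - \sigma^2),
\]
where $\epsilon \lesssim \sqrt{\log(1/\delta)/n}$ absorbs $\beta_1$ together with $c_1$-dependent corrections and the gap between $(d-1)/n$ and $\gamma$. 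This rearranges into a quadratic inequality in $s$ of the form $(1+\epsilon)\gamma\cdot s^2/T^2 - 2s + [\sigma^2 + (1 - (1+\epsilon)\gamma)B^2 + T^2] \le 0$, whose solution set is an interval with centre $T^2/((1+\epsilon)\gamma)$ and half-width $T\sqrt{(1-1/\gamma)(B^2 - T^2/\gamma) - \sigma^2/\gamma + 3\epsilon B^2}$, after the corrections needed to replace $1/((1+\epsilon)\gamma)$ by $1/\gamma$ inside the discriminant are folded into the $3\epsilon B^2$ slack. Substituting back via $L(w) - [\sigma^2 + B^2 + (1 - 2/((1+\epsilon)\gamma))T^2] = 2(T^2/((1+\epsilon)\gamma) - s)$ gives the claimed two-sided bound.

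The main obstacle is the careful bookkeeping of the $O(\epsilon)$ corrections: the cross term in $F(w)^2$ together with the mismatches between $\gamma$ and $(d-1)/n$, and between $1/\gamma$ and $1/((1+\epsilon)\gamma)$ in the discriminant, must all fit inside the single slack $3\epsilon\|w\|_2^2$ under the square root without shrinking the interval below the stated half-width or driving the discriminant negative. A more conceptual point worth emphasising is that a one-sided optimistic-rate bound is sufficient to produce a two-sided statement here, because $L(w)$ is linear in $s$ while $F(w)^2/n$ is an upward-opening quadratic in $s$; the one-sided inequality therefore already confines $s$ to a bounded interval, and no separate reverse-Gordon argument is needed.
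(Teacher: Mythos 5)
Your proposal is essentially the paper's proof: the paper also decomposes $x$ along $w^*$ and its orthogonal complement (this is exactly \cref{lem:isotropic-bound}), takes the same complexity functional, and solves the resulting quadratic inequality; the only cosmetic difference is that the paper parametrises the quadratic by $\|w-w^*\|_2^2$ rather than $s=\langle w,w^*\rangle$, which are affinely related so the solution interval is the same. One genuine technical caveat: the paper does \emph{not} expand $F(w)^2$ and AM--GM the cross term as you propose --- instead it keeps the bound in the un-squared form $\sqrt{L(w)}\le(1+\beta_1)^{1/2}\,F(w)/\sqrt n$, uses $|\langle w-w^*,e\rangle|\le\|w-w^*\|_2\le\sqrt{L(w)}$ to move the $w^*$-aligned term to the left \emph{before} squaring, and pulls $\sqrt\gamma$ out of the remaining term using $\gamma>1$. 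This ordering is what keeps the resulting $\epsilon$ at order $\sqrt{\log(1/\delta)/n}$ uniformly in $\gamma$; your AM--GM step, as written, produces a coefficient $\sim\sqrt\gamma\cdot\sqrt{\log(1/\delta)/n}$ on the $L(w)-\sigma^2$ term, which does not fit inside the $\gamma$-independent slack claimed in the statement when $d\gg n$, so you would need to switch to the absorb-before-squaring manipulation to close the argument in full generality.
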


\begin{figure}[tbh]
    \centering
    \includegraphics[scale=0.6,trim={0 0 0 0}]{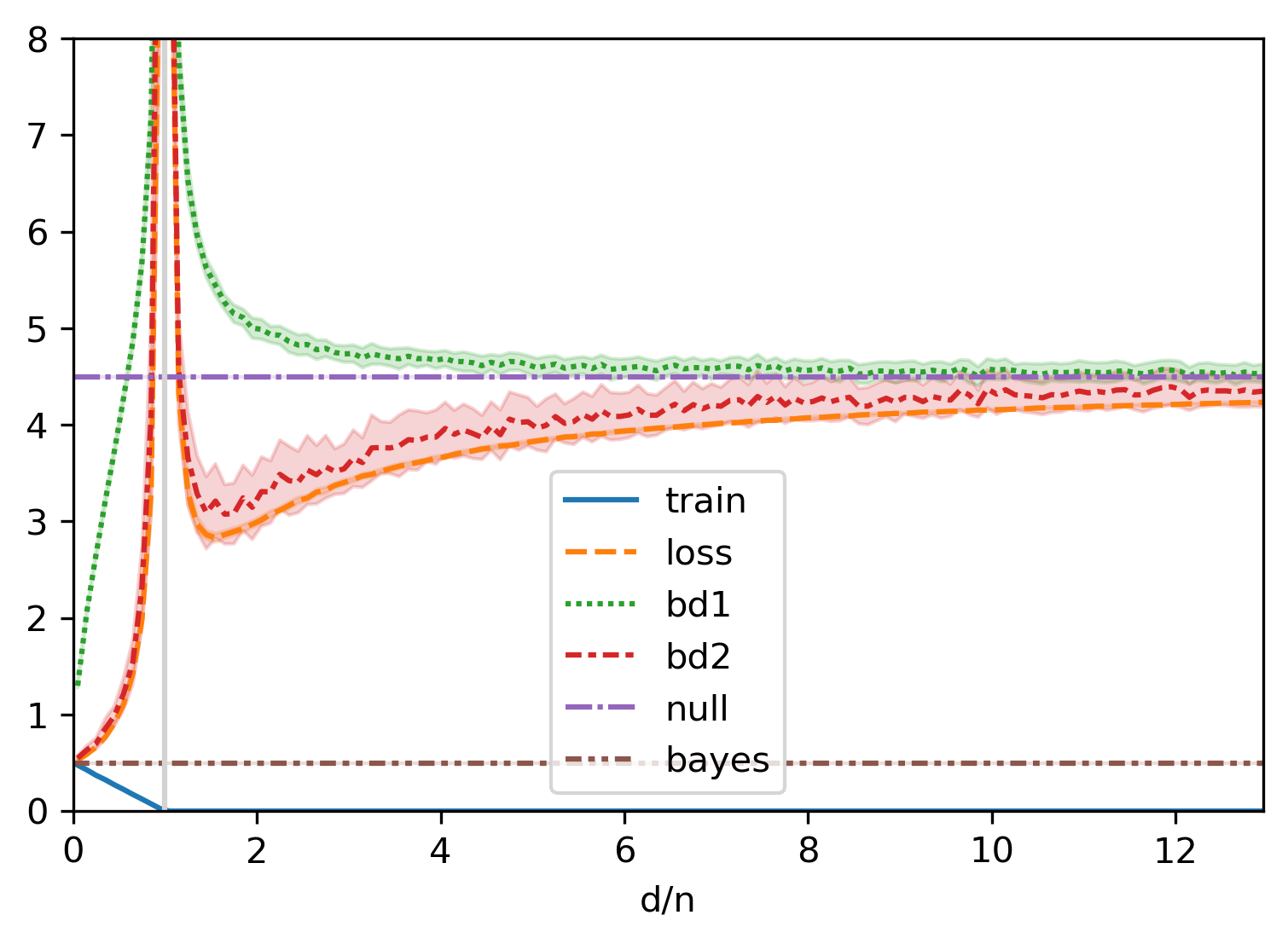}
    \caption{Generalization bounds for OLS/minimum-$\ell_2$ norm interpolation with isotropic covariance $\Sigma = I$, $\|w^*\| = 2$, $\sigma^2 = 0.5$, $n = 4096$, and varying aspect ratio $d/n$. The vertical line at $d/n = 1$ represents the double descent peak: on the left ($d/n < 1$) the predictor $w$ considered is the Ordinary Least Squares solution and on the right the minimum $\ell_2$-norm interpolator. The line ``train'' is the empirical loss $\hat{L}(w)$, the line ``loss'' is the test/population loss $L(w)$, ``bayes'' is the minimal population loss $L(w^*)$, and ``null' is $L(0)$.
    Each curve correspond to the means from 30 trials at each value of $d/n$, and the error bars correspond to standard deviations. 
    The line ``bd1'' corresponds to the bound $\left(\sqrt{\hat{L}(w)} + \|w\|\sqrt{d/n}\right)^2$ from \cref{thm:covariance-splitting}, and ``bd2'' is the upper bound from \cref{thm:OLS} for $d/n < 1$ and \cref{thm:ridge-generalization} for $d/n > 1$. As we see, bd2 is much closer to the true loss around the double descent peak.
    As explained in \cref{sec:local-gw}, bd2 can be recovered by looking at a localized version of Gaussian width.
    Both bd1 and bd2 are derived from our main optimistic rates bound \cref{thm:optimistic}.
    }
    \label{fig:bd2}
\end{figure}
It is clear from \cref{fig:bd2} below that \cref{thm:ridge-generalization} is capturing the asymptotic behavior of the minimum-norm interpolator; we prove this formally in \cref{thm:isotropic-norm} below by combining the generalization bound with a norm calculation, recovering the asymptotic formula for this setting computed by \citet{hastie2019surprises} using random matrix theory techniques. 

\begin{restatable}{theorem}{IsotropicNorm} \label{thm:isotropic-norm}
Under the model assumptions in \eqref{eqn:model} with $\gamma = d/n > 1$ and $\Sigma = I_d$, there exists $\epsilon \lesssim \left( \frac{\log(40/\delta)}{n}\right)^{1/2}$ such that with probability at least $1-\delta$, it holds that
\begin{equation}
    \min_{w: Xw = Y} \| w \|_2^2 \leq \left( 1 + \epsilon \right) \left( \frac{\| w^*\|_2^2}{\gamma} + \frac{\sigma^2}{\gamma-1} \right).
\end{equation}
Thus, by \cref{thm:ridge-generalization}, we have
\begin{equation}
    L(\hat{w}) - \left[  \left( 1 - \frac{1}{\gamma} \right) \| w^*\|_2^2 + \sigma^2 \frac{\gamma}{\gamma-1} \right] 
    \leq \epsilon \left(\frac{\| w^*\|_2^2}{\gamma} + \frac{\sigma^2}{\gamma-1} \right) + \| w^*\|_2 \sqrt{\epsilon \left(\frac{\| w^*\|_2^2}{\gamma} + \frac{\sigma^2}{\gamma-1} \right)} 
\end{equation}
where $\hat{w}$ is the minimal-$\ell_2$ norm interpolator. If we fix $\sigma^2, \gamma$ and $\| w^*\|_2$, then as $n \to \infty$
\begin{equation}
    L(\hat{w}) \to \left( 1 - \frac{1}{\gamma} \right) \| w^*\|_2^2 + \sigma^2 \frac{\gamma}{\gamma-1} \quad \text{ in probability.}
\end{equation}
\end{restatable}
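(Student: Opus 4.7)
The proof has two independent pieces: a high-probability norm bound on the minimum-$\ell_2$ interpolator, then a substitution into \cref{thm:ridge-generalization}. For the norm bound, the plan is to use Gordon's Gaussian Minmax Theorem in the same style as the sketch of \cref{thm:optimistic}. Since $\min_{Xw=Y}\|w\|_2\le B$ is equivalent to $\min_{\|u\|_2\le B}\|Y-Xu\|_2=0$, after substituting $v=u-w^*$ the primary becomes
\[
\Phi(B)=\min_{\|w^*+v\|_2\le B}\max_{\|\lambda\|_2\le 1}\lambda^T\xi-\lambda^T Xv,
\]
and Gordon (in its convex form) upper bounds this with high probability by the auxiliary
\[
\tilde\Phi(B)=\min_{\|w^*+v\|_2\le B}\left(\bigl\|\|v\|_2\,g+\xi\bigr\|_2+h^T v\right)_+,
\]
where $g\sim N(0,I_n)$ and $h\sim N(0,I_d)$ are independent. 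Showing $\tilde\Phi(B)\le 0$ gives $\Phi(B)=0$, i.e., an interpolator exists inside the ball of radius $B$.

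To locate the correct $B$, I parametrize $v=-a\hat w^*+q$ with $\hat w^*=w^*/\|w^*\|_2$ and $q\perp w^*$, so the ball constraint forces $a\ge a_0:=(\|v\|_2^2+\|w^*\|_2^2-B^2)/(2\|w^*\|_2)$ together with $\|q\|_2^2=\|v\|_2^2-a^2$. Aligning $q$ with $-h_\perp$, the part of $h$ orthogonal to $w^*$, gives $h^T v\approx -\sqrt{\|v\|_2^2-a_0^2}\,\|h\|_2$, and using that $\|g\|_2^2,\|\xi\|_2^2,\|h\|_2^2$ concentrate around $n,n\sigma^2,n\gamma$, the condition $\bigl\|\|v\|_2\,g+\xi\bigr\|_2+h^T v\le 0$ reduces to a convex quadratic inequality in $r^2=\|v\|_2^2$. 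A short discriminant calculation shows feasibility iff $B^2\ge\|w^*\|_2^2/\gamma+\sigma^2/(\gamma-1)$, and the optimal $r^2=\|w^*\|_2^2(\gamma-1)/\gamma+\sigma^2/(\gamma-1)$ together with $a_0=\|w^*\|_2(\gamma-1)/\gamma$ agree with the random-matrix expectations of $\|\hat w-w^*\|_2^2$ and $\langle\hat w,\hat w^*\rangle$, confirming the constant is tight. The $(1+\epsilon)$ factor absorbs the $O(\sqrt{\log(1/\delta)/n})$ deviations in $\|g\|_2,\|h\|_2,\|\xi\|_2$, in $\langle g,\xi\rangle$ and $\langle h,\hat w^*\rangle$, and the comparison slack in Gordon.

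Given the norm bound, the second inequality is immediate by substituting into \cref{thm:ridge-generalization} with $\hat L(\hat w)=0$ and $\|\hat w\|_2^2\le (1+\epsilon)(\|w^*\|_2^2/\gamma+\sigma^2/(\gamma-1))$: the identity $(1-1/\gamma)(\|\hat w\|_2^2-\|w^*\|_2^2/\gamma)-\sigma^2/\gamma=O(\epsilon)$ causes the radical in that bound to collapse to $O(\sqrt{\epsilon})$, and simple algebra yields the claimed excess over $(1-1/\gamma)\|w^*\|_2^2+\sigma^2\gamma/(\gamma-1)$. Letting $n\to\infty$ with $\sigma,\gamma,\|w^*\|_2$ fixed sends $\epsilon\to 0$, and a union bound over the norm-bound event and the event from \cref{thm:ridge-generalization} gives convergence in probability. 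The main obstacle is the tightness of the norm computation: Gordon gives only a one-sided comparison, so I must explicitly produce an approximate minimizer of the auxiliary and verify that its value has the right sign, which is exactly the discriminant argument above; bundling all concentration terms into a single $\epsilon\lesssim\sqrt{\log(1/\delta)/n}$ is tedious but standard.
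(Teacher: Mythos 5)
Your strategy is the same as the paper's in all essential respects: both apply the Gaussian Minmax Theorem, both construct a feasible point in the auxiliary problem by writing the candidate interpolator as a scalar multiple of $w^*$ plus a component aligned with the orthogonal projection of $h$ onto $(w^*)^\perp$, both reduce the constraint to a scalar inequality and read off the threshold $B^2 = \|w^*\|_2^2/\gamma + \sigma^2/(\gamma-1)$, and both feed the resulting norm bound into \cref{thm:ridge-generalization}. The quantities you check ($r^2$, $a_0$, $B^2$, and the collapse of the radical to $O(\sqrt{\epsilon})$) are correct.

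The one place where you should be more careful is the CGMT step. You dualize as $\Phi(B)=\min_{\|w^*+v\|_2\le B}\max_{\|\lambda\|_2\le 1}\langle\lambda,\xi-Xv\rangle$, whose auxiliary $\tilde\Phi(B)$ carries a $(\cdot)_+$ because the inner maximum over the ball $\{\|\lambda\|_2\le 1\}$ is attained at $\|\lambda\|_2\in\{0,1\}$. Consequently both $\Phi(B)$ and $\tilde\Phi(B)$ are nonnegative, and the CGMT comparison $\Pr(\Phi(B)>c)\le 2\Pr(\tilde\Phi(B)\ge c)$ is vacuous at $c=0$; the informal statement ``$\tilde\Phi(B)\le 0$ gives $\Phi(B)=0$'' is not a consequence of a pointwise inequality. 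This is repairable: your discriminant calculation shows that on a high-probability event the unclipped auxiliary $\min_{\|w^*+v\|_2\le B}\bigl(\|\|v\|_2 g+\xi\|_2+h^Tv\bigr)$ is strictly negative, hence $\tilde\Phi(B)=0$ on that event, hence $\Pr(\tilde\Phi(B)\ge c)\le\delta$ for every $c>0$; applying CGMT for each $c>0$ and then taking $c\to 0^+$ by continuity of measure yields $\Pr(\Phi(B)>0)\le 2\delta$. The paper sidesteps this degeneracy by instead formulating the primary as a minimization of $\|w+w^*\|_2^2$ subject to the interpolation constraint (via the Lagrangian, with a double truncation in $\lambda$ and in the norm), so the comparison is made at a strictly positive level $c=t$ rather than at zero. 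Either route lands in the same place, but you should either add the $c\to 0^+$ limiting step or switch to the norm-objective formulation.
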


\begin{remark}
Similar to the application in the last section, we also have a lower order $O(n^{-1/4})$ term. It is suboptimal, and we suspect that this is unavoidable for any uniform convergence analysis that only considers the typical size of $\| \hat{w}\|$. Nonetheless, this bound recovers the leading term, and the lower-order term is negligible if we only care about the difference with $\sigma^2$.
\end{remark}

\subsection{Sharp analysis of LASSO in the Isotropic Setting}\label{sec:lasso-isotropic}
A well-known application of the Gaussian Minmax Theorem is to the sharp analysis of the LASSO in the setting where the covariates are isotropic and Gaussian \citep[see, e.g.,][]{stojnic2013framework,amelunxen2014living}. Our optimistic rates bound \cref{thm:optimistic} recovers a corresponding generalization bound for all predictors $w$ with $\|w\|_1 \le \|w^*\|_1$, which when specialized to the constrained ERM (i.e.\ the LASSO solution) recovers these results.

\begin{restatable}{theorem}{LASSOIsotropic}\label{thm:lasso-isotropic}
Using the notation of \cref{thm:OLS}, we have with probability at least $1 - \delta$ that for all $w$ with $\|w\|_1 \le \|w^*\|_1$,
\begin{equation}
    \left| \sqrt{L(w)-\sigma^2} - \sqrt{\frac{\gamma\hat{L}(w)}{(1-\gamma)^2}} \, \right| \leq \epsilon \sqrt{\hat{L}(w)} + \sqrt{\frac{1}{1-\gamma} \left( \frac{\hat{L}(w)}{1-\gamma} - \sigma^2 \right) + \epsilon \hat{L}(w)}
\end{equation}
provided $\gamma + 2\epsilon/\sqrt{n} < 1$, where
\[ \cK' := \{u: \|w^* + u\|_1 \le \|w^*\|_1 \} \quad \text{ and } \quad \gamma := \frac{1}{n} \cdot W(\cK' \cap S^{n - 1})^2. \]
\end{restatable}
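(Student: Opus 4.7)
The plan is to mirror the proof of \cref{thm:OLS}, applying \cref{thm:optimistic} with a complexity function $F$ that encodes the localization imposed by the constraint $\|w\|_1 \le \|w^*\|_1$. In the OLS proof, $F$ was essentially $\sqrt{d}\cdot\sqrt{L(w)-\sigma^2}$ up to lower-order concentration terms, arising from Cauchy--Schwarz together with $\|H\|_2 \approx \sqrt{d}$. The point here is that when $\|w\|_1 \le \|w^*\|_1$ the displacement $w - w^*$ lies in the descent set $\cK'$, so the global Cauchy--Schwarz estimate can be replaced by a localized bound in which $\sqrt{d}$ becomes $W(\cK' \cap S^{n-1}) = \sqrt{n\gamma}$.

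Concretely, I would set
\[ F(w) = \bigl(W(\cK' \cap S^{n-1}) + c_0\sqrt{\log(1/\delta')}\bigr)\sqrt{L(w) - \sigma^2} \]
for $w$ with $\|w\|_1 \le \|w^*\|_1$, and $F(w) = +\infty$ otherwise (using $\Sigma = I_d$, so $\|w - w^*\|_2 = \sqrt{L(w) - \sigma^2}$). To verify \eqref{eqn:F-definition}, write $x = H \sim N(0,I_d)$ and use
\[ \langle w - w^*, H \rangle \le \|w - w^*\|_2 \cdot \sup_{v \in \cK' \cap S^{n-1}} \langle v, H \rangle, \]
which holds because $(w - w^*)/\|w - w^*\|_2 \in \cK' \cap S^{n-1}$. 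The supremum is a $1$-Lipschitz function of $H$, so Borell--TIS concentration gives that it is at most $W(\cK' \cap S^{n-1}) + c_0\sqrt{\log(1/\delta')}$ with probability at least $1 - \delta'$.

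Plugging this $F$ into \cref{thm:optimistic} yields, uniformly over $w$ with $\|w\|_1 \le \|w^*\|_1$, an inequality of the form
\[ L(w) \le (1+\beta_1)\Bigl(\sqrt{\hat L(w)} + (\sqrt{\gamma} + \epsilon_0)\sqrt{L(w) - \sigma^2}\,\Bigr)^2, \]
with $\epsilon_0 = O(\sqrt{\log(1/\delta)/n})$. Setting $A = \sqrt{L(w) - \sigma^2}$, $B = \sqrt{\hat L(w)}$, $c = \sqrt{\gamma} + \epsilon_0$, this rearranges into the quadratic inequality $(1 - c^2)A^2 - 2BcA + (\sigma^2 - B^2) \le 0$ after absorbing the $\beta_1$ correction into $\epsilon$. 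Under $\gamma + 2\epsilon/\sqrt n < 1$ we have $1 - c^2 > 0$, and completing the square gives $|A - cB/(1-c^2)| \le \sqrt{B^2 - \sigma^2(1 - c^2)}/(1 - c^2)$, which is precisely the stated inequality after unpacking notation (the two matching terms being $cB/(1-c^2) = \sqrt{\gamma \hat L(w)/(1-\gamma)^2}$ and the other being $\sqrt{(\hat L(w)/(1-\gamma) - \sigma^2)/(1-\gamma)}$).

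The main obstacle is the first step: showing $(w - w^*)/\|w - w^*\|_2 \in \cK' \cap S^{n-1}$, or equivalently, a Gaussian-width upper bound on $\langle w - w^*, H\rangle/\|w - w^*\|_2$ uniform in $w$. When $\|w - w^*\|_2 \ge 1$, star-shape of $\cK'$ around $0$ gives this immediately (take $t = 1/\|w-w^*\|_2 \le 1$). When $\|w - w^*\|_2 < 1$, however, the normalized vector can leave the bounded set $\cK'$, and one must pass through the tangent cone $T = \bigcup_{t \ge 0} t\cK'$ at $w^*$ and establish that $W(T \cap S^{n-1})$ and $W(\cK' \cap S^{n-1})$ agree (or differ only within the $\epsilon$ slack). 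This identity can be established by a direct sign analysis of the $\ell_1$ norm at $w^*$, which shows that the descent cone $T$ is essentially parameterized by directions that already lie in $\cK'$ once normalized.
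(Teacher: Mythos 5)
Your proposal follows the paper's proof very closely: choose $F(w)$ proportional to $\|w - w^*\|_2$ with constant $W(\cK' \cap S^{n-1}) + O(\sqrt{\log(1/\delta')})$, invoke \cref{thm:optimistic}, and solve the resulting quadratic exactly as in \cref{thm:OLS}. You are also right to flag as ``the main obstacle'' the step $(w-w^*)/\|w-w^*\|_2 \in \cK' \cap S^{n-1}$: the paper's proof simply asserts $\langle w^*-w, x\rangle \le \|w^*-w\|_2 \sup_{u\in\cK'\cap S^{n-1}}\langle u,x\rangle$ in its first line without comment, and this does need justification in precisely the regime you point to, $\|w-w^*\|_2 < 1$.

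Your proposed repair, however---that $W(T\cap S^{n-1})$ and $W(\cK'\cap S^{n-1})$ ``agree'' by a sign analysis at $w^*$---is not true in general, and the paper's own remark after the theorem records $W(\cK'\cap S^{n-1}) \le W(\text{cone}(\cK')\cap S^{n-1})$ as a genuine one-sided inequality. The obstruction is sign flips: if $u \in T\cap S^{n-1}$ has $u_i < -w^*_i$ for some support index $i$ with $w^*_i > 0$, then $|w^*_i + u_i| = |u_i| - w^*_i$ strictly exceeds the linearization $w^*_i + u_i$, so $\|w^*+u\|_1$ can overshoot $\|w^*\|_1$ and $u\notin\cK'$. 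Since $|u_i|\le 1$ on the sphere, the two spherical cross-sections coincide only when $\min_{i\in S}|w^*_i|\ge 1$; for a single-spike $w^*$ with $\|w^*\|_1 < 1/\sqrt{2}$, one even has $\cK'\cap S^{n-1}=\emptyset$ while $T\cap S^{n-1}$ is a nontrivial arc. What the verification of \eqref{eqn:F-definition} actually controls is $\sup_{u\in\cK'\setminus\{0\}}\langle u,x\rangle/\|u\|_2$, which by $0$-homogeneity equals $\sup_{u\in T\cap S^{n-1}}\langle u,x\rangle$ and is therefore governed by $W(T\cap S^{n-1})$, not $W(\cK'\cap S^{n-1})$. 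The clean fix is thus to take $\gamma := W(T\cap S^{n-1})^2/n$ (as is standard in compressed sensing); with that change both your argument and the paper's go through verbatim, and the bound on $W(T\cap S^{n-1})$ from \citet{amelunxen2014living} quoted in the paper's remark is exactly the quantity one would plug in.
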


Observe that if $\sigma = 0$ and $\hat{L}(w) = 0$ then we get exact recovery provided $\gamma + 2\epsilon/\sqrt{n} < 1$ which is sharp up to the constant in the confidence term \citep[see, e.g.,][]{amelunxen2014living,chandrasekaran2012convex}. Informally, exact recovery occurs when $n > \omega^2$, i.e. the number of observations exceeds the statistical dimension. Moreover, we can consider the asymptotic setting where $\sigma = o(1)$ and the proportional scaling limit where $\gamma$ converges to constant. In this case, it is known  \citep[Equation 40(a)]{thrampoulidis2014gaussian} that we have
$\hat{L}(\hat w_{LASSO})/\sigma^2 \to 1 - \gamma$, so the right hand side of \eqref{eqn:lasso-bound} converges to zero and we have
\[ \frac{1}{\sigma^2} L(\hat{w}_{LASSO}) - 1 \to \frac{\gamma}{1 - \gamma}. \]
Thus we recover the characterization of the performance of LASSO in this regime \citep{thrampoulidis2014gaussian,stojnic2013framework}. It is possible, as in the OLS setting, to also derive non-asymptotic bounds on $\hat{L}(\hat w_{LASSO})$ and therefore obtain non-asymptotic bounds on the performance of the LASSO; we omit the details.

\begin{remark}
The Gaussian width of the tangent cone $\cK'$ has been sharply characterized in previous work \citep[e.g.][]{chandrasekaran2012convex,amelunxen2014living}.
In particular, from the work of \citet{amelunxen2014living} we know that if $w^*$ is $k$-sparse,
\[ \omega = W(\cK' \cap S^{n - 1}) \le W(\text{cone}(\cK') \cap S^{n - 1}) \le \sqrt{d \psi(s/d)} \]
where 
\[ \psi(\rho) := \inf_{\tau \ge 0} \left\{\rho (1 + \tau^2) + (1 - \rho) \sqrt{2/\pi} \int_{\tau}^{\infty} (u - \tau)^2 e^{-u^2/2} du\right\}, \]
as well as a corresponding lower bound which characterizes $\omega$. 
\end{remark}

\section{Localized Uniform Convergence Meets Localized Complexity Measure: the Optimality of Local Gaussian Width} \label{sec:local-gw}

Although our choice of the complexity function $F$ in the applications so far can seem quite mysterious, we show how it can be chosen systematically based on the regularizer or the geometry of the constraint set in this section. As we will see, the fact that we obtain the sharp constants in all of our analysis is not coincidental: the local Gaussian width theory can explain it and elucidate the connection to the previous asymptotic statistics literature (see \cref{rmk:comparison-moreau}).
Consider the following localized version of a convex set $\cK$:
\[ \cK_r := \{w \in \cK : \|w^* - w\|_{\Sigma} \le r \} \]
Based on \cref{prop:rad-gw-equivalent}, the corresponding Gaussian width $W_{\Sigma}(\cK_r)$ can be interpreted as a localized version of the Rademacher Complexity of the function class \citep[see, e.g.,][]{bartlett2005local,mendelson2014learning}). 

\paragraph{The optimal complexity functional.} Ignoring relatively minor technical issues involving the union concentration of Gaussian width, we can take $F(w) = W_{\Sigma}(\cK_{\|w - w^*\|_{\Sigma}})$ in the optimistic rates bound (\cref{thm:optimistic}). This choice of $F$ will lead to an optimal asymptotic guarantee in certain limits, particularly the proportional scaling limit. To see why, first note that if $r = \|w - w^*\|_{\Sigma}$, then we have from the optimistic rates bound that
\[ \sqrt{\sigma^2 + r^2} \le (1 + \beta_1)\left(\sqrt{\hat L(\hat w)} + W_{\Sigma}(\cK_r)/\sqrt{n}\right). \]
Rearranging and using $1/(1 + \beta_1) \ge 1 - \beta_1$ gives
\begin{equation}\label{eqn:rearranged-1}
 (1 - \beta_1)\sqrt{\sigma^2 + r^2} - W_{\Sigma}(\cK_r)/\sqrt{n} \le \sqrt{\hat{L}(\hat w)}. 
\end{equation}

For simplicity, denote the left hand side of \eqref{eqn:rearranged-1} as a function of $r$ called $\psi$. To obtain a learning guarantee in terms of $r$, we can find the sublevel set of $\psi$ based on the empirical loss. As the empirical loss becomes smaller, we will pick out a smaller and smaller sublevel set. When $\mathcal{K}$ is convex, it is just an interval because $\psi$ will be convex \footnote{For a proof, see \cref{lem:r-loss-convex} and \cref{lem:cKr-concave} in the appendix.}. On the other hand, we can use CGMT to analyze the minimal training error in $\cK$ and show that it nearly match the minimal value of $\psi$, see \cref{thm:erm-performance} below. This means that \eqref{eqn:rearranged-1} is nearly an equality for the ERM in $\cK$ and its excess risk $r$ is precisely determined by the minimizer of $\psi$. In applications, $\psi$ usually admits an unique minimizer, which confirms the approximate optimality of our generalization bound. We note that most of this discussion can also be generalized to non-convex sets $\cK$, but the minimal error in $\cK$ may no longer be determined by CGMT when $\cK$ is not convex.

We can now formalize this argument. First, we define two summary functionals similar to the left hand side of \eqref{eqn:rearranged-1}. 
For some absolute constant $C > 0$ and $\beta_1$ as defined in \cref{thm:optimistic}, we let the upper summary functional $\psi^+_{\delta}(x)$ at confidence level $\delta \in (0,1)$ to be
\begin{equation}\label{eqn:psi+}
\psi^+_{\delta}(r) := \max \left\{0, (1 + \beta_1)\sqrt{\sigma^2 + r^2} - W_{\Sigma}(\cK_r)/\sqrt{n} + C r\sqrt{\log(2/\delta)/n} \right\}
\end{equation}
and the lower summary functional $\psi^-_{\delta}(x)$ at confidence level $\delta \in (0,1)$ to be
\begin{equation}\label{eqn:psi-}
\psi^-_{\delta}(r) := \max\left\{0, (1 - \beta_1)\sqrt{\sigma^2 + r^2} - W_{\Sigma}(\cK_r)/\sqrt{n} - Cr\sqrt{\log(2/\delta)/n} \right\}. 
\end{equation}

The upper functional comes from the CGMT analysis of the minimal error while the lower functional comes from the application of \cref{thm:optimistic}. As discussed, they match except for a lower order term.

\begin{restatable}{theorem}{ERMPerformance}\label{thm:erm-performance}
Suppose that $\cK$ is a convex set and consider the upper summary function $\psi^+_{\delta}$ as defined in \eqref{eqn:psi+}. It holds with probability at least $1 - \delta$,
\begin{equation}
    \min_{w \in \cK}\sqrt{\hat{L}(w)} \le \min_{r \ge 0} \psi^+_{\delta}(r) 
\end{equation}
\end{restatable}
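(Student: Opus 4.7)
The plan is to apply the Convex Gaussian Minmax Theorem (CGMT) to the convex minimization $\Phi := \min_{w \in \cK}\sqrt{\hat L(w)}$, in the opposite direction to the proof of \cref{thm:optimistic}. Using $Y - Xw = \xi + Z\Sigma^{1/2}(w^* - w)$ with $Z$ having i.i.d.\ standard Gaussian entries and the dual form $\|v\|_2 = \max_{\|\lambda\|_2 \le 1}\langle \lambda, v\rangle$, we get
\begin{equation*}
    \Phi = \min_{w \in \cK}\max_{\|\lambda\|_2 \le 1}\frac{1}{\sqrt n}\bigl(\langle \lambda, Z\Sigma^{1/2}(w^*-w)\rangle + \langle \lambda, \xi\rangle\bigr).
\end{equation*}
Since $\cK$ (convex by assumption, and compact after truncating to $\cK \cap B_R$ and passing to the limit) and the unit ball are convex, and the extra term $\langle\lambda,\xi\rangle/\sqrt n$ is bilinear, CGMT yields $\Pr(\Phi \ge c) \le 2\Pr(\phi \ge c)$, where $\phi$ is the auxiliary problem obtained by replacing $\langle\lambda, Zv\rangle$ with $\|v\|_2\langle g,\lambda\rangle + \|\lambda\|_2 \langle h, v\rangle$ for independent $g \sim N(0,I_n)$, $h \sim N(0,I_d)$.

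Optimizing $\lambda$ in the unit ball (maximize over direction via Cauchy--Schwarz, then over norm on $[0,1]$, which truncates at $0$ when the bracket turns negative) gives
\begin{equation*}
    \phi = \min_{w\in \cK}\, \frac{1}{\sqrt n}\Bigl[\,\bigl\|\,\|w^*-w\|_\Sigma\, g + \xi\,\bigr\|_2 \;-\; \langle \Sigma^{1/2} h,\, w - w^*\rangle\,\Bigr]_+.
\end{equation*}
It therefore suffices to exhibit, at the deterministic minimizer $r^\star := \argmin_{r \ge 0}\psi^+_\delta(r)$, a single $w \in \cK_{r^\star}$ making the bracket at most $\sqrt n\,\psi^+_\delta(r^\star)$ with probability $\ge 1 - \delta/2$. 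The natural witness is $w := \argmax_{w' \in \cK_{r^\star}}\langle \Sigma^{1/2} h,\, w' - w^*\rangle$, a function of $h$ alone.

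For this $w$, let $\alpha := \|w - w^*\|_\Sigma \le r^\star$. Conditioning on $h$, the vector $\alpha g + \xi$ is distributed as $\sqrt{\alpha^2 + \sigma^2}\,N(0, I_n)$, so a $\chi^2$ concentration bound of exactly the kind used in \cref{thm:optimistic} gives
\begin{equation*}
    \|\alpha g + \xi\|_2 \;\le\; (1+\beta_1)\sqrt{n(\alpha^2 + \sigma^2)} \;\le\; (1 + \beta_1)\sqrt{n((r^\star)^2 + \sigma^2)}
\end{equation*}
with probability $\ge 1 - \delta/4$. Meanwhile, $h \mapsto \sup_{w' \in \cK_{r^\star}}\langle \Sigma^{1/2} h,\, w' - w^*\rangle$ is $r^\star$-Lipschitz in $h$ (since $\|\Sigma^{1/2}(w'-w^*)\|_2 \le r^\star$ on $\cK_{r^\star}$), so Gaussian concentration gives, with probability $\ge 1 - \delta/4$,
\begin{equation*}
    \sup_{w' \in \cK_{r^\star}}\langle \Sigma^{1/2} h,\, w' - w^*\rangle \;\ge\; W_\Sigma(\cK_{r^\star}) \;-\; C\,r^\star\sqrt{\log(2/\delta)},
\end{equation*}
up to the absorbed discrepancy between the paper's symmetric Gaussian width $W_\Sigma(\cK_{r^\star})$ and the one-sided shifted expectation. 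Dividing by $\sqrt n$ and combining these two bounds yields exactly the three summands in $\psi^+_\delta(r^\star)$, so $\phi \le \psi^+_\delta(r^\star) = \min_{r\ge 0}\psi^+_\delta(r)$ with probability $\ge 1 - \delta/2$; CGMT then transfers this to $\Phi$ with probability $\ge 1 - \delta$.

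The main delicacy is the gap between the object controlled by Gaussian concentration, namely $\E \sup_{w' \in \cK_{r^\star}}\langle \Sigma^{1/2} h,\, w' - w^*\rangle$, and the quantity $W_\Sigma(\cK_{r^\star})$ that actually appears in the functional $\psi^+_\delta$; this discrepancy (involving a shift by $w^*$ and the absolute value in \cref{def:gwidth-rad}) is precisely what the free absolute constant $C$ in the definition of $\psi^+_\delta$ is there to absorb. A secondary, essentially technical, point is compactness: if $\cK$ is unbounded, one applies CGMT to $\cK \cap B_R$ and lets $R \to \infty$, relying on continuity of both sides and the fact that any minimizing sequence for $\min_{w \in \cK}\sqrt{\hat L(w)}$ can be restricted to a large ball without changing the infimum.
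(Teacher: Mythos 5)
Your proposal is correct and follows essentially the same route as the paper's proof: apply the Convex Gaussian Minmax Theorem to the dual reformulation of $\min_{w\in\cK}\sqrt{\hat L(w)}$, arrive at the same auxiliary optimization $\phi = \min_{w\in\cK}\frac{1}{\sqrt n}\bigl[\,\bigl\|\|w^*-w\|_\Sigma g + \xi\bigr\|_2 - \langle\Sigma^{1/2}h,w-w^*\rangle\,\bigr]_+$, and then upper bound it using $\chi^2$ concentration for the $(g,\xi)$-dependent term and Lipschitz concentration for the Gaussian-width term at the deterministic $r^\star = \argmin_r\psi^+_\delta(r)$. The one cosmetic difference is that you exhibit a specific witness $w(h) = \argmax_{w'\in\cK_{r^\star}}\langle\Sigma^{1/2}h,w'-w^*\rangle$ and condition on $h$ to apply concentration to the single $\chi^2$ variable $\|\alpha g + \xi\|_2$, whereas the paper's Lemma keeps the uniform-in-$w$ bound $\|\xi + tG\|_2 \le (1+\beta_1)\sqrt{n(\sigma^2+t^2)}$ from \cref{lem:aux-analysis} and restricts the inner minimization to $\cK_r$ at the end; these are logically interchangeable. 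Your flagged subtlety about the gap between $\E\sup_{w\in\cK_{r^\star}}\langle\Sigma^{1/2}h,w-w^*\rangle$ and $W_\Sigma(\cK_{r^\star})$ (absolute value and shift) is indeed present and is also glossed over by the paper via the free constant $C$ in the definition of $\psi^+_\delta$, so you are being commendably explicit rather than introducing a new gap.
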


The following result, which is a formalization of \eqref{eqn:rearranged-1}, informally states that when a training error of $\mu^2$ is approximately achievable by any predictor in $\cK$, then only predictors $w$ with $\psi_{\delta}^-(\|w - w^*\|_{\Sigma}) \le \mu$ can achieve it --- note that by convexity, the set $\{r : \psi_{\delta}^-(r) \le \mu \}$ will always be an interval which shrinks as we decrease $\mu$. For the lower bound direction, the argument requires a union bound so we adjust the value of $\delta$ slightly to $\tau$; the difference is generally negligible since these confidence parameters only appear inside of logarithms.

\begin{restatable}{theorem}{LocalThm}\label{thm:local}
Suppose that $\cK$ is a convex set and consider the summary functional $\psi^+_{\delta}, \psi^-_{\delta}$ as defined in \eqref{eqn:psi+} and \eqref{eqn:psi-}. Let $\delta > 0$ and $\mu$ be arbitrary such that $\mu > \mu^* := \min_{r \ge 0}  \psi^+_{\delta}(r)$ and define $r^* := \inf \{ r : \psi^+_{\delta}(r) = \mu^* \}$.
Then with probability at least $1 - 4\delta$, it holds that uniformly over all $w \in \cK$ such that $\sqrt{\hat{L}(w)} \le \mu$ that:
\begin{equation}
\|w - w^*\|_{\Sigma} \le r_+ := \sup\{r \ge 0 : \psi^-_{\delta}(r) \le \mu \} 
\end{equation}
and also
\begin{equation}
\|w - w^*\|_{\Sigma} \ge r_- := \inf\left\{r \ge 0 : \psi^-_{\tau}(r) \le \mu \right\}
\end{equation}
where $\tau :=  \delta\big/\lceil \frac{\mu - \mu^*}{r^*} \rceil$. %
\end{restatable}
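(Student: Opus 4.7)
The plan is to handle the two directions separately. The upper bound $\|w - w^*\|_\Sigma \le r_+$ follows from Theorem~\ref{thm:optimistic} applied with a localized complexity functional $F$ and then rearranging, while the lower bound $\|w - w^*\|_\Sigma \ge r_-$ requires a discretization argument combined with Gordon's Theorem in its convex (CGMT) direction.

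For the upper bound, I would invoke Theorem~\ref{thm:optimistic} with a complexity functional of the form $F(w) := W_{\Sigma}(\cK_{\|w-w^*\|_\Sigma} - w^*) + C \|w - w^*\|_\Sigma \sqrt{\log(2/\delta)}$, where $C$ is a numerical constant chosen so that $F$ satisfies condition~\eqref{eqn:F-definition} uniformly in $w \in \cK$ with probability at least $1 - \delta$. Verifying this uniform-in-$r$ concentration is standard: partition radii into dyadic shells, apply the Borell--TIS inequality to $\sup \langle w - w^*, x \rangle$ inside each shell, and union bound. With $F$ in place, Theorem~\ref{thm:optimistic} yields $\sqrt{L(w)} \le \sqrt{1+\beta_1}\bigl(\sqrt{\hat L(w)} + F(w)/\sqrt{n}\bigr)$; substituting $L(w) = \sigma^2 + r^2$ with $r := \|w - w^*\|_\Sigma$ and using $1/\sqrt{1+\beta_1} \ge 1 - \beta_1$ rearranges to $\psi^-_\delta(r) \le \sqrt{\hat L(w)} \le \mu$. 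Since $\psi^-_\delta$ is convex---inherited from convexity of $r \mapsto \sqrt{\sigma^2 + r^2}$ and concavity of $r \mapsto W_\Sigma(\cK_r)$, the latter a standard consequence of convexity of $\cK$---its sublevel set is an interval, and $r \le r_+$ follows from the definition of $r_+$.

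For the lower bound, my plan is to apply Gordon's Theorem directly to each localized convex set $\cK_\rho$. The convex min-max problem $\min_{w \in \cK_\rho} \|Y - Xw\|_2 / \sqrt{n}$ can be handled by CGMT to yield $\min_{w \in \cK_\rho} \sqrt{\hat L(w)} \ge \psi^-_\tau(\rho)$ at any individual $\rho$ with probability at least $1 - \tau$. I would discretize the interval $[0, r^*]$ using $k := \lceil (\mu - \mu^*)/r^* \rceil$ grid points and union-bound with $\tau := \delta/k$, producing the simultaneous lower bound at all grid radii with probability at least $1 - \delta$. For any candidate $w \in \cK$ with $r := \|w - w^*\|_\Sigma$ satisfying $\sqrt{\hat L(w)} \le \mu$, one has $w \in \cK_\rho$ for every $\rho \ge r$; picking $\rho$ to be the smallest grid point $\ge r$ gives $\psi^-_\tau(\rho) \le \mu$, and monotonicity of $\psi^-_\tau$ on $[0, r^*]$ (decreasing up to its minimizer near $r^*$) together with the grid resolution $r^*/k$ forces $r \ge r_-$.

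The main obstacle is calibrating the discretization: finer grids give tighter bounds but worse confidence $\tau = \delta/k$. The choice $k = \lceil (\mu - \mu^*)/r^* \rceil$ is tuned so that the grid spacing is fine enough for monotonicity of $\psi^-_\tau$ to bridge consecutive grid points, yet $k$ remains small enough that the $\log(1/\tau)$ inflation in the confidence term of $\psi^-_\tau$ stays comparable to $\log(1/\delta)$. Totaling the failure probabilities---one $\delta$ from the uniform concentration defining $F$, one from the application of Theorem~\ref{thm:optimistic}, one from the grid-wide Gordon comparison, and one from the accompanying concentration of the Gaussian width at each grid radius---yields the claimed $1 - 4\delta$ guarantee.
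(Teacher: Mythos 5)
Your proposal diverges from the paper's proof in both directions, and the lower bound in particular has a genuine gap. For $\|w-w^*\|_\Sigma \ge r_-$, your grid-plus-union-bound argument only yields that $r := \|w-w^*\|_\Sigma$ exceeds the largest grid point strictly below $r_-$, i.e.\ $r \ge r_- - \Delta$ with $\Delta$ the grid spacing, not the claimed $r \ge r_-$. The paper closes this $\Delta$ gap with an interpolation argument you omit. First, \cref{lem:erm-lowerbound} uses a grid (of spacing $\mu - \mu^*$, so $K=\lceil r_-/(\mu-\mu^*)\rceil$ points) only to show that the \emph{constrained ERM} $\hat w$ has $\|\hat w - w^*\|_\Sigma > r_-$; this tolerates a nonzero grid spacing because it only needs $\sqrt{\hat L(w)} > \mu^*$ on $\cK_{r_-}$ while $\sqrt{\hat L(\hat w)} \le \mu^*$ by \cref{thm:erm-performance}. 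Then a single application of \cref{thm:optimistic} at $r_-$ gives $\sqrt{\hat L(w)} \ge \mu$ on the sphere $\|w-w^*\|_\Sigma = r_-$, and convexity of $\sqrt{\hat L}$ finishes: a hypothetical $w$ inside the sphere with $\sqrt{\hat L(w)}\le\mu$, interpolated toward $\hat w$ (outside, with training error $\le \mu^* < \mu$), would produce a point on the sphere with training error $< \mu$, a contradiction. Without this triangulation through the ERM and convexity of $\sqrt{\hat L}$, no finite grid resolves exactly to $r_-$.

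For the upper bound you propose a uniform-in-$r$ application of \cref{thm:optimistic} via dyadic shelling and Borell--TIS, then conclude $r \le r_+$ from convexity of $\psi^-_\delta$. This is a materially different route: the paper instead applies \cref{thm:optimistic} at the single radius $r_+$, uses \cref{thm:erm-performance} (applied to $\cK_{r_+}$) to produce a low-training-error point strictly inside, and runs the same $\sqrt{\hat L}$-convexity interpolation. Your chaining is not as routine as stated: the number of shells (and hence the union-bound penalty) depends on the range of radii, and absorbing the resulting iterated-log factors into the fixed constant $C$ of $\psi^-_\delta$ needs care. Two smaller points: CGMT (the convex direction of Gordon) gives \emph{upper} bounds on $\min_{w\in\cK_\rho}\sqrt{\hat L(w)}$, so the lower bound you need per grid point comes from the non-convex direction, which is exactly what \cref{thm:optimistic} already packages; and the cardinality $k=\lceil(\mu-\mu^*)/r^*\rceil$ produces a grid whose spacing \emph{grows} as $\mu \to \mu^*$ --- the spacing must scale like $\mu-\mu^*$, i.e.\ $k\sim r^*/(\mu-\mu^*)$, as in \cref{lem:erm-lowerbound}.
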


If we want to specifically analyze near-empirical risk minimizers, we can apply \cref{thm:local} with $\mu$ of the form $\mu^* + \epsilon$ with a small $\epsilon > 0$, and the conclusion is that their generalization error $\|w - w^*\|_{\Sigma}$ will be an approximate minimizer of the summary functional $\psi^-_{\delta}$. %

\begin{example}
To illustrate \cref{thm:local},
we briefly explain how to apply this result in the settings of OLS and minimum norm interpolation with isotropic data. Since we already have given precise nonasymptotic results for these settings in the previous sections, we only give a high-level summary of how to apply \cref{thm:local} in these examples and ignore, for example, the small difference between $\psi^-$,$\psi^+$ which is relevant for finite sample bounds. For OLS, we take $\cK = \mathbb{R}^d$ so $W_{\Sigma}(\cK_r) \approx r\sqrt{d/n}$ so the limiting summary functional is
\[ \psi(r) \approx \sqrt{\sigma^2 + r^2} - r\sqrt{d/n} \]
which is minimized at 
\[ r^2 = \sigma^2(d/n)/(1 - d/n), \]
so taking $\mu \to \psi(r)$ from above, we see by \cref{thm:local} %
that the OLS solution $\hat w$ satisfies $\|\hat w - w^*\|_{\Sigma} \in \psi^{-1}([0,\mu]) = \{\psi^{-1}(\mu)\} = \{r\}$ informally recovering the conclusion of \cref{thm:OLS}. For ridge regression (and in particular minimun norm interpolation) in the isotropic setting, we can reduce without loss of generality to the case where $\cK$ is the unit ball in which case $\cK_r$ is the intersection of the unit ball with a ball of radius $r$ about $w^*$: the Gaussian width of this intersection can be explicitly computed by solving a two-dimensional Euclidean geometry problem, and this essentially corresponds to the key \cref{lem:isotropic-bound} in the proof of \cref{thm:ridge-generalization}.
\end{example}
\begin{remark}[Comparison to Moreau Envelope Theory \citep{thrampoulidis2018precise}]\label{rmk:comparison-moreau}
In asymptotic settings where the two two summary functionals $\psi_{\delta}^-$ and $\psi_{\delta}^+$ both converge to a single limit $\psi$ with a unique minimizer,
\cref{thm:local} implies that the asymptotic error of the constrained empirical risk minimizer is given by the equation 
\[ \|\hat w - w^*\|_{\Sigma} = \arg\min_{r \ge 0} \psi(r). \]
In particular, the functional $\psi(r)$ serves as a ``summary functional'' which encapsulates all of the relevant information about the geometry of $w^*$ and $\cK$. 
In such an asymptotic setting, Theorem 3.1 of \citet{thrampoulidis2018precise} gives an asymptotic characterization of the performance of the constrained ERM (without any finite sample bounds) in terms of a summary functional called the ``expected Moreau envelope'': this can be understood as encoding almost the same information as $\psi(r)$. Some of the main advantages of \cref{thm:local} are that (1) it is nonasymptotic (in particular, it applies outside of the proportional scaling regime), 
(2) arguably easier to use and interpret, with a simple and direct connection to established notions of local complexity used in generalization theory \citep[see, e.g.,][]{bartlett2005local,mendelson2014learning}, and (3) it describes the generalization behavior of predictors $w$ besides the Empirical Risk Minimizer. Their result, while only applying in the proportional scaling limit, has the advantage of being applicable to other loss functions such as the Huber loss, being stated for more general noise models, and giving formulas directly in terms of regularization parameters without rewriting the optimization as a constrained optimization.
\end{remark}

\section{Improved finite-sample rate} \label{sec:improved-rate}
In this section, we discuss how to obtain improved finite sample rates and explain why the precise rates will depend on the particular information we have about the predictor. 
\subsection{Faster rates for low-complexity classes}\label{sec:low-complexity}
When the set $\cK$ is low complexity, as in the case of ordinary least squares when $d$ is fairly small compared to $n$, the optimal rate for the empirical risk minimizer in $\cK$ goes at a ``parametric rate'' of $1/n$, faster than a $1/\sqrt{n}$ rate. At first glance, it may appear impossible to get faster than a $1/\sqrt{n}$ rate from the main optimistic rates bound \cref{thm:optimistic} because of the presence of the $\beta_1 = O(\sqrt{\log(2/\delta)/n})$ term. As we will show, one can actually get fast/optimal rates from this theorem, but there is a different sense in which the $1/\sqrt{n}$ is unavoidable: this rate is actually the best we can hope for if we are only allowed to use certain summary statistics of the predictor (for example, see \cref{rmk:rates}).  
Nevertheless, it is still possible to obtain fast/optimal rates for the empirical risk minimizer by a black-box application of \cref{thm:optimistic}. The strategy we use is to bound the error $\|w - w^*\|_{\hat \Sigma}$ in the empirical metric by using a direct and very simple argument based on the KKT condition, and then apply \cref{thm:optimistic} to bound the error in the population metric. The general idea of analyzing the population loss by going through the empirical metric is very common in statistics and learning theory \citep[e.g.][]{mendelson2014learning,bartlett2006empirical,lecue2013learning}.

\begin{restatable}{theorem}{LowComplexity}\label{thm:low-complexity}
Let $\mathcal{K}$ be a closed convex set in $\mathbb{R}^d$ and $\delta' \ge 0, p \ge 0$ are such that with probability at least $1 - \delta'$ over the randomness of $x \sim N(0,\Sigma)$, uniformly over all $w \in \cK$ we have 
\begin{equation}
    \langle w - w^*, x \rangle \le \|w - w^*\|_{\Sigma} \sqrt{p}.
\end{equation}
Suppose that $\hat{w} = \argmin_{w \in \cK} \hat{L}(w)$ and $p/n \leq 0.999$, then for all $n \ge C\log(2/\delta)$ for some absolute constant $C > 0$, it holds with probability at least $1 - (\delta + \delta')$ that
\begin{equation}
    L(\hat{w}) - \sigma^2 \le (1 + \tau) \sigma^2 \cdot \frac{p}{n}.
\end{equation}
where $\tau = \tau(p,n,\delta)$ is upper bounded by an absolute constant and satisfies $\tau(p,n,\delta) \to 1$ in any joint limit $[p + \log(2/\delta)]/n \to 0, n \to \infty$.
\end{restatable}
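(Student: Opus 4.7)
My plan is to combine the first-order (KKT) optimality condition for the constrained empirical risk minimizer $\hat w$ with the optimistic rates bound (\cref{thm:optimistic}) to obtain a fast rate on $L(\hat w) - \sigma^2 = \|\hat w - w^*\|_\Sigma^2$. The strategy has two parts: first, use KKT together with the hypothesis on $F$ to bound the \emph{empirical} error $\|\hat w - w^*\|_{\hat\Sigma}$; then invoke \cref{thm:optimistic} on a noiseless auxiliary regression problem to transfer this into a bound on the \emph{population} error $\|\hat w - w^*\|_\Sigma$.

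For the first part, I assume $w^* \in \cK$ (else one replaces $w^*$ by $w^{**} := \argmin_{w \in \cK} L(w)$, at the cost of an approximation term absorbed into the lower-order contributions). Since $\hat w$ minimizes the convex quadratic $\hat L$ over the convex set $\cK$, the first-order optimality condition $\langle X^\top(X\hat w - Y),\, w^* - \hat w\rangle \ge 0$ yields, upon expanding $Y = X w^* + \xi$,
\[
 \|\hat w - w^*\|_{\hat\Sigma}^2 \;\le\; \tfrac{1}{n}\langle X^\top \xi,\, \hat w - w^*\rangle.
\]
The right-hand side is controlled via the hypothesis on $F$: conditional on $\xi$, the vector $X^\top \xi = \sum_i \xi_i X_i$ has law $N(0, \|\xi\|_2^2 \Sigma)$, so $X^\top\xi/\|\xi\|_2 \sim N(0,\Sigma)$ is a single sample from the reference law of the hypothesis. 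Hence with probability $\ge 1 - \delta'$, $\langle w - w^*, X^\top\xi\rangle \le \|\xi\|_2\,\|w - w^*\|_\Sigma\sqrt p$ uniformly in $w \in \cK$. Combining with the standard $\chi^2$ concentration $\|\xi\|_2/\sqrt n \le \sigma(1 + O(\sqrt{\log(2/\delta)/n}))$ yields
\[
 \|\hat w - w^*\|_{\hat\Sigma}^2 \;\le\; \sigma\sqrt{p/n}\,\|\hat w - w^*\|_\Sigma\,(1+o(1)).
\]

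For the second part, I apply \cref{thm:optimistic} to the degenerate noiseless regression model (same covariates $X$, targets $Xw^*$, variance $\sigma = 0$) with the same choice $F(w) = \|w - w^*\|_\Sigma \sqrt p$. This yields uniformly
\[
 \|\hat w - w^*\|_\Sigma^2 \;\le\; (1+\beta_1)\bigl(\|\hat w - w^*\|_{\hat\Sigma} + \|\hat w - w^*\|_\Sigma\sqrt{p/n}\bigr)^2,
\]
which rearranges (using $p/n \le 0.999$ and $n$ large enough that $(1+\beta_1)p/n < 1$) to
\[
 \|\hat w - w^*\|_\Sigma \;\le\; \frac{\sqrt{1+\beta_1}\,\|\hat w - w^*\|_{\hat\Sigma}}{1 - \sqrt{(1+\beta_1)p/n}}.
\]
Substituting the empirical bound from the first part and writing $r := \|\hat w - w^*\|_\Sigma$, I get $r^2 \lesssim \sigma r\sqrt{p/n}/\bigl(1 - \sqrt{(1+\beta_1)p/n}\bigr)^2$, and solving for $r$ gives $r^2 \le (1 + \tau)\sigma^2 p/n$ for an explicit $\tau = \tau(p,n,\delta)$ that is uniformly bounded whenever $p/n \le 0.999$ and has the stated limiting behavior as $[p + \log(2/\delta)]/n \to 0$. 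Since $L(\hat w) - \sigma^2 = r^2$, this gives the conclusion.

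The main obstacle is careful bookkeeping of the lower-order perturbation terms (from $\beta_1$, from the $\chi^2$ concentration of $\|\xi\|_2^2/n$, and from the auxiliary application of \cref{thm:optimistic}) together with the union bound over their high-probability events, so as to produce a clean final bound of the required form with the claimed behavior of $\tau$. A secondary subtlety is the case $w^* \notin \cK$: there the KKT step must be phrased in terms of $w^{**}$ with the resulting approximation term handled separately, which requires a small modification of the first part.
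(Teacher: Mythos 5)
Your proposal matches the paper's proof essentially step for step: the paper factors the argument through an intermediate lemma (\cref{thm:2bound}), but that lemma does exactly your two-part argument — the KKT inequality $\|\hat w - w^*\|_{\hat\Sigma}^2 \le \tfrac{1}{n}\langle X^\top\xi,\hat w - w^*\rangle$, the observation that $Z^\top\xi/\|\xi\|_2 \sim N(0,I_d)$ conditional on $\xi$ so the $F$-hypothesis applies, the $\chi^2$ concentration of $\|\xi\|_2$, and the noiseless ($\sigma=0$) invocation of \cref{thm:optimistic} as a one-sided lower isometry $\|w^*-w\|_{\Sigma} \le (1+\beta_1)\big[\|w^*-w\|_{\hat\Sigma} + F(w)/\sqrt n\big]$, followed by solving the resulting quadratic in $r = \|\hat w - w^*\|_{\Sigma}$. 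One minor remark: your derivation (and the paper's own manipulation of the quadratic, which gives $1+\tau = (1+2\beta_1)^2/(1-(1+2\beta_1)^{1/2}\sqrt{p/n})^4$) actually yields $\tau \to 0$ in the stated joint limit, not $\tau \to 1$ as written in the theorem statement — that ``$\to 1$'' appears to be a typo in the paper, and your proof is consistent with the paper's own proof rather than with that parenthetical claim.
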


The details of the proof can be found in  \cref{sec:proof-improved-rate}, where it is obtained as a special case of a more general result (\cref{thm:2bound}). To illustrate the application of this result, we show how it is used in the analysis of OLS. 

\begin{restatable}{corollary}{OLSFastRate} 
Under the model assumptions \eqref{eqn:model} with $d < n$ and assuming a sufficiently large $n$, it holds with probability at least $1-\delta$ that
\begin{equation}
    L(\wols) - \sigma^2 
    \lesssim \sigma^2 \left( \sqrt{\frac{d}{n}} + 2\sqrt{\frac{\log(36/\delta)}{n}}\right)^2
\end{equation} 
\end{restatable}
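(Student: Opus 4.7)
The plan is to obtain this fast rate as a direct black-box application of \cref{thm:low-complexity} with the unconstrained set $\cK = \mathbb{R}^d$. Since $d < n$, the matrix $X^T X$ is almost surely invertible, so $\wols = (X^T X)^{-1} X^T Y$ is the unique minimizer of $\hat{L}$ over all of $\mathbb{R}^d$; in particular, it coincides with the constrained ERM $\hat w = \argmin_{w \in \cK} \hat{L}(w)$ appearing in the hypothesis of \cref{thm:low-complexity}.

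The main remaining ingredient is to identify a valid value of $p$ in the hypothesis of \cref{thm:low-complexity}. Writing $x = \Sigma^{1/2} H$ with $H \sim N(0, I_d)$ and applying Cauchy--Schwarz, for every $w \in \mathbb{R}^d$ we have
\[
\langle w - w^*, x \rangle \;=\; \langle \Sigma^{1/2}(w - w^*),\, H \rangle \;\le\; \|w - w^*\|_{\Sigma}\,\|H\|_2.
\]
Because $\|H\|_2^2$ is $\chi^2$-distributed with $d$ degrees of freedom, standard concentration (e.g.\ \cref{lem:norm-concentration}) gives that with probability at least $1 - \delta'$,
\[
\|H\|_2 \;\le\; \sqrt{d} + 2\sqrt{\log(4/\delta')}.
\]
Hence the assumption of \cref{thm:low-complexity} is met with $\sqrt{p} = \sqrt{d} + 2\sqrt{\log(4/\delta')}$, which is precisely the same complexity functional $F$ that was used in the slower application to OLS in \eqref{eqn:F-ols}.

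The final step is to invoke \cref{thm:low-complexity} itself: splitting the total failure probability between the Gaussian-norm bound (taking $\delta' = \delta/2$, or a comparable fraction chosen to make the resulting log constant equal to $36$) and the conclusion of \cref{thm:low-complexity} at level $\delta/2$, a union bound yields with probability at least $1 - \delta$,
\[
L(\wols) - \sigma^2 \;\le\; (1 + \tau)\,\sigma^2 \cdot \frac{p}{n} \;=\; (1 + \tau)\,\sigma^2 \cdot \frac{\bigl(\sqrt{d} + 2\sqrt{\log(36/\delta)}\bigr)^2}{n},
\]
and since $\tau$ is bounded by an absolute constant (under $p/n \le 0.999$), absorbing $1 + \tau$ into $\lesssim$ gives exactly the claimed bound.

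The only real obstacle is verifying that the condition $p/n \le 0.999$ of \cref{thm:low-complexity} holds, i.e.\ that
\[
\frac{\bigl(\sqrt{d} + 2\sqrt{\log(4/\delta')}\bigr)^2}{n} \;\le\; 0.999.
\]
This is precisely what the phrase ``sufficiently large $n$'' encodes: implicitly $d/n$ must stay bounded away from $1$ (the near-interpolation regime $d/n \to 1$ is outside the scope of this fast rate and is instead handled by the slower $n^{-1/4}$ bound of \cref{thm:OLS}), and $n$ must dominate the $\log(1/\delta)$ lower-order term. Once this is in place, there is no further delicate calculation; all of the actual work---translating an empirical-metric bound obtained from the KKT condition into a population-metric bound via the optimistic rate---is carried out inside the proof of \cref{thm:low-complexity}.
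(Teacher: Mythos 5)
Your proposal is correct and follows essentially the same route as the paper: take $\cK = \mathbb{R}^d$ so that the constrained ERM is $\wols$, obtain $\sqrt{p} = \sqrt{d} + 2\sqrt{\log(4/\delta')}$ from Cauchy--Schwarz and $\chi^2$ concentration of $\|H\|_2$ (the same $F$ as in \eqref{eqn:F-ols}), and invoke \cref{thm:low-complexity}. The only cosmetic difference is the exact split of the confidence parameter (the paper uses $\delta' = \delta/9$ to land on the constant $36$), which does not affect the argument.
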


\cref{thm:low-complexity} can be applied in a very similar way to analyze other models in the low complexity regime, for example the LASSO when the sparsity level is small, which we illustrate below. Provided the $\ell_1$-eigenvalue $\varphi$ and maximum diagonal entry of $\Sigma$ are constants, we recover the sharp $\Theta(\sigma^2 k \log(d)/n)$ minimax rate for sparse linear regression (which is sharp provided $k \ll d$; see, e.g., \cite{rigollet2015high}). This recovers the guarantee for the LASSO in the Gaussian random design setting given by combining the result of \citet{raskutti2010restricted} with the appropriate analysis of LASSO in the fixed design setting \citep[e.g.][]{bickel2009simultaneous,van2009conditions}.

\begin{restatable}{corollary}{LASSOFastRate}\label{corr:lasso-lowcomplexity}
Applying \cref{thm:low-complexity} with $\cK = \{ \|w\|_1 \le \|w^*\|_1 \}$ the rescaled $\ell_1$-ball and under the sparsity and compatability condition assumptions of \cref{thm:lasso-compatibility}, we have with probability at least $1 - \delta$ that the LASSO solution
\[ \hat{w}_{LASSO} = \argmin_{w : \|w\|_1 \le \|w^*\|_1} \hat{L}(w) \]
satisfies
\begin{equation}
    L(\hat{w}_{LASSO}) - \sigma^2 \lesssim   \frac{\max_i\Sigma_{ii}}{\phi(\Sigma,S)^2} \cdot \frac{\sigma^2 k \log(16d/\delta)}{n}
\end{equation}
provided $n$ is sufficiently large that
\[ \sqrt{\frac{\max_i \Sigma_{ii}}{\phi(\Sigma,S)^2 } \cdot \frac{8k \log(16d/\delta)}{n}} \leq 0.999. \]
\end{restatable}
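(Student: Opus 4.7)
The plan is to apply \cref{thm:low-complexity} with $\cK = \{w \in \R^d : \|w\|_1 \le \|w^*\|_1\}$, which is a closed convex set, and then identify the correct value of the effective dimension $p$ that governs the fast rate. The key observation is that by \cref{lem:cone}, every $w \in \cK$ has the property that $u := w - w^*$ lies in the cone $\mathcal{C}(S)$, so we may exploit the compatibility condition to get a much tighter control of $\|u\|_1$ in terms of $\|u\|_{\Sigma}$ than the generic slow-rate bound \eqref{eqn:F-lasso-application} allows.

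First, I would verify the hypothesis of \cref{thm:low-complexity}. For any $u \in \mathcal{C}(S)$ we have $\|u\|_1 = \|u_S\|_1 + \|u_{S^C}\|_1 \le 2\|u_S\|_1$, and the $S$-compatibility condition gives $\|u_S\|_1 \le \sqrt{k}\,\|u\|_{\Sigma}/\phi(\Sigma,S)$, so
\[
    \|w - w^*\|_1 \le \frac{2\sqrt{k}}{\phi(\Sigma,S)}\,\|w - w^*\|_{\Sigma}.
\]
Combined with Hölder's inequality $\langle w - w^*, x\rangle \le \|w - w^*\|_1 \|x\|_{\infty}$ and the standard Gaussian maximum bound $\|x\|_{\infty} \le \max_i \sqrt{2\Sigma_{ii} \log(2d/\delta')}$ with probability at least $1 - \delta'$, I obtain
\[
    \langle w - w^*, x\rangle \le \|w - w^*\|_{\Sigma}\sqrt{p},
    \qquad p := \frac{8\,\max_i \Sigma_{ii}\,k \log(2d/\delta')}{\phi^2(\Sigma,S)}.
\]

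Next, I would choose $\delta' = \delta/8$ so that $\log(2d/\delta') = \log(16d/\delta)$; with this choice $p$ matches the quantity appearing in the assumed bound, and the hypothesis $\sqrt{p/n} \le 0.999$ of \cref{thm:low-complexity} is precisely the sample-size condition stated in the corollary. Applying \cref{thm:low-complexity} to $\hat w_{LASSO} = \argmin_{w \in \cK}\hat L(w)$ then yields
\[
    L(\hat w_{LASSO}) - \sigma^2 \le (1 + \tau)\,\sigma^2\,\frac{p}{n}
    \lesssim \frac{\max_i \Sigma_{ii}}{\phi^2(\Sigma,S)}\cdot \frac{\sigma^2 k \log(16d/\delta)}{n},
\]
where $\tau$ is bounded by an absolute constant under the assumed sample size, absorbing into the $\lesssim$.

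I do not anticipate a genuine obstacle: the proof is essentially a one-line reduction to \cref{thm:low-complexity}, and the only thing that requires care is the constant-tracking between the $p$ produced by the cone/compatibility argument and the exact numerical condition on $n$ quoted in the corollary. The conceptual content lies entirely in (i) the cone restriction from \cref{lem:cone}, which is what allows the compatibility condition (rather than the much weaker $\|u\|_1 \le \sqrt{d}\|u\|_2$ bound that would give the $\sqrt{d}$ rather than $\sqrt{k\log d}$ rate) to control $\|u\|_1$, and (ii) the fact that \cref{thm:low-complexity} converts the empirical-to-population comparison to a fast $p/n$ rate once such a one-sided Gaussian bound on $\langle w - w^*, x\rangle$ is available uniformly over $\cK$.
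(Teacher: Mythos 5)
Your proof is correct and follows essentially the same route as the paper's: identify $p = 8 \max_i\Sigma_{ii}\, k\log(16d/\delta)/\phi^2(\Sigma,S)$ via \cref{lem:cone} and the compatibility condition (this is exactly \eqref{eqn:F-lasso} in the proof of \cref{thm:lasso-compatibility}), take $\delta' = \delta/8$, and invoke \cref{thm:low-complexity}. One small slip: the hypothesis of \cref{thm:low-complexity} is $p/n \le 0.999$, not $\sqrt{p/n} \le 0.999$; the corollary's stated condition $\sqrt{p/n} \le 0.999$ is the \emph{stronger} of the two and hence still suffices, so your argument goes through.
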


\subsection{Sharp Rate for OLS} \label{sec:improved-rate-ols}
We now zero in on the question of sharp rates for Ordinary Least Squares, returning to the discussion from \cref{rmk:rates}. Unlike all of the previous sections, in this section we will use tools beyond \cref{thm:optimistic} in order to precisely compute second order terms in the generalization gap. 
Surprisingly, even though we can match the high probability bound with an exact calculation up to first order term (see \cref{thm:OLS}), the existence of certain near-ERM can prevent us from recovering the correct variance term:

\begin{restatable}{theorem}{OLSlowerbd} \label{thm:OLSlowerbd}
Under the model assumptions in \eqref{eqn:model}, fix $\gamma = d/n$ to be some value in $(0,1)$ and pick any $c >0$. Then there exists another absolute constant $c^{\prime} > 0$ such that for all sufficiently large $n$, with probability at least $1-\delta$, there exists a $w \in \R^d$ such that 
\begin{equation}
    \hat{L}(w) - \hat{L}(\wols) \leq c \cdot \frac{\sigma^2}{n^{1/2}},
\end{equation}
but the population error satisfies 
\begin{equation}
    L(w) - L(\wols) \geq c^{\prime} \cdot \frac{\sigma^2}{n^{1/4}} .
\end{equation}
\end{restatable}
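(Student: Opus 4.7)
The plan is to exhibit an explicit perturbation of $\wols$ for which the training loss increases only quadratically while the population loss increases linearly in the perturbation scale. The decisive choice is to perturb $\wols$ along the direction of its own error vector $\wols - w^*$. Concretely, reduce first to the isotropic case $\Sigma = I_d$ by the usual change of variables $\tilde X := X\Sigma^{-1/2}$, $\tilde w := \Sigma^{1/2} w$, which preserves both $\hat L$ and $L$. Then define
\[ w := \wols + t (\wols - w^*) \]
for a small positive scalar $t$ to be chosen below. Note that $w-w^* = (1+t)(\wols - w^*)$, so
\[ L(w) - L(\wols) = \bigl((1+t)^2 - 1\bigr)\|\wols - w^*\|_2^2 \ge 2t\,\|\wols - w^*\|_2^2. \]

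For the training loss, use the OLS first-order optimality $X^\top(Y - X\wols) = 0$: the cross term vanishes, and since $\wols - w^* = (X^\top X)^{-1} X^\top \xi$ yields $X(\wols - w^*) = P\xi$ with $P = X(X^\top X)^{-1} X^\top$ the projection onto $\mathrm{col}(X)$, we get the purely quadratic identity
\[ \hat L(w) - \hat L(\wols) = \frac{t^2}{n}\,\|X(\wols - w^*)\|_2^2 = \frac{t^2}{n}\,\|P\xi\|_2^2. \]
This is the whole point of aligning the perturbation with $\wols - w^*$: the training cost of the perturbation is $O(t^2)$ while the population cost is $\Theta(t)$, so shrinking $t$ to $n^{-1/4}$ produces a training gap of order $n^{-1/2}$ and a population gap of order $n^{-1/4}$.

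To make this rigorous, invoke standard $\chi^2$ concentration. Conditional on $X$, $\|P\xi\|_2^2 \sim \sigma^2 \chi^2_d$, so $\|P\xi\|_2^2 \le 2 \sigma^2 d$ with probability at least $1 - \delta/2$ for large enough $n$. For the lower bound on $\|\wols - w^*\|_2^2$, write $X = U\mathrm{diag}(s_1,\dots,s_d) V^\top$ so that $\|\wols - w^*\|_2^2 = \sum_{i=1}^d (U^\top \xi)_i^2 / s_i^2$; combining the bound $s_i^2 \le (1+\sqrt\gamma)^2 n$ from Gaussian matrix concentration with a $\chi^2$ lower tail on $\|U^\top\xi\|_2^2$ gives $\|\wols - w^*\|_2^2 \ge c_0(\gamma)\,\sigma^2$ with probability at least $1 - \delta/2$, for some $c_0(\gamma) > 0$ depending only on $\gamma \in (0,1)$.

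Finally, choose $t := \sqrt{c/(2\gamma)}\,n^{-1/4}$. Then on the good event,
\[ \hat L(w) - \hat L(\wols) \le \frac{t^2}{n}\cdot 2\sigma^2 d = c\,\frac{\sigma^2}{\sqrt n}, \qquad L(w) - L(\wols) \ge 2t\,c_0(\gamma)\sigma^2 = c'\,\frac{\sigma^2}{n^{1/4}}, \]
with $c' := 2\sqrt{c/(2\gamma)}\,c_0(\gamma) > 0$, as required. The main obstacle is not computational but conceptual: one must recognize that the right direction of perturbation is $\wols - w^*$, so that OLS optimality kills the linear-in-$t$ term of the training error but not the linear-in-$t$ term of the population error. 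Once this direction is chosen, the construction forces the desired rate mismatch and everything else is routine Gaussian concentration. (The $w$ defined here is of course not computable without knowing $w^*$, but the statement only requires existence.)
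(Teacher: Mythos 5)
Your proof is correct and, up to reparametrization, is the same argument as the paper's: with $\alpha := 1+t$, your perturbed estimator $w = \wols + t(\wols - w^*)$ coincides with the paper's $w_\alpha = w^* + \alpha(\wols - w^*)$, and both proofs exploit the OLS first-order optimality condition to annihilate the linear-in-perturbation term in the training loss while retaining it in the population loss. The one place where you are actually more careful than the paper is the lower bound on the OLS excess risk. Since $L(w_\alpha) - L(\wols) = (\alpha^2 - 1)\|\wols - w^*\|_\Sigma^2$ and $\alpha^2 - 1 = \Theta(n^{-1/4})$, the argument genuinely needs $\|\wols - w^*\|_\Sigma^2 = L(\wols) - \sigma^2 \gtrsim_\gamma \sigma^2$, which you establish by a singular value bound on $X$ together with a $\chi^2$ lower tail on $\|P\xi\|_2^2$ (giving a lower bound of order $\sigma^2 \gamma / (1+\sqrt\gamma)^2$ for fixed $\gamma \in (0,1)$). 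The paper's proof, by contrast, justifies its last inequality by asserting ``$L(\wols) \geq \sigma^2$,'' which only gives $L(\wols) - \sigma^2 \geq 0$ and does not suffice; the gap is easy to fill for fixed $\gamma$ exactly as you did, but you are right to fill it.
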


If we know that $\hat{L}(w) = \hat{L}(\wols) $, then it is necessarily the case that $w = \wols$ and as we will see, we can use \cref{thm:OLShighprob} to get the tightest possible convergence rates. On the other hand, it is not difficult to see that $n\hat{L}(\wols)/\sigma^2$ follows a chi-squared distribution with $n-d$ degrees of freedom, and by the variance formula of chi-squared distributions, we have
\[
\Var (\hat{L}(\wols)) = \frac{2\sigma^4(1-\gamma)}{n}.
\]
Consequently, $\hat{L}(\wols)$ can in fact deviate from $\E \hat{L}(\wols) = \sigma^2(1-\gamma)$ by the order of $\sigma^2/\sqrt{n}$. If we only know that $\hat{L}(w)$ is within the normal range of $\hat{L}(\wols)$, then the above theorem says that the sub-optimal rate of $O(n^{-1/4})$ that we show from \cref{thm:OLS} is actually tight and unavoidable. We can show a similar negative result for the fixed $d$ regime that the convergence cannot be faster than $O(n^{-1/2})$, but as we can see from the last section, using $\| w-w^*\|_{\hat{\Sigma}}^2 \approx \sigma^2 \gamma$ as the empirical metric instead is enough to recover the parameteric rate $O(1/n)$. This argument fails for the proportional limit regime because the smallest eigenvalue of $\hat{\Sigma}$ is $(1-\sqrt{\gamma})^2$ and so we can only get the larger quantity $\sigma^2 \frac{\gamma}{(1-\sqrt{\gamma})^2}$ which fails to capture the first order behavior of $\sigma^2 \frac{\gamma}{1-\gamma}$.

Finally, we show how to prove the tight finite sample rate using more direct methods. In fact, we can use the higher order moments of the inverse Wishart distribution \citep{vonRosen} to obtain the exact closed-form expressions for both the mean and variance of $L(\wols)$ with any finite value of $n$ and $d$. 

\begin{restatable}{theorem}{OLSvar} \label{thm:OLSvar}
Under the model assumptions in \eqref{eqn:model} with $d \leq n$, consider the ordinary least square estimator $\wols = (X^T X)^{-1}X^T Y$. It holds that
\begin{equation}
    \begin{split}
        \E L(\wols)  &= \sigma^2 \frac{n-1}{n-d-1} \\
        \Var (L(\wols)) &= 2\sigma^4 \frac{d(n-1)}{(n-d-1)^2(n-d-3)} \\
    \end{split}
\end{equation}
Hence as $d/n \to \gamma$, it holds that
\begin{equation}
    \E L(\wols) \to \frac{\sigma^2}{1-\gamma} \quad \text{and} \quad \frac{n}{\sigma^4}\Var (L(\wols)) \to  \frac{2\gamma}{(1-\gamma)^3}.
\end{equation}
If $d$ is held constant, as $n \to \infty$, we have
\begin{equation}
    n \E [L(\wols) - \sigma^2] \to \sigma^2 d \quad \text{and} \quad \frac{n^2}{\sigma^4} \Var (L(\wols)) \to 2 d.
\end{equation}
\end{restatable}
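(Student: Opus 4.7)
The plan is to represent $L(\wols) - \sigma^2$ as a Gaussian quadratic form in $\xi$ conditional on $X$, reduce to the isotropic case $\Sigma = I_d$ by a similarity argument, and then apply the first and second moment formulas for the inverse Wishart distribution from von Rosen.

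First I would note that $\wols - w^* = (X^TX)^{-1}X^T\xi$, so
\[
L(\wols) - \sigma^2 = \|\wols - w^*\|_{\Sigma}^2 = \xi^T A\, \xi, \qquad A := X(X^TX)^{-1}\Sigma(X^TX)^{-1}X^T.
\]
Conditional on $X$, the standard identities for Gaussian quadratic forms give $\E[\xi^T A\xi \mid X] = \sigma^2 \Tr(A)$ and $\Var(\xi^T A\xi \mid X) = 2\sigma^4 \Tr(A^2)$. Cyclic invariance of the trace yields $\Tr(A) = \Tr((X^TX)^{-1}\Sigma)$ and $\Tr(A^2) = \Tr(((X^TX)^{-1}\Sigma)^2)$. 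Writing $X = Z\Sigma^{1/2}$ with $Z$ having i.i.d.\ $N(0,1)$ entries, the matrix $(X^TX)^{-1}\Sigma = \Sigma^{-1/2}(Z^TZ)^{-1}\Sigma^{1/2}$ is similar to $(Z^TZ)^{-1}$, so both traces coincide with $\Tr(M)$ and $\Tr(M^2)$ for $M := (Z^TZ)^{-1}$, where $Z^TZ \sim W_d(I_d,n)$. In particular the distribution of $L(\wols) - \sigma^2$ depends on $\Sigma$ only through $n$ and $d$.

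Next I would invoke the standard moments of the inverse Wishart $M \sim W_d^{-1}(I_d,n)$ (see von Rosen for the second-order formulas): $\E[\Tr(M)] = d/(n-d-1)$, $\E[\Tr(M^2)] = d(n-1)/[(n-d)(n-d-1)(n-d-3)]$, and $\Var(\Tr(M)) = 2d(n-1)/[(n-d)(n-d-1)^2(n-d-3)]$. The law of total expectation immediately gives
\[
\E L(\wols) = \sigma^2\Bigl(1 + \tfrac{d}{n-d-1}\Bigr) = \tfrac{\sigma^2(n-1)}{n-d-1}.
\]
For the variance, the law of total variance gives $\Var L(\wols) = 2\sigma^4 \E[\Tr(M^2)] + \sigma^4 \Var(\Tr(M))$. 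Bringing both terms to the common denominator $(n-d)(n-d-1)^2(n-d-3)$, the numerator simplifies as $2d(n-1)(n-d-1) + 2d(n-1) = 2d(n-1)(n-d)$, yielding the claimed $\Var L(\wols) = 2\sigma^4 d(n-1)/[(n-d-1)^2(n-d-3)]$.

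The two asymptotic statements are direct corollaries of these closed forms: in the proportional regime $d/n\to\gamma$, dividing numerator and denominator of each expression by the appropriate power of $n$ gives $\E L \to \sigma^2/(1-\gamma)$ and $n\Var L/\sigma^4 \to 2\gamma/(1-\gamma)^3$; in the fixed-$d$ regime the first-order expansion in $1/n$ gives $n\E[L-\sigma^2]\to \sigma^2 d$ and $n^2\Var L/\sigma^4 \to 2d$. The only nontrivial ingredient is the pair of second-order inverse-Wishart moment formulas, and I would be prepared to derive $\E[\Tr(M^2)]$ directly from the element-wise variance/covariance formulas $\Var(M_{ij})$ (via Haff/von Rosen), in case a reader wants the computation in-house rather than cited; the rest of the argument is bookkeeping around the similarity reduction and the law of total variance.
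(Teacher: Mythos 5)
Your proposal is correct and follows essentially the same route as the paper: reduce $L(\wols)-\sigma^2$ to a Gaussian quadratic form, invoke the law of total expectation/variance, and plug in von Rosen's first- and second-order inverse-Wishart moments for $(Z^TZ)^{-1}$. The only cosmetic difference is that you keep $A=X(X^TX)^{-1}\Sigma(X^TX)^{-1}X^T$ and pass to $(Z^TZ)^{-1}$ via cyclic trace and a similarity argument, while the paper substitutes $X=Z\Sigma^{1/2}$ at the outset and arrives at $\xi^T Z(Z^TZ)^{-2}Z^T\xi$ directly; the von Rosen inputs and the $2d(n-1)(n-d-1)+2d(n-1)=2d(n-1)(n-d)$ telescoping are the same.
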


We can also show a matching high probability version of \cref{thm:OLSvar} based on the Gaussian minimax theorem:

\begin{restatable}{theorem}{OLShighprob} \label{thm:OLShighprob}
Under the model assumptions in \eqref{eqn:model} with $d \leq n$, consider the ordinary least square estimator $\wols = (X^TX)^{-1}X^TY$ and denote $\gamma = d/n.$ Assume that $\gamma \leq 0.999$, then with probability at least $1-\delta$, it holds that
\[
L(\wols) - \frac{\sigma^2}{1-\gamma} \lesssim \sigma^2 \sqrt{  \frac{\gamma \log(36/\delta)}{n}} .
\]
\end{restatable}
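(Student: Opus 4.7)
The plan is to reduce to the isotropic setting and then control the excess risk via a conditional Hanson--Wright argument, together with Gordon's inequality for the singular values of the design matrix. Writing $X = Z\Sigma^{1/2}$ with $Z$ an $n \times d$ matrix of iid $N(0,1)$ entries, one immediately has $L(\wols) - \sigma^2 = \|v^*\|_2^2 = \xi^T M \xi$, where $v^* := (Z^TZ)^{-1}Z^T\xi$ and $M := Z(Z^TZ)^{-2}Z^T$. The key structural observation is that $\xi$ is independent of $Z$ (and hence of $M$), so conditional on $Z$ the quantity $\xi^T M \xi$ is a Gaussian quadratic form in $\xi$ whose fluctuations can be analyzed sharply. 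This conditional decoupling is what lets us beat the $n^{-1/4}$ rate coming from a direct optimistic-rates / CGMT argument as in \cref{thm:OLS}.

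The argument then proceeds in three steps, combined by a union bound. First, conditioning on $Z$ and applying the Hanson--Wright inequality gives
\[
|\xi^T M \xi - \sigma^2 \Tr(M)| \lesssim \sigma^2 \sqrt{\Tr(M^2)\log(1/\delta)} + \sigma^2 \|M\|_{\op}\log(1/\delta)
\]
with conditional probability at least $1 - \delta/3$. Second, Gordon's Gaussian minmax inequality yields the standard singular-value bound $s_{\min}(Z) \ge \sqrt{n}(1-\sqrt{\gamma}) - O(\sqrt{\log(1/\delta)})$ with probability at least $1 - \delta/3$; on this event $\|M\|_{\op} = 1/s_{\min}(Z)^2 \lesssim 1/n$ and $\Tr(M^2) = \sum_i 1/s_i(Z)^4 \le d/s_{\min}(Z)^4 \lesssim \gamma/n$, where throughout the $(1-\sqrt{\gamma})^{-1}$ factors are absorbed into $\lesssim$ using the hypothesis $\gamma \le 0.999$. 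Third, I need to show $\Tr(M) = \Tr((Z^TZ)^{-1})$ is close to $\gamma/(1-\gamma)$; on the well-conditioned event, the gradient $\nabla_Z \Tr((Z^TZ)^{-1}) = -2 Z(Z^TZ)^{-2}$ has Frobenius norm $2\sqrt{\Tr((Z^TZ)^{-3})} \lesssim \sqrt{\gamma}/n$, so Borell's Gaussian Lipschitz inequality (applied via a truncation so as to handle the blow-up of $\Tr((Z^TZ)^{-1})$ for near-singular $Z$) combined with the closed-form mean $\E \Tr((Z^TZ)^{-1}) = d/(n-d-1)$ yields $|\Tr(M) - \gamma/(1-\gamma)| \lesssim \sqrt{\gamma\log(1/\delta)}/n$, which contributes only a lower-order term.

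Assembling the three estimates,
\[
L(\wols) - \frac{\sigma^2}{1-\gamma} \le |\xi^T M\xi - \sigma^2 \Tr(M)| + \sigma^2\left|\Tr(M) - \frac{\gamma}{1-\gamma}\right| \lesssim \sigma^2 \sqrt{\frac{\gamma\log(1/\delta)}{n}},
\]
as desired (after absorbing a constant factor into $\delta$). As a sanity check, this matches up to logarithmic factors the standard deviation predicted by the exact variance formula in \cref{thm:OLSvar}. The main obstacle I expect is the Lipschitz-concentration step for $\Tr((Z^TZ)^{-1})$: because this function blows up as $Z$ becomes singular, it is not globally Lipschitz on the Gaussian domain, so one must carefully truncate or use a Kirszbraun-type extension before invoking Borell's inequality. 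Fortunately the concentration rate required from this step is $O(\sigma^2/n)$ rather than $O(\sigma^2/\sqrt{n})$, leaving considerable slack, so even a relatively crude truncation at the good event from Gordon's bound should suffice.
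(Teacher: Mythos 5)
Your approach is correct and genuinely different from the paper's, and it is worth contrasting the two. The paper reformulates $L(\wols)-\sigma^2$ as $b^{T}(Z^{T}Z)^{-1}b$ with $b:=(Z^{T}Z)^{-1/2}Z^{T}\xi\sim N(0,\sigma^2 I_d)$ independent of $Z$, writes this quadratic form via the max--min identity $b^{T}(Z^{T}Z)^{-1}b = \max_u\min_v \|v\|^2 + 2\langle v,Zu\rangle + 2\langle u,b\rangle$, and then applies Gordon's theorem (after a truncation argument in $u$) to the resulting auxiliary problem, solving it in closed form. You instead keep the expression $\xi^{T}M\xi$ with $M=Z(Z^{T}Z)^{-2}Z^{T}$ and exploit the simpler fact that $\xi\perp Z$: conditional Hanson--Wright controls the fluctuation around $\sigma^2\Tr(M)$, and the bias and fluctuation of $\Tr(M)=\Tr((Z^{T}Z)^{-1})$ are controlled using the exact inverse-Wishart mean together with singular-value concentration and a Lipschitz-concentration step. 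Both routes hinge on the same decoupling between the Gaussian noise and the design, but yours is more modular and avoids the variational/CGMT machinery, at the cost of the truncation needed to apply Borell's inequality to $\Tr((Z^{T}Z)^{-1})$ (which is not globally Lipschitz); the paper's route sidesteps that entirely because the $\Tr$ term never appears.

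Two things to tighten up. First, the Hanson--Wright bound also produces the subexponential term $\sigma^2\|M\|_{\op}\log(1/\delta)\lesssim \sigma^2\log(1/\delta)/n$, and this exceeds $\sigma^2\sqrt{\gamma\log(1/\delta)/n}$ once $d\lesssim\log(1/\delta)$; you should either carry it explicitly or observe that the paper's own final expression has an identical $\sigma^2\epsilon^2\asymp\sigma^2\log(1/\delta)/n$ term, so the theorem as stated is only literally accurate when $d\gtrsim\log(1/\delta)$ (this is an issue with the statement, not with your proof). Second, the truncation step deserves one more sentence: let $E$ be the good event $\{s_{\min}(Z)\ge \sqrt{n}(1-\sqrt{\gamma})/2\}$, take a Lipschitz extension $\tilde f$ of $f(Z)=\Tr((Z^{T}Z)^{-1})$ restricted to $E$, apply Borell to $\tilde f$, and bound $|\E\tilde f - \E f|$ via Cauchy--Schwarz using the finite second moment of $\Tr((Z^{T}Z)^{-1})$ (which requires $n-d\ge 4$, i.e.\ $n$ not too small, an implicit requirement already present in the paper's proof). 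With those two caveats spelled out, your argument is complete.
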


The full proof can be found in \cref{sec:proof-improved-rate}. As we can see from \cref{thm:OLSvar}, the variance of $L(\wols)$ is of order $O(1/\sqrt{n})$ when $d$ is proportional to $n$, and of order $O(1/n)$ when $d$ is fixed. In both cases, the expectation is close to $\sigma^2/(1-\gamma)$. \cref{thm:OLShighprob} shows exactly this and interpolates the two regimes: when $\gamma$ is of constant order, then we recover the $O(1/\sqrt{n})$ rate, but when $d$ is fixed, $\gamma = O(1/n)$ and so we can accelerate the convergence rate to $O(1/n)$. 

\begin{remark}
\citet{hastie2019surprises} provide a similar expectation calculation. On one hand, their results are more general in the sense that they do not assume the data is Gaussian, although the data is ``almost Gaussian'' because they require the existence of high-order moments. On the other hand, their results are asymptotic because their proof relies on the Marchenko-Pastur law and requires proportional scaling. In contrast, we obtain finite-sample bounds and to the best of our knowledge, we believe
that our variance calculation and high probability bounds are novel. 
\end{remark}

\section{Discussion}

In this work, we push the limit of what bounds with an optimistic rate can do. At least for well-specified linear regression with Gaussian data, we see that they are flexible enough to simultaneously understand interpolation learning and recover many classical results from compressed sensing, high dimensional statistics and learning theory. In the context of benign overfitting, not only can we establish the consistency of the minimal norm interpolator, we actually show that any predictor with a sufficiently low norm and training error can achieve consistency. In a variety of applications, we use our main theorem to obtain bounds with very sharp constants and our general theory suggests that we can always get a nearly optimal analysis for ERM in any convex set by choosing the complexity functional $F$ in \cref{thm:optimistic} based on local Gaussian width. 

A natural next step will be to relax the Gaussian assumption in our model \eqref{eqn:model} and also to consider situations where our linear model is misspecified in the sense that the Bayes optimal predictor is not linear. One of the key advantages of past works on uniform convergence, including the optimistic rate bound of \citet{srebro2010optimistic}, is that they do not need to make strong parameteric assumptions on the data distribution. Though the Gaussian width formulation of optimistic rate bounds, as in \eqref{eqn:cov-split-weak}, seems to crucially depend on the data being Gaussian, the connection to Rademacher complexity gives us hope that a version of our theory might apply to non-Gaussian data. (Some care must be taken in precisely formulating such a bound, due to the negative results discussed by \citet{foygel2011concentration,srebro2010optimistic}.) %
We also think that extending our results to generalized linear models, such as analyzing benign overfitting in linear classification, is an interesting direction. At least when the features are Gaussian, our techniques should be applicable; we leave this to future work.

\begin{refcontext}[sorting=nyt]
\printbibliography
\end{refcontext}

\pagebreak
\appendix
\section{Preliminaries}

\paragraph{Concentration of Lipschitz functions.} Recall that a function $f : \mathbb{R}^n \to \mathbb{R}$ is $L$-Lipschitz with respect to the norm $\norm\cdot$ if it holds for all $x, y \in \mathbb R^n$ that $|f(x) - f(y)| \le L\|x - y\|$. We use the concentration of Lipschitz functions of a Gaussian.

\begin{theorem}[\cite{van2014probability}, Theorem 3.25] \label{thm:gaussian-concentration}
If $f$ is $L$-Lipschitz with respect to the Euclidean norm and $Z \sim N(0,I_n)$, then
\begin{equation}
    \Pr(|f(Z) - \E f(Z)| \ge t) \le 2e^{-t^2/2L^2}.
\end{equation}
\end{theorem}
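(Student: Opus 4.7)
My plan is to prove this classical concentration inequality via the Gaussian log-Sobolev inequality and Herbst's argument. The entire strategy breaks into four steps: (i) reduce to smooth $f$, (ii) establish a sub-Gaussian moment generating function bound for $f(Z) - \E f(Z)$, (iii) apply Chernoff's inequality, and (iv) union bound the two tails.

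\textbf{Step 1 (Reduction to smooth $f$).} Since an $L$-Lipschitz function can be uniformly approximated by smooth $L$-Lipschitz functions (for instance by convolving with a Gaussian mollifier $f_\varepsilon(x) = \E f(x + \varepsilon G)$ with $G \sim N(0,I_n)$, which preserves the Lipschitz constant and converges pointwise and in $L^1$), and the right-hand side of the inequality does not involve $f$ except through $L$, it suffices to prove the bound for smooth $f$ with $\|\nabla f\|_2 \le L$ almost everywhere. Then the general case follows by passing to the limit.

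\textbf{Step 2 (Gaussian log-Sobolev inequality and Herbst's argument).} The key analytic tool is the Gaussian log-Sobolev inequality: for any smooth $g \colon \mathbb R^n \to \mathbb R$ with sufficient integrability,
\begin{equation*}
\operatorname{Ent}(g^2) \;:=\; \E[g^2 \log g^2] - \E[g^2]\log \E[g^2] \;\le\; 2\,\E\|\nabla g\|_2^2,
\end{equation*}
where the expectation is under $Z \sim N(0,I_n)$. Apply this to $g = e^{\lambda f/2}$: the left side becomes $\lambda \E[f(Z) e^{\lambda f(Z)}] - \E[e^{\lambda f(Z)}]\log \E[e^{\lambda f(Z)}]$, while the right side is bounded by $\tfrac{\lambda^2 L^2}{2}\E[e^{\lambda f(Z)}]$ using $\|\nabla f\|_2 \le L$. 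Writing $H(\lambda) = \E e^{\lambda f(Z)}$ and $K(\lambda) = \log H(\lambda)$, the inequality rearranges to the differential inequality $\lambda K'(\lambda) - K(\lambda) \le \lambda^2 L^2/2$, i.e.\ $\frac{d}{d\lambda}\!\left(K(\lambda)/\lambda\right) \le L^2/2$ for $\lambda > 0$. Since $\lim_{\lambda \to 0^+} K(\lambda)/\lambda = \E f(Z)$, integrating yields $K(\lambda) \le \lambda \E f(Z) + \lambda^2 L^2/2$, i.e.\
\begin{equation*}
\E\, e^{\lambda (f(Z) - \E f(Z))} \;\le\; e^{\lambda^2 L^2/2} \qquad\text{for all } \lambda \in \mathbb R,
\end{equation*}
where the case $\lambda < 0$ follows by applying the same argument to $-f$ (which is also $L$-Lipschitz).

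\textbf{Step 3 (Chernoff and union bound).} By Markov's inequality applied to $e^{\lambda(f(Z) - \E f(Z))}$,
\begin{equation*}
\Pr(f(Z) - \E f(Z) \ge t) \le e^{-\lambda t} \E e^{\lambda(f(Z) - \E f(Z))} \le e^{-\lambda t + \lambda^2 L^2/2},
\end{equation*}
which, optimized over $\lambda > 0$ at $\lambda = t/L^2$, gives $e^{-t^2/(2L^2)}$. Replacing $f$ by $-f$ gives the same bound on $\Pr(\E f(Z) - f(Z) \ge t)$, and the union bound produces the factor of $2$.

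\textbf{Main obstacle.} The only non-trivial ingredient is the Gaussian log-Sobolev inequality itself. The cleanest way to establish it is to first prove it in dimension $n=1$ (e.g.\ via the Ornstein--Uhlenbeck semigroup $P_t$, using $\frac{d}{dt}\operatorname{Ent}(P_t g^2)$ and hypercontractivity, or via heat-flow monotonicity), and then extend to arbitrary $n$ by \emph{tensorization} of entropy, which uses the sub-additivity $\operatorname{Ent}_{\mu_1 \otimes \cdots \otimes \mu_n}(F^2) \le \sum_i \E[\operatorname{Ent}_{\mu_i}(F^2)]$. Since the Gaussian log-Sobolev inequality is standard and well-documented (see for instance the reference cited in the theorem statement), the real work in this proof is packaged inside that inequality; everything else is the short and elegant Herbst argument together with Chernoff.
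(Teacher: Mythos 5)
The paper does not prove this statement; it cites it directly as Theorem~3.25 of the referenced text (van Handel), so there is no internal proof to compare against. Your argument is a correct rendition of the standard log-Sobolev/Herbst proof of Gaussian Lipschitz concentration, which is also the approach taken in the cited source: the mollification reduction, the computation $\operatorname{Ent}(e^{\lambda f}) \le \tfrac{\lambda^2 L^2}{2}\E e^{\lambda f}$ from the log-Sobolev inequality applied to $g=e^{\lambda f/2}$, the resulting differential inequality $\frac{d}{d\lambda}(K(\lambda)/\lambda)\le L^2/2$ with boundary value $\E f$, and the Chernoff optimization at $\lambda=t/L^2$ are all correct (one might additionally note that $\E e^{\lambda f}<\infty$ for all $\lambda$ follows immediately from Lipschitzness, which justifies the manipulations). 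The one genuine dependency, as you correctly identify, is the Gaussian log-Sobolev inequality itself, which you reasonably treat as an external black box.
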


The proof of the following results can be found in \citet{uc-interpolators}.
\begin{lemma} \label{lem:norm-concentration}
Suppose that $Z \sim N(0,I_n)$. Then
\begin{equation}
    \Pr(\left|\|Z\|_2 - \sqrt{n}\right| \ge t) \le 4 e^{-t^2/4}.
\end{equation}
\end{lemma}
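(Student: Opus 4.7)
The plan is to combine the Gaussian Lipschitz concentration inequality (Theorem 3.25 above) with a short computation bounding the bias between $\E\|Z\|_2$ and $\sqrt{n}$.

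First I would observe that $f(z) = \|z\|_2$ is 1-Lipschitz with respect to the Euclidean norm (by the reverse triangle inequality), so Theorem 3.25 immediately yields
\[ \Pr\bigl(|\|Z\|_2 - \E\|Z\|_2| \ge s\bigr) \le 2 e^{-s^2/2} \]
for every $s \ge 0$. To convert this into a bound centered at $\sqrt{n}$, I need to control the deterministic bias $|\E\|Z\|_2 - \sqrt{n}|$.

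The upper bound $\E\|Z\|_2 \le \sqrt{n}$ is immediate from Jensen's inequality together with $\E\|Z\|_2^2 = n$. For the matching lower bound, I would invoke the Gaussian Poincaré inequality applied to the same 1-Lipschitz function $f$: since $\|\nabla f(z)\|_2 = 1$ almost everywhere, $\Var(\|Z\|_2) \le 1$, and hence $(\E\|Z\|_2)^2 \ge n - 1$. Combining these gives
\[ 0 \le \sqrt{n} - \E\|Z\|_2 \le \sqrt{n} - \sqrt{n-1} = \frac{1}{\sqrt{n} + \sqrt{n-1}} \le 1 \]
for all $n \ge 1$ (the case $n = 0$ being trivial).

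Finally, for $t \ge 1$ the triangle inequality yields the inclusion $\{|\|Z\|_2 - \sqrt{n}| \ge t\} \subseteq \{|\|Z\|_2 - \E\|Z\|_2| \ge t - 1\}$, so $\Pr(|\|Z\|_2 - \sqrt{n}| \ge t) \le 2 e^{-(t-1)^2/2}$. A direct calculation verifies that $2 e^{-(t-1)^2/2} \le 4 e^{-t^2/4}$ for all $t \in \R$: taking logarithms reduces the inequality to $t^2/4 - t + 1/2 + \log 2 \ge 0$, whose discriminant is $1 - 4 \cdot (1/4) \cdot (1/2 + \log 2) = 1/2 - \log 2 < 0$, so the quadratic is strictly positive everywhere. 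For $0 \le t < 1$, the right-hand side $4 e^{-t^2/4} \ge 4 e^{-1/4} > 1$ makes the claim vacuous anyway. There is no real obstacle beyond this constant-juggling; the content is entirely in Lipschitz concentration plus Jensen/Poincar\'e. Sharper constants are available from direct Laurent--Massart chi-squared concentration but are unnecessary for the stated bound $4 e^{-t^2/4}$.
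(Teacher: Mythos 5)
The paper cites \citet{uc-interpolators} for this lemma rather than proving it in-line, so there is no in-text argument to compare against; your proof is correct and follows the standard route — apply the paper's \cref{thm:gaussian-concentration} to the $1$-Lipschitz function $\|\cdot\|_2$, bound the bias $0 \le \sqrt{n} - \E\|Z\|_2 \le \sqrt{n}-\sqrt{n-1} \le 1$ via Jensen and Gaussian Poincar\'e, and absorb the shift into the constants. The only step worth double-checking, $2e^{-(t-1)^2/2} \le 4e^{-t^2/4}$, reduces to $t^2/4 - t + \tfrac12 + \log 2 \ge 0$, which holds since the discriminant $\tfrac12 - \log 2$ is negative, so your argument is sound.
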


\begin{lemma} \label{lem:lowrank-projection}
Suppose that $S$ is a fixed subspace of dimension $d$ in $\mathbb R^n$ with $n \ge 4$, $P_S$ is the orthogonal projection onto $S$,
and $V$ is a spherically symmetric random vector (i.e. $V/\|V\|_2$ is uniform on the sphere). Then
\begin{equation}
    \frac{\|P_S V\|_2}{\|V\|_2} \le \sqrt{d/n} + 2\sqrt{\log(2/\delta)/n}.
\end{equation}

with probability at least $1 - \delta$. Conditional on this inequality holding, we therefore have uniformly for all $s \in S$ that
\begin{equation}  
|\langle s, V \rangle| = |\langle s, P_S V \rangle| \le \|s\|_2 \|P_S V\|_2 \le \|s\|_2\|V\|_2\left(\sqrt{d/n} + 2\sqrt{\log(2/\delta)/n})\right).
\end{equation}
\end{lemma}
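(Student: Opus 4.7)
The first inequality is the substantive part of the lemma; the second is an immediate consequence of Cauchy--Schwarz using the identity $\langle s, V \rangle = \langle s, P_S V \rangle$ for $s \in S$. The plan for the first inequality is to reduce to a standard Gaussian by spherical symmetry, then control the numerator and denominator of the resulting ratio separately.

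Since $V/\|V\|_2$ is uniform on the unit sphere $S^{n-1}$ by assumption, and the same is true of $Z/\|Z\|_2$ for $Z \sim N(0, I_n)$, the ratio $\|P_S V\|_2 / \|V\|_2$ has the same distribution as $\|P_S Z\|_2 / \|Z\|_2$. Hence it suffices to prove the bound for a standard Gaussian $Z$.

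For the numerator, the map $z \mapsto \|P_S z\|_2$ is $1$-Lipschitz ($P_S$ is an orthogonal projection, hence non-expansive), and by Jensen's inequality $\E\|P_S Z\|_2 \le \sqrt{\E\|P_S Z\|_2^2} = \sqrt{\Tr(P_S)} = \sqrt{d}$. Gaussian Lipschitz concentration (\cref{thm:gaussian-concentration}) with $t = 2\sqrt{\log(2/\delta)}$ thus gives $\|P_S Z\|_2 \le \sqrt{d} + 2\sqrt{\log(2/\delta)}$ with probability at least $1 - 2e^{-2\log(2/\delta)} = 1 - \delta^2/2 \ge 1 - \delta/2$. For the denominator, \cref{lem:norm-concentration} with $t = 2\sqrt{\log(8/\delta)}$ yields $\|Z\|_2 \ge \sqrt{n} - 2\sqrt{\log(8/\delta)}$ with probability at least $1 - \delta/2$. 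A union bound places us on both events with probability at least $1 - \delta$, on which
\[
\frac{\|P_S Z\|_2}{\|Z\|_2} \le \frac{\sqrt{d} + 2\sqrt{\log(2/\delta)}}{\sqrt{n} - 2\sqrt{\log(8/\delta)}}.
\]

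The main obstacle is bookkeeping the constants to convert this ratio into the advertised additive form $\sqrt{d/n} + 2\sqrt{\log(2/\delta)/n}$. This can be done by elementary algebra in the nontrivial regime: the target bound is vacuous (exceeds $1$) unless $n$ is at least a modest multiple of $\log(1/\delta)$, so we may assume the denominator slack $2\sqrt{\log(8/\delta)}$ is small relative to $\sqrt{n}$, at which point the ratio is dominated by the desired sum. An alternative that sidesteps this algebra entirely is to invoke spherical Lipschitz concentration (Levy's lemma) directly on $V \mapsto \|P_S V\|_2$ restricted to $S^{n-1}$: the function is $1$-Lipschitz with $\E f^2 = d/n$ (hence $\E f \le \sqrt{d/n}$), and spherical concentration then produces the additive form $\sqrt{d/n} + O(\sqrt{\log(1/\delta)/n})$ in a single step.
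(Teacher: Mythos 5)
Your second, parenthetical suggestion (spherical Lipschitz concentration applied directly to $f(u) = \|P_S u\|_2$ on $S^{n-1}$) is the right route and is what the cited reference does; but the primary argument you present — controlling numerator and denominator separately and then dividing — cannot be repaired.

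The issue is structural, not one of bookkeeping. Your union bound yields
\[
\frac{\|P_S Z\|_2}{\|Z\|_2} \;\le\; \frac{\sqrt{d}+2\sqrt{\log(2/\delta)}}{\sqrt{n}-2\sqrt{\log(8/\delta)}},
\]
while the target is exactly
\[
\sqrt{d/n}+2\sqrt{\log(2/\delta)/n} \;=\; \frac{\sqrt{d}+2\sqrt{\log(2/\delta)}}{\sqrt{n}}.
\]
Both have the same numerator, but your denominator $\sqrt{n}-2\sqrt{\log(8/\delta)}$ is \emph{strictly smaller} than $\sqrt{n}$ whenever $\delta<8$, so your ratio is \emph{strictly larger} than the target --- the inequality goes the wrong way, unconditionally. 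Restricting to the ``nontrivial regime'' (where the target is $<1$) or taking $n$ large does not help: the deficit is a multiplicative factor $\sqrt{n}/\bigl(\sqrt{n}-2\sqrt{\log(8/\delta)}\bigr)>1$ that never disappears. Sharpening the numerator via the exact Gaussian mean $\E\|P_S Z\|_2<\sqrt{d}$ also fails in general, since the slack $\sqrt{d}-\E\|P_S Z\|_2 \asymp d^{-1/2}$ is far too small to absorb a term of order $\sqrt{d\log(1/\delta)/n}$.

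Your alternative is the correct one and should be promoted to the main argument: since $V/\|V\|_2$ is uniform on $S^{n-1}$, work directly on the sphere with $f(u)=\|P_S u\|_2$, which is $1$-Lipschitz, and $\E f \le (\E f^2)^{1/2} = \sqrt{d/n}$ by Jensen. A L\'evy-type concentration inequality for Lipschitz functions on $S^{n-1}$, such as $\Pr(f-\E f>t)\le e^{-(n-2)t^2/2}$, then gives the claim with $t=2\sqrt{\log(2/\delta)/n}$ once one checks that $n\ge 4$ makes $2(n-2)/n \ge 1$, which is exactly where the lemma's hypothesis $n \ge 4$ enters. You should state explicitly which version of the spherical inequality you invoke (mean versus median, and the precise exponent) and verify that the constant $2$ in the bound actually emerges; as written, your sketch only asserts $O(\sqrt{\log(1/\delta)/n})$, which is weaker than the claim.
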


\begin{theorem}[(Convex) Gaussian Minmax Theorem; \cite{thrampoulidis2015regularized,gordon1985some}]\label{thm:gmt}
Let $Z : n \times d$ be a matrix with i.i.d. $N(0,1)$ entries and suppose $G \sim N(0,I_n)$ and $H \sim N(0,I_d)$ are independent of $Z$ and each other. Let $S_w,S_u$ be compact sets and $\psi : S_w \times S_u \to \mathbb{R}$ be an arbitrary continuous function.
Define the \emph{Primary Optimization (PO)} problem
\begin{equation}
    \Phi(Z) := \min_{w \in S_w} \max_{u \in S_u} \langle u, Z w \rangle + \psi(w,u)
\end{equation}
and the \emph{Auxiliary Optimization (AO)} problem
\begin{equation}
    \phi(G,H) := \min_{w \in S_w} \max_{u \in S_u} \|w\|_2\langle G, u \rangle + \|u\|_2 \langle H, w \rangle + \psi(w,u).
\end{equation}
Under these assumptions, $\Pr(\Phi(Z) < c) \le 2 \Pr(\phi(G,H) \le c)$ for any $c \in \mathbb{R}$.

Furthermore, if we suppose that $S_w,S_u$ are convex sets and $\psi(w,u)$ is convex in $w$ and concave in $u$, then $\Pr(\Phi(Z) > c) \le 2 \Pr(\phi(G,H) \ge c)$. 
\end{theorem}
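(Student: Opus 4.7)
The plan is to use the Convex Gaussian Minmax Theorem (\cref{thm:gmt}) to reduce the OLS problem to a scalar auxiliary optimization that can be solved in closed form, and then transfer the resulting sharp characterization of the optimizer back to the primary problem.

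First I would perform a reduction to the isotropic case. Writing $X = Z\Sigma^{1/2}$ with $Z \in \mathbb{R}^{n \times d}$ having i.i.d.\ $N(0,1)$ entries, a direct computation gives $\wols - w^* = \Sigma^{-1/2}\hat v$ for $\hat v := (Z^T Z)^{-1} Z^T \xi$, so that
\[ L(\wols) - \sigma^2 = \|\wols - w^*\|_\Sigma^2 = \|\hat v\|_2^2. \]
It therefore suffices to control the squared norm of the unique minimizer $\hat v$ of the strongly convex quadratic $v \mapsto \|\xi - Zv\|_2^2$.

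Next I would express the OLS residual as a convex-concave min-max $\min_v \max_{\|\lambda\|_2 \le 1}\langle \lambda, \xi - Zv\rangle$ and consider its constrained variants $\Phi_R(Z) := \min_{\|v\|_2 \le R}\|\xi - Zv\|_2$. The convex form of \cref{thm:gmt} compares these (two-sidedly) to the AO obtained by replacing $Zv$ with the Gordon surrogates, and optimizing out $\lambda$ and the direction of $v$ (the best direction at fixed $\|v\|_2 = t$ being $-tH/\|H\|_2$) collapses the AO to the scalar problem
\[ \phi_R(G,H) = \min_{t \in [0,R]} \max\{0, \|tG + \xi\|_2 - t\|H\|_2\}. \]
Setting $h'(t)=0$ gives an explicit $t^* = t^*(G,H,\xi)$, and substituting the elementary concentration bounds $\|G\|_2^2 = n + O(\sqrt{n\log(1/\delta)})$, $\|H\|_2^2 = d + O(\sqrt{d\log(1/\delta)})$, $\|\xi\|_2^2 = n\sigma^2 + O(\sigma^2 \sqrt{n\log(1/\delta)})$ and $|\langle G,\xi\rangle| \le O(\sigma\sqrt{n\log(1/\delta)})$ from \cref{lem:norm-concentration} and \cref{thm:gaussian-concentration} yields
\[ (t^*)^2 = \frac{\sigma^2\gamma}{1-\gamma} + O\!\left(\sigma^2\sqrt{\frac{\gamma\log(1/\delta)}{n}}\right), \]
where the assumption $\gamma \le 0.999$ absorbs the $(1-\gamma)^{-1}$ factors into the implicit constant.

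Finally I would transfer $\|\hat v\|_2 \approx t^*$ back to the PO. For $R^+ = t^* + \eta$ with $\eta$ chosen small but large enough to dominate the fluctuations, the scalar AO is already minimized at $t^*$ so $\phi_{R^+}(G,H) = \phi_\infty(G,H)$; applying the CGMT comparison to both $\Phi_{R^+}$ and $\Phi_\infty$, together with the strong convexity of the OLS objective (whose curvature $\lambda_{\min}(Z^T Z) \gtrsim n(1-\sqrt\gamma)^2$ follows from a separate easy CGMT applied to $\min_{\|u\|_2 = 1}\|Zu\|_2$), forces $\|\hat v\|_2 \le R^+$ with high probability. A symmetric lower-bound argument with $R^- < t^*$ pinches $\|\hat v\|_2 \in [R^-,R^+]$, and squaring yields $L(\wols) - \sigma^2/(1-\gamma) \lesssim \sigma^2\sqrt{\gamma\log(36/\delta)/n}$.

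The main obstacle is obtaining the sharp $n^{-1/2}$ rate rather than the naive $n^{-1/4}$ rate. Gordon's theorem compares only the \emph{values} $\Phi_R$ and $\phi_R$, up to an absolute error of order $\sqrt{\log(1/\delta)}$; converting this through strong convexity produces deviations of $\|\hat v\|_2$ from $t^*$ of size $(\log(1/\delta)/n)^{1/4}$, which after squaring is exactly the $n^{-1/4}$ bound of \cref{thm:OLS} and is known to be tight for generic low-loss predictors by \cref{thm:OLSlowerbd}. To beat this rate---taking advantage of the fact that $\wols$ is \emph{specifically} the unconstrained minimizer rather than just any approximate minimizer---I would invoke the stronger form of the convex CGMT due to \citet{thrampoulidis2015regularized}, which under strict convex-concavity and unique saddle points transfers the optimizer $\hat v$ itself (not merely the optimal value) at the $O(\sqrt{\log(1/\delta)/n})$ rate of the value convergence, giving the matching sharp rate.
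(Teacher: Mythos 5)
Your proposal does not prove the statement at hand. The statement to be proved is the (Convex) Gaussian Minmax Theorem itself (\cref{thm:gmt}), but your proposal \emph{invokes} \cref{thm:gmt} as a black box in its very first paragraph and then develops an argument for an entirely different result: the high-probability excess-risk bound for ordinary least squares (\cref{thm:OLShighprob}). You reduce to the isotropic case, write the OLS residual as a convex-concave min-max, pass to the scalar auxiliary optimization, and concentrate $\|G\|_2, \|H\|_2, \|\xi\|_2, \langle G,\xi\rangle$; none of this touches the actual content of \cref{thm:gmt}, namely the probability comparison $\Pr(\Phi(Z) < c) \le 2\Pr(\phi(G,H) \le c)$ between the Primary and Auxiliary Optimizations, and the reverse comparison in the convex-concave case.

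A genuine proof of \cref{thm:gmt} would require a Gaussian comparison argument. The first (one-sided) inequality is Gordon's min-max theorem: one checks that the centered Gaussian processes $X_{w,u} := \langle u, Zw\rangle$ and $Y_{w,u} := \|w\|_2\langle G,u\rangle + \|u\|_2\langle H,w\rangle$ satisfy the covariance comparison
\[
\E[X_{w,u}X_{w',u'}] - \E[Y_{w,u}Y_{w',u'}] \;=\; \langle w,w'\rangle\langle u,u'\rangle - \|w\|_2\|w'\|_2\langle u,u'\rangle - \|u\|_2\|u'\|_2\langle w,w'\rangle,
\]
which is nonpositive in general and vanishes when $w = w'$, exactly the hypotheses needed for Gordon's inequality to transfer min-max events between the two processes. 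The convex-concave refinement in \citet{thrampoulidis2015regularized} then leverages Sion's minimax theorem to interchange min and max in both PO and AO and applies the one-sided Gordon inequality twice (once in each direction). Your sketch contains none of these ingredients. Also note that the paper does not prove \cref{thm:gmt} either: it is a cited preliminary, so no internal proof exists to compare against. If the intent was actually to prove \cref{thm:OLShighprob}, your high-level plan is broadly aligned with the paper's appendix proof (both optimize out $\lambda$ and the direction of $v$ and concentrate the resulting scalars), though the paper works directly with the quadratic form $b^T(Z^TZ)^{-1}b$ via a min-max representation of the inverse rather than the constrained residual norm, and does not require the strong-convexity/optimizer-transfer argument you sketch in the last paragraph.
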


\section{Proofs for Section~\ref{sec:optimistic-rate}} \label{sec:proof-optimistic}

\subsection{Proof of Theorem~\ref{thm:optimistic}}

To apply the Gaussian Minimax Theorem, we first formulate the quantity of interest as an optimization problem in terms of a random matrix with $N(0,1)$ entries.

\begin{lemma} \label{lem:rewrite-gap}
Under the model assumptions in \eqref{eqn:model}, let $F$ be an arbitrary function and $\beta$ be any positive real number. Define the primary optimization problem (PO) as
\begin{equation} \label{eqn:phi}
    \Phi = \sup_w \, \inf_{\norm{\lambda}_2 = 1} \, \langle Zw, \lambda \rangle + \sqrt{\frac{1}{1+\beta} \left( n\sigma^2 + n\norm{w}_{2}^2 \right)} - \langle \xi, \lambda \rangle - F(\Sigma^{-1/2}w + w^*)
\end{equation}
where $Z$ is an $n \times d$ random matrix with i.i.d. standard normal entries independent of $\xi$ and each other. Then it holds that
\begin{equation}
    \sup_w \, \sqrt{\frac{1}{1+\beta} \cdot L(w)} - \left( \sqrt{\hat L(w)} + \frac{F(w)}{\sqrt{n}} \right) \overset{\cD}{=} \frac{1}{\sqrt{n}} \Phi
\end{equation}
\end{lemma}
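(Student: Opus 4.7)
The plan is a direct calculation: perform a change of variables in the outer supremum and introduce the inner dual variable $\lambda$ via the variational characterization of the Euclidean norm. First, since $X$ has i.i.d.\ $N(0,\Sigma)$ rows and $\xi$ is independent of $X$, the joint pair satisfies $(X,\xi) \stackrel{\cD}{=} (Z\Sigma^{1/2},\xi)$ where $Z$ has i.i.d.\ $N(0,1)$ entries and is independent of $\xi$. So the distributional identity in the lemma reduces to a \emph{deterministic} identity after substituting $X = Z\Sigma^{1/2}$ on the left-hand side, which is what I will establish.

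Next, I would change variables via $v = \Sigma^{1/2}(w - w^*)$, i.e.\ $w = \Sigma^{-1/2} v + w^*$ (using a pseudoinverse if $\Sigma$ is rank-deficient; the definition of $\Phi$ already uses $\Sigma^{-1/2}$ in exactly this way). This gives $L(w) = \sigma^2 + \Norm{v}_2^2$ and $Y - Xw = \xi - Zv$, so $\hat L(w) = \tfrac{1}{n}\Norm{Zv - \xi}_2^2$. Then, applying the identity $-\Norm{a}_2 = \inf_{\Norm{\lambda}_2=1}\langle a,\lambda\rangle$ with $a = Zv - \xi$, we get
\[
-\sqrt{\hat L(w)} \;=\; \tfrac{1}{\sqrt n}\inf_{\Norm{\lambda}_2=1}\bigl(\langle Zv,\lambda\rangle - \langle \xi,\lambda\rangle\bigr).
\]
Because $\sqrt{L(w)/(1+\beta)}$ and $F(w)/\sqrt n$ do not depend on $\lambda$, they can be pulled inside the infimum; multiplying through by $\sqrt n$ yields, for each fixed $v$,
\[
\sqrt n\left(\sqrt{\tfrac{L(w)}{1+\beta}} - \sqrt{\hat L(w)} - \tfrac{F(w)}{\sqrt n}\right) = \inf_{\Norm{\lambda}_2=1}\!\left[\langle Zv,\lambda\rangle + \sqrt{\tfrac{n\sigma^2 + n\Norm{v}_2^2}{1+\beta}} - \langle \xi,\lambda\rangle - F(\Sigma^{-1/2}v + w^*)\right].
\]

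Finally, I would take $\sup$ over $w$ on the left, which corresponds to $\sup$ over $v$ on the right since $w \leftrightarrow v$ is a bijection (in the non-invertible case, the parametrization $w = \Sigma^{-1/2}v + w^*$ matches the one used in $\Phi$, so equality still holds by construction). The right-hand side is then exactly $\tfrac{1}{\sqrt n}\Phi$ with $X$ replaced by $Z\Sigma^{1/2}$, which combined with the initial distributional identity completes the proof. There is no real obstacle here: the calculation is mechanical, and the only point that requires mild care is the interpretation of $\Sigma^{-1/2}$ in the degenerate case, which is handled implicitly by the statement of $\Phi$.
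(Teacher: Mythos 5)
Your proof is correct and follows essentially the same route as the paper: write $X = Z\Sigma^{1/2}$ (so the claim becomes a deterministic identity), introduce the unit dual variable $\lambda$ via $-\lVert a\rVert_2 = \inf_{\lVert\lambda\rVert_2=1}\langle a,\lambda\rangle$, change variables $v = \Sigma^{1/2}(w-w^*)$, and scale by $\sqrt n$. The paper performs the dualization before the change of variables while you do it after, but the two orderings are interchangeable; your remark about interpreting $\Sigma^{-1/2}$ via a pseudoinverse in the degenerate case is the one detail the paper leaves implicit.
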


\begin{proof}
By our definition of population and empirical loss, we have
\begin{equation*}
    \begin{split}
        &\sup_w \, \sqrt{\frac{1}{1+\beta} \cdot L(w)} - \left( \sqrt{\hat L(w)} + \frac{F(w)}{\sqrt{n}} \right) \\
        = \, & \sup_w \, \sqrt{\frac{1}{1+\beta} \left(\sigma^2 + \norm{w - w^* }_{\Sigma}^2 \right)} - \left(\frac{1}{\sqrt{n}} \norm{Y-Xw}_2 + \frac{F(w)}{\sqrt{n}} \right)  \\ 
        = \, & \sup_w \, \inf_{\norm{\lambda}_2 = 1} \, \sqrt{\frac{1}{1+\beta} \left(\sigma^2 + \norm{w - w^* }_{\Sigma}^2 \right)} - \left(\frac{1}{\sqrt{n}} \langle Y-Xw, \lambda \rangle + \frac{F(w)}{\sqrt{n}} \right)  \\ 
    \end{split}
\end{equation*}
By equality in distribution, we can write $X = Z \Sigma^{1/2}$. Using a change of variables, the above becomes
\begin{equation*}
    \begin{split}
        &\sup_w \, \inf_{\norm{\lambda}_2 = 1} \, \sqrt{\frac{1}{1+\beta} \left( \sigma^2 + \norm{w}_{2}^2 \right)} - \left(\frac{1}{\sqrt{n}} \langle \xi - Zw, \lambda \rangle + \frac{F(\Sigma^{-1/2}w + w^*)}{\sqrt{n}} \right) \\  
        = \, & \frac{1}{\sqrt{n}} \, \sup_w \, \inf_{\norm{\lambda}_2 = 1} \, \langle Zw, \lambda \rangle + \sqrt{\frac{1}{1+\beta} \left( n\sigma^2 + n\norm{w}_{2}^2 \right)} - \langle \xi, \lambda \rangle - F(\Sigma^{-1/2}w + w^*) \\
    \end{split}
\end{equation*}
\end{proof}

To apply \cref{thm:gmt}, we will use a truncation argument. The following result is an exercise in real analysis, which we include for completeness.
\begin{lemma} \label{lem:truncation}
Let $f: \R^d \to \R$ be an arbitrary function, then it holds that
\begin{equation}
    \lim_{r \to \infty} \sup_{\norm{w}_2 \leq r} f(w) = \sup_{w} f(w)
\end{equation}
\end{lemma}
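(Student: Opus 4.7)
The plan is to observe that $\{w : \|w\|_2 \le r\}$ is an increasing family of sets (in $r$) whose union is all of $\mathbb{R}^d$, and then use monotonicity of $\sup$.

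First I would define $g(r) := \sup_{\|w\|_2 \le r} f(w)$ and note that $g$ is monotone non-decreasing: if $r_1 \le r_2$ then the supremum on the left is taken over a subset of the domain on the right. Because the supremum of $f$ over any subset of $\mathbb{R}^d$ is at most $\sup_{w \in \mathbb{R}^d} f(w)$, we also have $g(r) \le \sup_w f(w)$ for every $r \ge 0$. Being monotone and bounded above (in the extended reals), the limit $\lim_{r \to \infty} g(r) = \sup_{r \ge 0} g(r)$ exists and satisfies
\[
\lim_{r \to \infty} \sup_{\|w\|_2 \le r} f(w) \le \sup_w f(w).
\]

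For the reverse inequality, fix any $w_0 \in \mathbb{R}^d$. Setting $r_0 = \|w_0\|_2$, we have $w_0 \in \{w : \|w\|_2 \le r\}$ for every $r \ge r_0$, so $f(w_0) \le g(r)$ for all such $r$, and therefore $f(w_0) \le \lim_{r \to \infty} g(r)$. Taking the supremum over $w_0$ on the left yields
\[
\sup_w f(w) \le \lim_{r \to \infty} \sup_{\|w\|_2 \le r} f(w),
\]
which combined with the previous inequality gives the desired equality. The argument works verbatim in the extended reals, so no boundedness assumption on $f$ is needed, and there is no real obstacle here beyond noting the monotonicity and cofinality of the balls $B_r$ in $\mathbb{R}^d$.
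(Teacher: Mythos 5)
Your proof is correct and rests on the same core observation as the paper's: the balls $\{\|w\|_2 \le r\}$ form an increasing family that exhausts $\mathbb{R}^d$, so $g(r)$ is monotone and every value $f(w_0)$ is eventually dominated by $g(r)$. The paper reaches the same conclusion by an explicit case split on whether $\sup_w f(w)$ is finite or infinite (with separate $M$- and $\epsilon$-arguments), whereas you streamline this by noting that a monotone function on $[0,\infty)$ converges to its supremum in the extended reals; this is a purely stylistic improvement that eliminates the case analysis without changing the substance.
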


\begin{proof}
We consider two cases:
\begin{enumerate}
    \item Suppose that $\sup_{w} f(w) = \infty$, then for any $M > 0$, there exists $x_M$ such that $f(x_M) > M$. Hence for any $r > \norm{x_M}_2$, it holds that
    \begin{equation*}
        \sup_{\norm{w}_2 \leq r} f(w) > M \implies \liminf_{r \to \infty} \sup_{\norm{w}_2 \leq r} f(w) \geq M
    \end{equation*}
    As the choice of $M$ is arbitrary, we have $\lim_{r \to \infty} \sup_{\norm{w}_2 \leq r} f(w) = \infty$ as desired. 
    
    \item Suppose that $\sup_{w} f(w) = M < \infty$, then for any $\epsilon > 0$, there exists $x_{\epsilon}$ such that $f(x_{\epsilon}) > M - \epsilon$. Hence for any $r > \norm{x_{\epsilon}}_2$, it holds that
    \begin{equation*}
        \sup_{\norm{w}_2 \leq r} f(w) > M - \epsilon \implies \liminf_{r \to \infty} \sup_{\norm{w}_2 \leq r} f(w) \geq M - \epsilon
    \end{equation*}
    As the choice of $\epsilon$ is arbitrary, we have $\liminf_{r \to \infty} \sup_{\norm{w}_2 \leq r} f(w) \geq M $. On the other hand, it must be the case (by definition of supremum) that 
    \begin{equation*}
        \sup_{\norm{w}_2 \leq r} f(w) \leq M \implies \limsup_{r \to \infty} \sup_{\norm{w}_2 \leq r} f(w) \leq M
    \end{equation*}
    Consequently, the limit of $\sup_{\norm{w}_2 \leq r} f(w)$ exists and equals $M$. \qedhere
\end{enumerate}
\end{proof}

\begin{lemma} \label{lem:gmt-app}
Let $G \sim N(0,I_n), H \sim N(0,I_d)$ be Gaussian vectors independent of $Z, \xi$ and each other. Define the auxiliary problem (AO) as
\begin{equation}
    \phi = \sup_w \, \sqrt{\frac{1}{1+\beta} \left( n\sigma^2 + n\norm{w}_{2}^2 \right)} - \norm{G \norm{w}_2  - \xi  }_2 + \langle H, w \rangle  - F(\Sigma^{-1/2}w + w^*).
\end{equation}
Suppose that $F$ is continuous, then it holds that for any $t \in \R$
\begin{equation}
    \Pr(\Phi > t \, | \, \xi) \le 2 \Pr(\phi \ge t \, | \, \xi),
\end{equation}
and taking expectations we have
\begin{equation}
    \Pr(\Phi > t ) \le 2 \Pr(\phi \ge t ).
\end{equation}
\end{lemma}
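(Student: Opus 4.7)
The plan is to apply the (Convex) Gaussian Minmax Theorem (\cref{thm:gmt}) to compare $\Phi$ with $\phi$. Two structural mismatches must first be handled: \cref{thm:gmt} is stated in $\min_w \max_\lambda$ form over \emph{compact} sets, whereas the $\Phi$ of \eqref{eqn:phi} is a $\sup_w \inf_\lambda$ over all of $\mathbb{R}^d$. I will handle the first mismatch by negating the objective and exploiting the symmetry of centered Gaussians, and the second by truncating to $\{\norm{w}_2 \le r\}$, applying \cref{thm:gmt} on the truncated problem, and then passing to the limit $r \to \infty$ via \cref{lem:truncation}.

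Concretely, abbreviate $c(w) := \sqrt{(n\sigma^2 + n\norm{w}_2^2)/(1+\beta)} - F(\Sigma^{-1/2} w + w^*)$ and set
\[
\Phi_r := \sup_{\norm{w}_2 \le r} \inf_{\norm{\lambda}_2=1} \langle Zw,\lambda\rangle + c(w) - \langle \xi,\lambda\rangle, \qquad \phi_r := \sup_{\norm{w}_2 \le r} c(w) + \langle H,w\rangle - \norm{\norm{w}_2 G - \xi}_2.
\]
Work conditionally on $\xi$ throughout, so that $Z$, $G$, $H$ are independent standard Gaussian tensors and $\psi_\xi(w,\lambda) := -c(w) + \langle \xi,\lambda\rangle$ is a deterministic continuous function of $(w,\lambda)$ (using the continuity hypothesis on $F$). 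Negating and substituting $\tilde Z := -Z$, which still has i.i.d.\ $N(0,1)$ entries, gives
\[
-\Phi_r = \inf_{\norm{w}_2 \le r} \sup_{\norm{\lambda}_2=1} \langle \lambda, \tilde Z w \rangle + \psi_\xi(w,\lambda),
\]
which is exactly the PO of \cref{thm:gmt} on the compact sets $\{\norm{w}_2 \le r\}$ and the unit sphere. The corresponding AO evaluates to
\[
\widetilde\phi_r = \inf_{\norm{w}_2 \le r} \sup_{\norm{\lambda}_2=1} \norm{w}_2 \langle G,\lambda\rangle + \langle H,w\rangle + \psi_\xi(w,\lambda) = \inf_{\norm{w}_2 \le r} \langle H,w\rangle - c(w) + \norm{\norm{w}_2 G + \xi}_2.
\]
Conditionally on $\xi$, the distributional symmetry $(G,H) \overset{d}{=} (-G,-H)$ sends $\widetilde\phi_r$ into $-\phi_r$ (the sign flip converts $+\xi$ into $-\xi$ inside the norm and $+\langle H,w\rangle$ into $-\langle H,w\rangle$). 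Combining this with \cref{thm:gmt} applied conditionally on $\xi$ and with $\{\Phi_r > t\} = \{-\Phi_r < -t\}$,
\[
\Pr(\Phi_r > t \mid \xi) \le 2\Pr(\widetilde\phi_r \le -t \mid \xi) = 2\Pr(\phi_r \ge t \mid \xi).
\]

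Finally I remove the truncation: since $\Phi_r$ and $\phi_r$ are monotone nondecreasing in $r$, \cref{lem:truncation} gives $\Phi_r \uparrow \Phi$ and $\phi_r \uparrow \phi$ pointwise in $\omega$, so the events $\{\Phi_r > t\}$ and $\{\phi_r \ge t\}$ are increasing in $r$ with unions $\{\Phi > t\}$ and $\{\phi \ge t\}$. Continuity of probability from below then yields $\Pr(\Phi > t \mid \xi) \le 2 \Pr(\phi \ge t \mid \xi)$, and integrating out $\xi$ via the tower property removes the conditioning. The main thing to get right is the sign bookkeeping: the GMT inequality is anchored to the $\min\max$ form, so each negation, relabeling, and Gaussian symmetry step must simultaneously preserve the direction of the inequality and the joint distribution of the noise; the appearance of $\norm{w}_2 G - \xi$ in the statement of $\phi$ (rather than the $+\xi$ produced directly by the AO computation) is precisely the trace that this sign juggling leaves behind.
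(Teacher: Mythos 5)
Your proof is correct and follows essentially the same route as the paper: truncate $w$ to a ball of radius $r$, negate to recast $\Phi_r$ in the $\min\max$ form of \cref{thm:gmt} (absorbing the sign into $\tilde Z = -Z$), apply GMT conditionally on $\xi$, and pass $r \to \infty$ via \cref{lem:truncation} and continuity from below. If anything, you are slightly more careful than the paper's writeup: you explicitly note that the AO produced by GMT is $\widetilde\phi_r$, which matches $-\phi_r$ only after the sign flip $(G,H)\mapsto(-G,-H)$, whereas the paper silently identifies the two using this distributional symmetry. (One tiny imprecision: $\bigcup_r\{\phi_r\ge t\}$ is in general only a subset of $\{\phi\ge t\}$, but this only helps you, since $\Pr(\phi_r\ge t\mid\xi)\le\Pr(\phi\ge t\mid\xi)$ for every $r$, which is all you use.)
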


\begin{proof}
First, by \eqref{eqn:phi} define the truncated PO as
\begin{equation}
    \Phi_r = \sup_{\norm{w} \leq r} \, \inf_{\norm{\lambda}_2 = 1} \, \langle Zw, \lambda \rangle + \sqrt{\frac{1}{1+\beta} \left( n\sigma^2 + n\norm{w}_{2}^2 \right)} - \langle \xi, \lambda \rangle - F(\Sigma^{-1/2}w + w^*),
\end{equation}
and the corresponding AO is 
\begin{equation}
    \begin{split}
        \phi_r =& \sup_{\norm{w} \leq r} \, \inf_{\norm{\lambda}_2 = 1} \, \norm{w}_2 \langle G, \lambda \rangle + \norm{\lambda}_2 \langle H, w \rangle + \sqrt{\frac{1}{1+\beta} \left( n\sigma^2 + n\norm{w}_{2}^2 \right)} \\
        &\qquad \qquad \qquad - \langle \xi, \lambda \rangle - F(\Sigma^{-1/2}w + w^*) \\
        =& \sup_{\norm{w} \leq r} \, \langle H, w \rangle - \norm{G \norm{w}_2  - \xi  }_2 + \sqrt{\frac{1}{1+\beta} \left( n\sigma^2 + n\norm{w}_{2}^2 \right)} - F(\Sigma^{-1/2}w + w^*). \\
    \end{split}
\end{equation}
By \cref{lem:truncation}, with probability one, we have $\Phi_r$ and $\phi_r$ monotonically increase to $\Phi$ and $\phi$ as $r \to \infty$, respectively. By continuity of measure (from below), it holds that
\begin{equation*}
    \begin{split}
        \Pr (\Phi > t \, | \, \xi) &= \Pr \left(\lim_{r \to \infty} \Phi_r > t \, | \, \xi \right) \\
        &\leq \Pr \left( \cup_{r \in \N} \cap_{R \geq r} \Phi_R > t \, | \, \xi \right)\\
        &= \lim_{r \to \infty} \Pr \left( \cap_{R \geq r} \Phi_R > t \, | \, \xi \right) = \lim_{r \to \infty} \Pr \left( \Phi_r > t \, | \, \xi \right) \\
    \end{split}
\end{equation*}
By \cref{thm:gmt}, it follows that
\begin{equation*}
    \Pr \left( \Phi_r > t \, | \, \xi \right) = \Pr \left( -\Phi_r < -t \, | \, \xi \right) \leq 2 \Pr \left( -\phi_r < -t \, | \, \xi \right) \leq 2 \Pr (\phi > t \, | \, \xi)
\end{equation*}
Plugging in the bound above yields the desired conclusion. \qedhere
\end{proof}

\begin{lemma} \label{lem:aux-analysis}
Let $F$ satisfies the condition in \cref{thm:optimistic} and $n \geq 196 \log(12/\delta)$, then there exists $\beta \leq 14 \sqrt{\frac{\log(12/\delta)}{n}}$ such that
\begin{equation}
    \Pr(\phi \geq 0) \leq \delta' + \delta
\end{equation}
\end{lemma}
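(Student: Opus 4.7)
The plan is to bound $\phi$ by decomposing the expression inside its supremum into two pieces whose randomness is disjoint. Writing
\[
\phi \;=\; \sup_{w} \left\{ \Bigl[\sqrt{\tfrac{n(\sigma^2 + \|w\|_2^2)}{1+\beta}} - \bigl\|G\|w\|_2 - \xi\bigr\|_2 \Bigr] + \Bigl[ \langle H, w\rangle - F(\Sigma^{-1/2}w + w^*) \Bigr] \right\},
\]
the first bracket depends only on $t := \|w\|_2$ and on $(G, \xi)$, while the second depends only on $H$, which is independent of $(G, \xi)$. Since the supremum of a sum is bounded by the sum of the suprema, it suffices to prove that each bracket is non-positive with high probability and then combine via a union bound.

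For the $H$ bracket, apply the hypothesis on $F$ with $x := \Sigma^{1/2}H \sim N(0,\Sigma)$: with probability at least $1-\delta'$, $\langle w' - w^*, \Sigma^{1/2}H\rangle \leq F(w')$ uniformly in $w'$. Substituting $w' = \Sigma^{-1/2}w + w^*$ converts the left side into $\langle H, w\rangle$, so the second bracket is $\leq 0$ uniformly in $w$ on this event. (When $\Sigma$ is singular, the same argument works by restricting $w$ to the range of $\Sigma^{1/2}$ and extending $F \equiv \infty$ elsewhere.)

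For the first bracket, the requirement that it be $\leq 0$ for every $t = \|w\|_2 \geq 0$ is equivalent to $\|Gt-\xi\|_2^2 \geq n(\sigma^2+t^2)/(1+\beta)$ for all $t \geq 0$, i.e.\ the quadratic inequality
\[
\mathcal{A}\, t^2 - 2(1+\beta)\langle G,\xi\rangle\, t + \mathcal{C} \;\geq\; 0, \qquad \mathcal{A} := (1+\beta)\|G\|_2^2 - n, \; \mathcal{C} := (1+\beta)\|\xi\|_2^2 - n\sigma^2,
\]
which holds uniformly in $t \geq 0$ iff $\mathcal{A}, \mathcal{C} \geq 0$ and $(1+\beta)^2\langle G,\xi\rangle^2 \leq \mathcal{A}\mathcal{C}$. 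Applying \cref{lem:norm-concentration} to $G$ and to $\xi/\sigma$ yields $\bigl|\|G\|_2^2 - n\bigr|, \bigl|\|\xi\|_2^2/\sigma^2 - n\bigr| \lesssim \sqrt{n\log(1/\delta)}$, and conditioning on $\xi$ makes $\langle G,\xi\rangle \mid \xi \sim N(0,\|\xi\|_2^2)$, so $|\langle G,\xi\rangle| \lesssim \sigma\sqrt{n\log(1/\delta)}$. With $\beta = 14\sqrt{\log(12/\delta)/n}$ (so $\beta \leq 1$ under the assumption $n \geq 196\log(12/\delta)$), the main terms $\mathcal{A} \gtrsim \beta n$ and $\mathcal{C} \gtrsim \beta n\sigma^2$ dominate their fluctuations, and $\mathcal{A}\mathcal{C} \gtrsim \beta^2 n^2 \sigma^2$ beats the squared discriminant $\lesssim n\sigma^2 \log(1/\delta)$ precisely because $\beta^2 n \gtrsim \log(12/\delta)$.

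The main obstacle will be pinning down the explicit constant $14$: this requires allocating the total failure probability $\delta$ carefully across three concentration events (each receiving probability $\delta/6$, which is the source of the factor $12/\delta$ inside the logarithm via the $2e^{-t^2/2}$-type tails in \cref{lem:norm-concentration} and \cref{thm:gaussian-concentration}) and then absorbing the cross-terms of order $\beta\sqrt{n\log(1/\delta)}$ into the dominant $\beta n$ contributions to $\mathcal{A}$ and $\mathcal{C}$, using $\beta \leq 1$ to close the constants. The numerology $196 = 14^2$ in the sample size hypothesis is exactly what forces $\beta \leq 1$ and thereby validates the $(1+\beta) \leq 2$ approximations that let these bounds snap together.
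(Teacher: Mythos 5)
Your proposal matches the paper's proof in all essential respects: the same decomposition of $\phi$ into a $(G,\xi)$-bracket and an $H$-bracket, the same use of the defining property of $F$ to kill the $H$-bracket, and the same three concentration events ($\|G\|_2$, $\|\xi\|_2$, $\langle G,\xi\rangle$) combined by a union bound. The only cosmetic difference is the finishing step for the $(G,\xi)$-bracket: you argue via the discriminant of the quadratic in $t=\|w\|_2$, whereas the paper bounds the cross term directly by AM--GM, writing $\|Gt-\xi\|_2^2 \ge (1-\rho)\bigl(\|G\|_2^2 t^2 + \|\xi\|_2^2\bigr)$ and then setting $1+\beta=(1-\rho)^{-1}(1-\alpha)^{-2}$, which avoids the case analysis on the sign of $\langle G,\xi\rangle$ and makes the numerology for the constant $14$ drop out cleanly.
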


\begin{proof}
For notational simplicity, define
\begin{equation*}
    \begin{split}
        \alpha &:= 2\sqrt{\frac{\log(12/\delta)}{n}} \\
        \rho &:= \sqrt{\frac{1}{n}} + 2\sqrt{\frac{\log(6/\delta)}{n}}
    .\end{split}
\end{equation*}
By a union bound, the following collection of events occur with probability at least $1-\delta-\delta'$
\begin{enumerate}
    \item By \cref{lem:norm-concentration}, it holds that
        \begin{equation} \label{eqn:g-norm}
            1 - \alpha \leq \frac{1}{\sqrt{n}} \norm{G}_2
        \end{equation}
        and 
        \begin{equation} \label{eqn:xi-norm}
            1 - \alpha \leq \frac{1}{\sqrt{n} \sigma} \norm{\xi}_2
        \end{equation}
    \item By \cref{lem:lowrank-projection}, it holds that
        \begin{equation} \label{eqn:xi-g-orthogonal} 
            \langle \xi, G \rangle \leq \rho \norm{\xi}_2 \norm{G}_2
        \end{equation}
    \item By our assumption on $F$, it holds that uniformly over all $w \in \R^d$
        \begin{equation} \label{eqn:F-bounded}
            \langle H, w \rangle \leq F(\Sigma^{-1/2}w + w^*)
        \end{equation}
\end{enumerate}
Equations \eqref{eqn:g-norm}, \eqref{eqn:xi-norm} and \eqref{eqn:xi-g-orthogonal} implies that
\begin{equation*}
    \begin{split}
        \norm{G \norm{w}_2  - \xi  }_2^2 &\geq (1-\rho) \left( \norm{G}_2^2 \norm{w}_2^2 + \norm{\xi}_2^2 \right) \\
        &\geq (1-\rho) (1-\alpha)^2 n \left( \norm{w}_2^2 + \sigma^2 \right) \\
    \end{split}
\end{equation*}
Therefore, if we take $1+\beta = (1-\rho)^{-1} (1-\alpha)^{-2}$, combining with \eqref{eqn:F-bounded} shows that $\phi \leq 0$. To simplify the expression of $\beta$, observe that 
\begin{equation*}
    (1-\rho) (1-\alpha)^{2} \geq 1 - 2\alpha - \rho.
\end{equation*}
Finally, it is routine to check that $\beta \leq 14 \sqrt{\frac{\log(12/\delta)}{n}}$.
\end{proof}

\optimistic*

\begin{proof}
By \cref{lem:rewrite-gap} and \cref{lem:gmt-app}, we have
\begin{equation*}
    \begin{split}
        &\pr \left( \exists w \in \R^d, \, L(w) > (1+\beta) \left( \sqrt{\hat L(w)} + \frac{F(w)}{\sqrt{n}} \right)^2 \right) \\
        = & \pr \left( \sup_w \, \sqrt{\frac{1}{1+\beta} \cdot L(w)} - \left( \sqrt{\hat L(w)} + \frac{F(w)}{\sqrt{n}} \right) > 0 \right) \\
        = &\pr (\Phi > 0) \leq 2 \pr (\phi \geq 0)\\
    \end{split}
\end{equation*}
Finally, \cref{lem:aux-analysis} shows that $\Pr(\phi \geq 0) \leq \delta' + \delta$ and so the desired event occurs with probability at least $1-2(\delta' + \delta)$.
\end{proof}

\begin{remark}\label{rmk:convexity}
In \cref{thm:optimistic}, if the assumption \eqref{eqn:F-definition} is satisfied for a function $F$ then it is also satisfied for its greatest convex minorant $\text{conv}(F)$, which is the largest convex function such that $\text{conv}(F)(w) \le F(w)$ for all $w$, and replacing $F$ by $\text{conv}(F)$ only makes the conclusion stronger. Also, we note the conclusion can be written in terms of the population measure $\mu$ and empirical measure $\mu_n$ from $n$ samples as
\[ \|Y - \langle w, X \rangle\|_{L_2(\mu)} \le (1 + \beta)^{1/2} \left(\|Y - \langle w, X \rangle\|_{L_2(\mu_n)} + F(w)/\sqrt{n}\right) \]
so it can be interpreted as a lower isometry estimate for the empirical $L_2$ metric about the point $Y$. 
\end{remark}

\subsection{Proof of Theorem~\ref{thm:covariance-splitting}}

For convenience, we restate the theorem below: 
\CovSplitGen*

\begin{proof}
First, we show how to choose the complexity function $F$ in \cref{thm:optimistic} and show the result without dilations. 
We can write $x = \Sigma^{1/2} H$ where $H \sim N(0,I_d)$. For any splitting $\Sigma = \Sigma_1 \oplus \Sigma_2$, let $H_1$ be the orthogonal projection of $H$ onto the span of $\Sigma_1$. Similarly, we let $H_2$ be the orthogonal projection of $H$ onto the span of $\Sigma_2$. Then observe that\begin{equation*}
    \begin{split}
        \langle w^* - w, \, x \rangle 
        &= \langle w^* - w, \, \Sigma^{1/2}_1 H \rangle + \langle w^* - w, \, \Sigma^{1/2}_2 H \rangle \\
        &= \langle w^* - w, \, \Sigma^{1/2}_1 H_1 \rangle + \langle w^* - w, \, \Sigma^{1/2}_2 H_2 \rangle \\
        &\leq \| \Sigma_1^{1/2} (w-w^*) \|_2 \cdot \| H_1 \|_2  + |\langle \Sigma_2^{1/2} w^*, H_2 \rangle| + \sup_{w \in \Sigma_2^{1/2}\cK} | \langle w, H_2 \rangle| \\
    \end{split}
\end{equation*}
where the equality is by orthogonality of the split and the inequality is by Cauchy-Schwarz and the definition of supremum. Next, observe by \cref{lem:norm-concentration} that with probability at least $1 - \delta/8$, 
\[ \|H_1\| \le \sqrt{\rank \Sigma_1} + 2\sqrt{\log(32/\delta)}, \] 
and by \cref{thm:gaussian-concentration} with probability at least $1 - \delta/8$
\[ \sup_{w \in \Sigma_2^{1/2}\cK} | \langle w, H_2 \rangle| \le W_{\Sigma_2}(\cK) + \rad(\Sigma_2^{1/2} \cK)\sqrt{2\log(32/\delta)} \]
and by the standard Gaussian tail bound $\Pr_{Z \sim N(0,1)} (|Z| \geq t) \leq 2e^{-t^2/2}$, it holds that
\begin{equation}\label{eqn:signal-size} 
    |\langle \Sigma_2^{1/2} w^*, H \rangle| \le \|w^*\|_{\Sigma_2} \sqrt{2 \log(32/\delta)}
\end{equation}
because the marginal law of $\langle \Sigma_2^{1/2} w^*, H \rangle$ is $N(0, \|w^*\|_{\Sigma_2}^2)$.
Hence, by the union bound we have that with probability at least $1 - 3\delta/8$,
\[ \langle w^* - w, \, x \rangle \le F(w) := \|w^* - w\|_{\Sigma_1} \left( \sqrt{\rank \Sigma_1} + 2\sqrt{\log(32/\delta)} \right) + W_{\Sigma_2}(\cK) + [\|w^*\|_{\Sigma_2} + \rad(\Sigma_2^{1/2} \cK)]\sqrt{2\log(32/\delta)}. \]

Now applying \cref{thm:optimistic} with $F(w) = \infty$ outside of $\cK$ gives, where $\beta_1$ is as defined in the statement of that result,
\[ \sqrt{\frac{L(w)}{1+\beta_1}} \leq  \sqrt{\hat L(w)} +\|w^* - w\|_{\Sigma_1} \left( \sqrt{\frac{\rank \Sigma_1}{n}} + 2\sqrt{\frac{\log(32/\delta)}{n}} \right) + \frac{W_{\Sigma_2}(\cK)}{\sqrt{n}} + [\|w^*\|_{\Sigma_2} + \rad(\Sigma_2^{1/2} \cK)]\sqrt{\frac{2\log(32/\delta)}{n}} . \]

Observe that $\|w^* - w\|_{\Sigma_1} \le \|w^* - w\|_{\Sigma} \le \sqrt{L(w)}$ so we have
\[ \left( (1+\beta_1)^{-1/2} - \sqrt{\frac{\rank \Sigma_1}{n}} - 2\sqrt{\frac{\log(32/\delta)}{n}} \right) \sqrt{L(w)} \leq  \sqrt{\hat L(w)} + \frac{W_{\Sigma_2}(\cK)}{\sqrt{n}} + [\|w^*\|_{\Sigma_2} + \rad(\Sigma_2^{1/2} \cK)]\sqrt{\frac{2\log(32/\delta)}{n}}  \]
and by solving for $\sqrt{L(w)}$, we just need to consider $\beta_2$ such that
\[
\left( (1+\beta_1)^{-1/2} - \sqrt{\frac{\rank \Sigma_1}{n}} - 2\sqrt{\frac{\log(32/\delta)}{n}} \right)^{-2} \leq 1 + \beta_2.
\]
The above establishes the result when there is no dilation ($\alpha = 1$). Clearly, the same argument also shows the bound uniformly over all $\alpha \ge 0$ if we take 
\[ F(w) := \|w^* - w\|_{\Sigma_1} \left( \sqrt{\rank \Sigma_1} + 2\sqrt{\log(32/\delta)} \right) + \alpha(w) W_{\Sigma_2}(\cK) + [\|w^*\|_{\Sigma_2} + \alpha(w) \rad(\Sigma_2^{1/2} \cK)]\sqrt{2\log(32/\delta)} \] 
where $\alpha(w)$ is the infimum over all $\alpha$ such that $w \in \alpha \cK$.
\end{proof}

\subsection{Proof of Theorem~\ref{thm:flatness-norm}}

The following Lemma abstracts the key deterministic argument from the setting of \cref{thm:flatness-norm} to essentially any application of \cref{thm:optimistic}; the key insight is that a generalization bound of the form \eqref{eqn:flatness-bound} is exactly of the right form to explain flatness along the regularization path. Note that in the below Lemma, the function $F$ is assumed to be convex which is always without loss of generality when applying \cref{thm:optimistic}, see \cref{rmk:convexity}. 

\begin{lemma}\label{lem:flatness}
Suppose there exist a convex function $F$ and $\epsilon \in (0,1)$ such that:
\begin{enumerate}
    \item for all $w \in \mathbb{R}^d$, it holds that
    \begin{equation} \label{eqn:flatness-bound}
    \sqrt{L(w)} \le (1 + \epsilon) \left(\sqrt{\hat L(w)} + \frac{F(w)}{\sqrt{n}} \right). 
    \end{equation}
    \item $\epsilon$ is sufficiently large that
    \begin{equation} \label{eqn:flatness-w*}
        \sqrt{\hat{L}(w^*)} = \frac{\| \xi \|_2}{\sqrt{n}} \le (1 + \epsilon) \sigma 
        \quad \text{ and } \quad \frac{F(w^*)}{\sqrt{n}} \leq \epsilon.
    \end{equation}
    \item for some $w' \in \R^d$, it holds that 
    \begin{equation} \label{eqn:flatness-w'}
        \hat{L}(w') = 0 \quad \text{ and } \quad \frac{F(w')}{\sqrt{n}} \le (1 + \epsilon) \sigma + \epsilon.
    \end{equation}
\end{enumerate}
Then for all $R$ between $F(w^*)$ and $F(w')$ and any constrained empirical risk minimizer of the form 
\[ \hat{w}_{R} := \arg\min_{F(w) \le R} \hat{L}(w), \]
we have $L(\hat{w}_R) \le \left( \sigma+ 5\epsilon (\sigma \vee 1) \right)^2$.
\end{lemma}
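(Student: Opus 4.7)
My plan is to apply the optimistic inequality \eqref{eqn:flatness-bound} at $\hat w_R$, which reduces the task to simultaneously controlling $\sqrt{\hat L(\hat w_R)}$ and $F(\hat w_R)/\sqrt{n}$. Since $\hat w_R$ minimizes $\hat L$ over $\{F \le R\}$, any feasible comparator $w_\alpha$ yields $\sqrt{\hat L(\hat w_R)} \le \sqrt{\hat L(w_\alpha)}$ while trivially $F(\hat w_R) \le R$, so the whole argument will hinge on choosing a good comparator.

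The key construction is a convex interpolation between $w^*$ and $w'$. Because $R$ lies between $F(w^*)$ and $F(w')$, I would pick $\alpha \in [0,1]$ with $(1-\alpha) F(w^*) + \alpha F(w') = R$ and set $w_\alpha := (1-\alpha) w^* + \alpha w'$. Convexity of $F$ then gives $F(w_\alpha) \le R$, so $w_\alpha$ is feasible. Moreover, $\sqrt{\hat L(w)} = \|Y - Xw\|_2/\sqrt{n}$ is a norm of an affine function and hence convex in $w$, so combining hypothesis (2) and the interpolation condition $\hat L(w') = 0$ from (3) gives $\sqrt{\hat L(w_\alpha)} \le (1-\alpha)(1+\epsilon)\sigma$. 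Feeding this into the optimality bound and expanding $R = (1-\alpha) F(w^*) + \alpha F(w')$ to estimate $R/\sqrt n \le (1-\alpha)\epsilon + \alpha((1+\epsilon)\sigma + \epsilon) = \alpha(1+\epsilon)\sigma + \epsilon$ via (2) and (3), I find that the two $\alpha$-dependent quantities add to $(1+\epsilon)\sigma$ irrespective of $\alpha$, leaving the $\alpha$-free bound $\sqrt{\hat L(\hat w_R)} + F(\hat w_R)/\sqrt n \le (1+\epsilon)\sigma + \epsilon$. This exact cancellation is the heart of the argument and is precisely why the loss curve is flat along this component of the regularization path: the empirical-error budget (decreasing in $\alpha$) and the complexity budget (increasing in $\alpha$) trade off on a one-for-one basis, so every point on the segment from $w^*$ to $w'$ certifies the same optimistic bound.

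The remainder is routine algebra. Applying \eqref{eqn:flatness-bound} yields $\sqrt{L(\hat w_R)} \le (1+\epsilon)((1+\epsilon)\sigma + \epsilon) = \sigma + \epsilon(2\sigma + 1) + \epsilon^2(\sigma + 1)$. Using $\epsilon \le 1$ to bound $\epsilon^2(\sigma+1) \le \epsilon(\sigma+1)$ and the elementary inequality $3\sigma + 2 \le 5(\sigma \vee 1)$ (checked separately for $\sigma \ge 1$ and $\sigma \le 1$) absorbs the excess into $5\epsilon(\sigma \vee 1)$, and squaring gives $L(\hat w_R) \le (\sigma + 5\epsilon(\sigma \vee 1))^2$ as required. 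I do not anticipate any serious obstacle: the only nontrivial ingredient is the interpolation-plus-cancellation observation, and the convexity of $F$ assumed in the hypothesis is exactly what makes the feasibility step go through.
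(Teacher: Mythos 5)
Your proposal is correct and follows essentially the same route as the paper: interpolate $w_\alpha = (1-\alpha)w^* + \alpha w'$ with $(1-\alpha)F(w^*) + \alpha F(w') = R$, use convexity of $F$ for feasibility, and observe the exact cancellation of $\alpha$ so that $\sqrt{\hat L(\hat w_R)} + F(\hat w_R)/\sqrt n \le (1+\epsilon)\sigma + \epsilon$ independently of $\alpha$. The only minor difference is that the paper computes $\hat L(w_\alpha) = (1-\alpha)^2\hat L(w^*)$ exactly from $Xw' = Y$, while you invoke convexity of $w \mapsto \|Y - Xw\|_2$; both yield the same bound $\sqrt{\hat L(w_\alpha)} \le (1-\alpha)\sqrt{\hat L(w^*)}$.
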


\begin{proof}
For any $R$ between $F(w^*)$ and $F(w')$, we can write 
\[ R = (1 - \alpha) F(w^*) + \alpha F(w') \]
for some $\alpha \in [0,1]$. If we define $w_{\alpha} := (1 - \alpha) w^* + \alpha w'$ accordingly, then by convexity, we have
\[ F(w_{\alpha}) \le (1 - \alpha) F(w^*) + \alpha F(w') = R \]
and
\[ \hat{L}(w_{\alpha}) = \frac{1}{n} \|Y - X w_{\alpha}\|_2^2 = \frac{1}{n}(1 - \alpha)^2 \|Y - X w^*\|_2^2 = (1 - \alpha)^2 \hat{L}(w^*). \]
By the definition of $w_R$, it must be the case that $\hat{L}(w_R) \le \hat{L}(w_{\alpha})$ and so by \eqref{eqn:flatness-bound}, \eqref{eqn:flatness-w*} and \eqref{eqn:flatness-w'}
\begin{align*}
\sqrt{L(w_R)} 
&\le (1 + \epsilon) \left(\sqrt{\hat L(w_R)} + \frac{F(w_R)}{\sqrt{n}} \right) \\
&\le (1 + \epsilon) \left(\sqrt{\hat L(w_{\alpha})} + \frac{R}{\sqrt{n}} \right) \\
&= (1 + \epsilon) \left((1 - \alpha) \sqrt{\hat L(w^*)} +\frac{(1 - \alpha) F(w^*) + \alpha F(w')}{\sqrt{n}} \right) \\
&\leq (1 + \epsilon) \left((1 - \alpha) (1+\epsilon) \sigma + (1-\alpha) \epsilon + \alpha \left((1+\epsilon) \sigma + \epsilon \right) \right)  \\
&= (1+\epsilon)^2 \sigma + \epsilon(1 + \epsilon)\\
&\leq \sigma+ 5\epsilon (\sigma \vee 1). \qedhere
\end{align*}
\end{proof}

\Flatness*

\begin{proof}
Notice that $C_{\Sigma_2}$ is a monotone increasing function in $\| w \|$, so without loss of generality we can assume that $w'$ is the minimal norm interpolator. By \eqref{eqn:cov-split-strong} of \cref{thm:covariance-splitting} and condition \eqref{eqn:flatness-w'-condition}, we have
\[
L(w') \leq (1+\beta_2) \left( (1+\epsilon) \sigma + \epsilon \right)^2
\]
and it is easy to see this upper bound is no larger than the desired upper bound. By our convention, if $R > \| w' \|$ then $\hat{w}_R = w'$ and we are done. So we only need to consider the case when $\| w^* \| \leq R \leq \| w'\|$. 

To apply \cref{lem:flatness}, consider $F(w) = \sqrt{n}C_{\Sigma_2}(\| w\|)$ and let $\epsilon + \beta_2$ plays the role of $\epsilon$. Clearly, \eqref{eqn:flatness-bound}, \eqref{eqn:flatness-w*} and \eqref{eqn:flatness-w'} are satisfied by \cref{thm:covariance-splitting} and our assumptions \eqref{eqn:flatness-w*-condition} and \eqref{eqn:flatness-w'-condition}. Since $C_{\Sigma_2}$ is monotone increasing, the condition that $\| w\| \leq R$ is the same as $F(w) \leq \sqrt{n}C_{\Sigma_2}(R)$ and $F(w^*) \leq \sqrt{n}C_{\Sigma_2}(R) \leq F(w')$. We can conclude the proof by a union bound.
\end{proof}

\FlatnessRidge*

\begin{proof}
By \cref{lem:norm-concentration}, with probability at least $1-\delta/2$, we have $\sqrt{\hat{L}(w^*)} \leq \left(1 + 2 \sqrt{\frac{\log(8/\delta)}{n}}\right) \sigma$. Theorem 2 and 3 of \citet{uc-interpolators} shows that we can pick $w'$ to be the minimal $\ell_2$ norm interpolator, and there exists 
\[
\gamma \lesssim \sqrt{\frac{\log(1/\delta)}{n}}+ \sqrt{\frac{\log(1/\delta)}{r(\Sigma_2)}}  + \frac{n\log(1/\delta)}{R(\Sigma_2)} 
\]
such that with probability at least $1-\delta/2$, we have
\[
C_{\Sigma_2}(\| w'\|_2) \leq (1+\gamma) \left( \sigma + \| w^*\|_2 \sqrt{\frac{\Tr(\Sigma_2)}{n}}\right).
\]
So we can take $\epsilon$ to be the maximum of $2 \sqrt{\frac{\log(8/\delta)}{n}}$, $C_{\Sigma_2}(\| w^* \|_2)$, $\gamma$ and $(1+\gamma)\| w^*\|_2 \sqrt{\frac{\Tr(\Sigma_2)}{n}}$. We can apply \cref{thm:flatness-norm} and observe that $\epsilon + \beta_2 \to 0$ under the benign overfitting conditions \eqref{eqn:benign-overfitting-ridge}.
\end{proof}

\section{Proofs for Section~\ref{sec:applications}} \label{sec:proof-erm}

\subsection{Optimally-tuned regularized regression}
\OptimallyTunedGen*

\begin{proof}
By comparing the KKT conditions, it is easy to see that there is some choice of $\lambda^*$ such that
\begin{equation*}
    \hat{w}_{\lambda^*} = \argmin_{\hat{L}(w) \leq \| \xi\|^2/n } \| w \|.
\end{equation*}
Since $\hat{L}(w^*) = \| \xi\|^2/n$, it follows that $\| \hat{w}_{\lambda^*} \| \leq \| w^* \|$. To apply \cref{thm:covariance-splitting}, we consider $\cK = \{ w: \| w \| \leq 1\}$ and observe that 
\[
\rad(\Sigma_2^{1/2} \cK) = \sup_{\| w\| \leq 1} \| w\|_{\Sigma_2} \quad \text{ and } \quad \| w^* \|_{\Sigma_2} \leq \| w^* \| \cdot \sup_{\| w\| \leq 1} \| w\|_{\Sigma_2}.
\]
Plugging in \eqref{eqn:cov-split-strong}, by \cref{lem:norm-concentration} and a union bound,  we obtain
\begin{equation*}
    \begin{split}
        L(\hat{w}_{\lambda^*}) 
        &\leq (1+\beta_2) \left( \sqrt{\hat L(\hat{w}_{\lambda^*})} + \frac{\| \hat{w}_{\lambda^*} \| \cdot \E \| x\|_{*} }{\sqrt{n}}+ \left[\|w^*\|_{\Sigma_2} + \| \hat{w}_{\lambda^*} \| \cdot  \sup_{\| w\| \leq 1} \| w\|_{\Sigma_2} \right] \sqrt{\frac{2\log(36/\delta)}{n}} \right)^2 \\
        &\leq (1+\beta_2) \left( \frac{\| \xi \|_2}{\sqrt{n}} + \frac{\| w^* \| \cdot \E \| x\|_{*} }{\sqrt{n}}+ \| w^* \| \cdot  \sup_{\| w\| \leq 1} \| w\|_{\Sigma_2} \cdot \sqrt{\frac{8\log(36/\delta)}{n}} \right)^2 \\
        &\leq (1+\beta_2) \left( \left( 1 + 2 \sqrt{\frac{\log(36/\delta)}{n}}\right) \sigma + \frac{\| w^* \| \cdot \E \| x\|_{*} }{\sqrt{n}}+ \| w^* \| \cdot  \sup_{\| u\| \leq 1} \| u\|_{\Sigma_2} \cdot \sqrt{\frac{8\log(36/\delta)}{n}} \right)^2 \\
    \end{split}
\end{equation*}
It is routine to check that $(1+\beta_2)\left( 1 + 2 \sqrt{\frac{\log(36/\delta)}{n}}\right)^2 \leq 1 + 3\beta_2$ and the proof is complete.
\end{proof}

\subsection{LASSO}

\LASSOCone*

\begin{proof}
Note that over this set, we have 
\begin{equation*}
\|(w - w^*)_{S^C}\|_1 = \|w_{S^C}\|_1 = \|w\|_1 - \|w_S\|_1 \le \|w^*\|_1 - \|w_S\|_1 \le \|(w^* - w_S)\|_1 
\end{equation*}
where the first inequality uses $\|w\|_1 \le \|w^*\|_1$ and the second inequality is the triangle inequality. 
\end{proof}

\LASSO*

\begin{proof}
We start with the application of \cref{thm:optimistic} as in \cref{ex:lasso}.
Observe that for $x \sim N(0,\Sigma)$ we have by \cref{lem:cone}, the compatibility condition, the standard Gaussian tail bound and the union bound that with probability at least $1 - \delta/8$,
\begin{equation}\label{eqn:F-lasso}
    \begin{split}
        \langle w - w^*, x \rangle 
        &\le \|w - w^*\|_1 \|x\|_{\infty} \le 2 \|(w - w^*)_S\|_1 \|x\|_{\infty} \\
        &\le 2 \frac{k^{1/2} \|w - w^*\|_{\Sigma} }{\phi(\Sigma,S)} \max_i \sqrt{2\Sigma_{ii} \log(16d/\delta)} 
    \end{split}
\end{equation}
so applying \cref{thm:optimistic} with $F(w)$ equal to the right hand side of \eqref{eqn:F-lasso} gives
\begin{align*} 
\sigma^2 + \|w - w^*\|_{\Sigma}^2 = L(w) 
&\le (1 + \beta_1)\left(\sqrt{\hat{L}(w)} + \frac{2k^{1/2}}{\phi(\Sigma,S)} \|w - w^*\|_{\Sigma} \max_i \sqrt{2\Sigma_{ii} \log(32d/\delta)/n}\right)^2 \\
&\le (1 + \beta_1)\left(\sigma \sqrt{1 + \epsilon} + \frac{2k^{1/2}}{\phi(\Sigma,S)} \|w - w^*\|_{\Sigma} \max_i \sqrt{2\Sigma_{ii} \log(32d/\delta)/n}\right)^2
\end{align*}
For a sufficiently large $n$, we have $\beta_1 \leq 1$. Expanding the square and rearranging gives
\begin{equation*}
    \begin{split}
        \|w - w^*\|_{\Sigma}^2 
        &\le [\beta_1 + \epsilon + \epsilon\beta_1]\sigma^2 + 8 \sigma\frac{\sqrt{k(1 + \epsilon)}}{\phi(\Sigma,S)} \|w - w^*\|_{\Sigma} \max_i \sqrt{2\Sigma_{ii} \log(32d/\delta)/n} \\
        &\qquad + \frac{16 k\max_i \Sigma_{ii} \log(32d/\delta)}{\phi(\Sigma,S)^2} \cdot \frac{\|w - w^*\|_{\Sigma}^2}{n}
    \end{split}
\end{equation*}

and using the assumption on $n$ to rearrange the last term gives 
\begin{equation*}
    \begin{split}
        \|w - w^*\|_{\Sigma}^2 
        &\le 2[\beta_1 + \epsilon + \epsilon\beta_1]\sigma^2 + 16\sigma\frac{\sqrt{k(1 + \epsilon)}}{\phi(\Sigma,S)} \|w - w^*\|_{\Sigma} \max_i \sqrt{2\Sigma_{ii} \log(32d/\delta)/n} \\
        &\le 4[\beta_1 + \epsilon] \sigma^2 + \sqrt{\frac{512 \sigma^2 k(1 + \epsilon) \max_i  \Sigma_{ii} \log(32d/\delta)}{\phi(\Sigma,S)^2 n}} \cdot \|w - w^*\|_{\Sigma}. \\
    \end{split}
\end{equation*}

Solving this quadratic equation, it is not to difficult to check that
\[ \|w - w^*\|_{\Sigma}^2 \le 8[\beta_1 + \epsilon]\sigma^2 + \frac{512 (1 + \epsilon) \max_i \Sigma_{ii}}{\phi(\Sigma,S)^2} \frac{\sigma^2 k \log(32d/\delta)}{n} \]
which is the desired result. 
\end{proof}

\begin{remark}[Generalization Bound for Larger Cones]
For simplicity, in the above analysis we gave a generalization bound for predictors $w$ satisfying $\|w\|_1 \le \|w^*\|_1$, or more generally $\|(w - w^*)_{S^C}\|_1 \le \|(w - w^*)_S\|_1$, which covers the case of the LASSO with oracle regularization commonly considered in the literature \citep[see, e.g.,][]{vershynin2018high}. In situations where adaptivity to the unknown value of $\|w^*\|_1$ is important, the relevant predictor $w$ may only be guaranteed to satisfy the weaker bound $\|(w - w^*)_{S^C}\|_1 \le C \|(w - w^*)_S\|_1$ for some $C > 1$ and the analogous version of the compatibility condition/restricted eigenvalue condition over this cone is assumed \citep[see, e.g.,][]{bickel2009simultaneous,van2009conditions,rigollet2015high,wainwright2019high}; adopting the analysis to predictors in this larger cone is straightforward and we omit the details.  
\end{remark}

\subsection{OLS}

The following training error bounds are standard, which we include for completeness.
\begin{lemma} \label{lem:ols-train-err}
Under the model assumptions in \eqref{eqn:model} with $d \leq n$, consider the ordinary least square estimator $\wols = (X^T X)^{-1}X^T Y$. With probability at least $1-\delta$, it holds that
\begin{equation}
    \sqrt{\hat L(\wols)} \leq \sigma \left( \sqrt{1-\frac{d}{n}} + 2\sqrt{\frac{\log(4/\delta)}{n}}\right)
\end{equation}
Similarly, with probability at least $1-\delta$, it holds that
\begin{equation}
    \norm{\wols-w^*}_{\hat \Sigma} \leq \sigma \left( \sqrt{\frac{d}{n}} + 2\sqrt{\frac{\log(4/\delta)}{n}}\right)
\end{equation}
\end{lemma}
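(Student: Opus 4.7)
The plan is to exploit the closed-form geometry of OLS to reduce both bounds to a single instance of Gaussian norm concentration (\cref{lem:norm-concentration}). Let $P := X(X^T X)^{-1} X^T$ denote the orthogonal projection onto the column space of $X$ (well-defined almost surely, since $X$ has i.i.d.\ Gaussian rows and $d \le n$). Using $Y = X w^* + \xi$ and the defining identity $P X = X$, a direct calculation gives
\[
Y - X \wols = (I - P)\xi, \qquad X(\wols - w^*) = P\xi,
\]
so that
\[
\hat{L}(\wols) = \tfrac{1}{n}\|(I-P)\xi\|_2^2 \quad\text{and}\quad \|\wols - w^*\|_{\hat\Sigma}^2 = \tfrac{1}{n}\|P\xi\|_2^2.
\]

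Now I would condition on $X$. Since $\xi \sim N(0,\sigma^2 I_n)$ is independent of $X$, and $P$ is an orthogonal projection of rank $d$ (so $I-P$ is an orthogonal projection of rank $n-d$), the conditional distributions are $\|(I-P)\xi\|_2 \,\big|\, X \;\stackrel{d}{=}\; \sigma \|Z_{n-d}\|_2$ and $\|P\xi\|_2 \,\big|\, X \;\stackrel{d}{=}\; \sigma \|Z_d\|_2$, where $Z_m \sim N(0, I_m)$. Because these conditional laws do not depend on $X$, the same distributional identities hold unconditionally.

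Applying \cref{lem:norm-concentration} to $Z_{n-d}$ gives, with probability at least $1-\delta$,
\[
\|(I-P)\xi\|_2 \le \sigma\bigl(\sqrt{n-d} + 2\sqrt{\log(4/\delta)}\bigr),
\]
and dividing by $\sqrt{n}$ yields the first bound. Applying \cref{lem:norm-concentration} to $Z_d$ gives, with probability at least $1-\delta$,
\[
\|P\xi\|_2 \le \sigma\bigl(\sqrt{d} + 2\sqrt{\log(4/\delta)}\bigr),
\]
and dividing by $\sqrt{n}$ yields the second bound. There is no real obstacle here; the only point worth checking is that the rank of $P$ is exactly $d$ almost surely, which follows from $X$ having full column rank with probability one under the Gaussian design.
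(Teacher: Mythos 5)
Your proof is correct and follows essentially the same route as the paper's: decompose via the projection $P = X(X^TX)^{-1}X^T$ to get $\sqrt{n\hat L(\wols)}/\sigma \stackrel{d}{=} \|Z_{n-d}\|_2$ and $\sqrt{n}\,\|\wols - w^*\|_{\hat\Sigma}/\sigma \stackrel{d}{=} \|Z_d\|_2$, then invoke \cref{lem:norm-concentration}. The paper phrases the intermediate step as identifying chi-squared distributions rather than norms of standard Gaussians, but that is a cosmetic difference only.
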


\begin{proof}
By our model assumptions, we can write $\wols = w^* + (X^T X)^{-1}X^T\xi$, and so $Y - X \wols = (I - X(X^T X)^{-1}X^T) \xi.$
Since $(I - X(X^T X)^{-1}X^T)$ is almost surely an idempotent matrix with rank $n-d$, it follows that the distribution of 
\begin{equation*}
    \frac{n\hat L(\wols)}{\sigma^2} = \frac{1}{\sigma^2} \, \xi^T (I - X(X^T X)^{-1}X^T) \xi,
\end{equation*}
is a Chi-square distribution with $n-d$ degrees of freedom. By the same reasoning, the distribution of 
\begin{equation*}
    \frac{n \norm{\wols-w^*}_{\hat \Sigma}^2 }{\sigma^2} = \frac{1}{\sigma^2} \, \xi^T X (X^T X)^{-1} X^T \xi
\end{equation*}
is a Chi-square distribution with $d$ degrees of freedom. By \cref{lem:norm-concentration}, with probability at least $1-\delta$, it holds that
\begin{equation*}
    \frac{\sqrt{n}}{\sigma} \sqrt{\hat L(\wols)}\leq \sqrt{n-d} + 2\sqrt{\log(4/\delta)}.
\end{equation*}
Similarly, we have
\begin{equation*}
    \frac{\sqrt{n}}{\sigma} \norm{\wols-w^*}_{\hat \Sigma} \leq \sqrt{d} + 2\sqrt{\log(4/\delta)} .
\end{equation*}
Rearranging the terms conclude the proof.
\end{proof}

\OLSbound*

\begin{proof}
By \cref{lem:norm-concentration}, we can pick
\begin{equation*}
    \begin{split}
        F(w) &= \left(\sqrt{d} + 2 \sqrt{\log(4/\delta')}\right) \Norm{\Sigma^{1/2} (w^* - w)}_2 \\
        &= \left(\sqrt{d} + 2 \sqrt{\log(4/\delta')}\right) \sqrt{L(w)-\sigma^2}.\\
    \end{split}
\end{equation*}
Let $\delta' = \delta/9$ and replace $\delta$ by $\delta/3$ in \cref{thm:optimistic}, plug in the estimates from \cref{lem:ols-train-err} using confidence level $\delta/9$, then by a union bound with $\gamma = \frac{d}{n} $ and $ \epsilon = \sqrt{\frac{\log(36/\delta)}{n}}$, we have
\begin{equation} \label{eqn:ols-train-err}
    \sqrt{\hat{L}(\wols)} \leq \sigma \sqrt{1-\gamma} + 2 \sigma \epsilon
\end{equation}
and the bound \eqref{eqn:optimistic-1} becomes
\[
L(w) \leq (1+14 \epsilon) \left( \sqrt{\hat L(w)} + (\sqrt{\gamma}+2\epsilon)\sqrt{L(w)-\sigma^2} \right)^2.
\]
We can simplify this by expanding the square
\[
(1+14 \epsilon)^{-1} L(w) \leq \hat{L}(w) + (\sqrt{\gamma}+2\epsilon)^2 (L(w)-\sigma^2) + 2 (\sqrt{\gamma}+2\epsilon) \sqrt{\hat{L}(w)} \sqrt{L(w)-\sigma^2} .
\]
Rearranging, we arrive at
\[
\left[ (1+14 \epsilon)^{-1} - (\sqrt{\gamma}+2\epsilon)^2 \right] (L(w)-\sigma^2) \leq \hat{L}(w) - (1+14 \epsilon)^{-1}\sigma^2 + 2 (\sqrt{\gamma}+2\epsilon) \sqrt{\hat{L}(w)} \sqrt{L(w)-\sigma^2} .
\]
Note that this is a quadratic equation in terms of $\sqrt{L(w)- \sigma^2}$
\[
(L(w)-\sigma^2) - 2 \frac{(\sqrt{\gamma}+2\epsilon)\sqrt{\hat L(w)}}{(1+14 \epsilon)^{-1} - (\sqrt{\gamma}+2\epsilon)^2} \sqrt{L(w)-\sigma^2} \leq \frac{\hat{L}(w) - (1+14 \epsilon)^{-1} \sigma^2}{(1+14 \epsilon)^{-1} - (\sqrt{\gamma}+2\epsilon)^2}.
\]
We can complete the square, which leads to the following
\[
\left[ \sqrt{L(w)-\sigma^2} - \frac{(\sqrt{\gamma}+2\epsilon) \sqrt{\hat L(w)}}{(1+14 \epsilon)^{-1} - (\sqrt{\gamma}+2\epsilon)^2}  \right]^2 \leq  \frac{(1+14 \epsilon)^{-1}}{(1+14 \epsilon)^{-1} - (\sqrt{\gamma}+2\epsilon)^2} \left( \frac{\hat{L}(w)}{(1+14 \epsilon)^{-1} - (\sqrt{\gamma}+2\epsilon)^2} - \sigma^2 \right)
\]
Observe that $(1+14 \epsilon)^{-1} - (\sqrt{\gamma}+2\epsilon)^2 = 1-\gamma - O(\epsilon)$ and so
\[
\frac{\sqrt{\gamma}}{1-\gamma} \leq \frac{(\sqrt{\gamma}+2\epsilon)}{(1+14 \epsilon)^{-1} - (\sqrt{\gamma}+2\epsilon)^2} \leq \frac{\sqrt{\gamma}}{1-\gamma} + O(\epsilon).
\]
We can handle the other terms similarly. Plugging in \eqref{eqn:ols-train-err} concludes the proof.
\end{proof}

\subsection{Minimum-Norm Interpolation with Isotropic Covariance}
\begin{lemma}\label{lem:isotropic-bound} 
Let $w^*,w$ be arbitrary vectors with $w^* \ne 0$, let $V$ be the (one-dimensional) span of $w^*$, and let $P_V$ be the orthogonal projection onto $V$. Then for any vector $x$,
\[\langle w - w^*, x \rangle \leq \| w - w^*\|_2 \cdot \| P_{V} x\|_2 + \|x\|_2\sqrt{\|w\|_2^2 - \frac{\left( \|w\|_2^2 + \|w^*\|_2^2 - \|w - w^*\|_2^2\right)^2}{4 \|w^*\|_2^2}}. \]
\end{lemma}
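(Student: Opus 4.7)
The plan is to decompose $x$ according to the orthogonal splitting $\mathbb{R}^d = V \oplus V^{\perp}$ and apply Cauchy--Schwarz on each piece, with the key asymmetry that we pair $P_V x$ with $\|w-w^*\|_2$ (keeping the small factor $\|P_V x\|_2$ visible) and pair $P_{V^{\perp}}x$ with $\|x\|_2$ (so the geometric content lands on $w$ rather than on $x$).

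Concretely, I would first write $x = P_V x + P_{V^{\perp}} x$ and expand
\[
\langle w - w^*, x\rangle = \langle w-w^*, P_V x\rangle + \langle w-w^*, P_{V^{\perp}} x\rangle.
\]
Cauchy--Schwarz on the first summand gives $\langle w-w^*, P_V x\rangle \le \|w-w^*\|_2 \,\|P_V x\|_2$, which is exactly the first term of the claimed bound. For the second summand, the crucial observation is that $w^* \in V$ implies $P_{V^{\perp}} w^* = 0$, so by self-adjointness of $P_{V^{\perp}}$,
\[
\langle w-w^*, P_{V^{\perp}} x\rangle = \langle P_{V^{\perp}}(w-w^*), x\rangle = \langle P_{V^{\perp}} w, x\rangle \le \|P_{V^{\perp}} w\|_2 \,\|x\|_2.
\]

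It remains to identify $\|P_{V^{\perp}} w\|_2$ with the square-root quantity in the lemma. Since $\|P_{V^{\perp}} w\|_2^2 = \|w\|_2^2 - \|P_V w\|_2^2$ and $V$ is the one-dimensional span of $w^*$, we have $\|P_V w\|_2 = |\langle w, w^*\rangle|/\|w^*\|_2$. The polarization identity
\[
\langle w, w^*\rangle = \tfrac{1}{2}\bigl(\|w\|_2^2 + \|w^*\|_2^2 - \|w-w^*\|_2^2\bigr)
\]
then yields $\|P_V w\|_2^2 = \bigl(\|w\|_2^2 + \|w^*\|_2^2 - \|w-w^*\|_2^2\bigr)^2 / (4\|w^*\|_2^2)$, which gives exactly the expression under the square root.

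This is essentially a calculation rather than a theorem, so there is no serious obstacle; the only thing to be careful about is the asymmetric choice of which norm factor to pair with which projection in the two Cauchy--Schwarz applications, since pairing them the other way would give a weaker bound that does not isolate $\|P_V x\|_2$ (the quantity one later controls via \cref{lem:lowrank-projection} with $d=1$).
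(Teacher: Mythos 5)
Your proof is correct and essentially the same as the paper's: both decompose along $V \oplus V^{\perp}$ (you split $x$, the paper splits $w-w^*$, which is equivalent by self-adjointness of $P_V$), apply Cauchy--Schwarz asymmetrically on the two pieces, and use the polarization/parallelogram identity together with the Pythagorean theorem to express $\|P_{V^{\perp}} w\|_2$ in terms of $\|w\|_2$, $\|w^*\|_2$, and $\|w-w^*\|_2$.
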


\begin{proof}
 Observe that by expanding the square, we have
\[ \|w - w^*\|_2^2 
 =  \|w\|_2^2 + \|w^*\|_2^2 - 2 \langle P_V w, w^* \rangle \]
 and so rearranging gives the Parallelogram identity
 \[ \|w\|_2^2 + \|w^*\|_2^2 - \|w - w^*\|_2^2 = 2 \langle P_V w, w^* \rangle. \]
Taking absolute value of both sides and using that $P_V w$ and $w^*$ are colinear gives
\[ \left|\|w\|_2^2 + \|w^*\|_2^2 - \|w - w^*\|_2^2\right| = 2 \|P_V w\|_2 \|w^*\|_2. \]
Combining this with the Pythagorean Theorem, we find
\[ \|P_{V^{\perp}} w\|_2^2 = \|w\|_2^2 - \|P_V w\|_2^2 = \|w\|_2^2 - \left(\frac{\left|\|w\|_2^2 + \|w^*\|_2^2 - \|w - w^*\|_2^2\right|}{2 \|w^*\|}\right)^2. \]
Thus, applying the Cauchy-Schwarz inequality and the above gives
\begin{align*}
\langle w - w^*, x \rangle 
&= \langle P_{V} (w - w^*), x \rangle + \langle P_{V^{\perp}} w, x \rangle  \\
&\le \langle P_{V} (w - w^*), x \rangle + \|P_{V^{\perp}} w\|_2 \|x\|_2  \\
&= \langle w - w^*, P_{V} x \rangle  + \|x\|_2\sqrt{\|w\|_2^2 - \frac{\left( \|w\|_2^2 + \|w^*\|_2^2 - \|w - w^*\|_2^2\right)^2}{4 \|w^*\|_2^2}}\\
&\leq \| w - w^*\|_2 \cdot \| P_{V} x\|_2 + \|x\|_2\sqrt{\|w\|_2^2 - \frac{\left( \|w\|_2^2 + \|w^*\|_2^2 - \|w - w^*\|_2^2\right)^2}{4 \|w^*\|_2^2}}.
\end{align*}
which is the desired inequality.
\end{proof}

\begin{lemma}\label{lem:isotropic-generalization}
Under the assumptions of \cref{thm:optimistic} with $\gamma = d/n > 1$ and the further assumption that the data has isotropic covariance $\Sigma = I_d$, there exists $\epsilon \lesssim \sqrt{\frac{\log(18/\delta)}{n}}$ such that with probability at least $1-\delta$, we have
\[
\|w - w^*\|_2^2 + \sigma^2
\leq (1+\epsilon) \left( \sqrt{\hat{L}(w)} + \sqrt{\gamma} \cdot \sqrt{\|w\|_2^2 - \frac{\left( \|w\|_2^2 + \|w^*\|_2^2 - \|w - w^*\|_2^2\right)^2}{4 \|w^*\|_2^2}} \right)^2.
\]
\end{lemma}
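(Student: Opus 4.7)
The strategy is to apply Theorem~\ref{thm:optimistic} with the complexity function $F$ suggested by the deterministic inequality of Lemma~\ref{lem:isotropic-bound}. Writing $V := \Span(w^*)$ and
\[ M(w) := \sqrt{\|w\|_2^2 - \frac{(\|w\|_2^2 + \|w^*\|_2^2 - \|w-w^*\|_2^2)^2}{4\|w^*\|_2^2}}, \]
Lemma~\ref{lem:isotropic-bound} yields
\[ \langle w - w^*, x \rangle \le \|w - w^*\|_2 \cdot \|P_V x\|_2 + \|x\|_2 \cdot M(w). \]
For $x \sim N(0,I_d)$, the scalar $\|P_V x\|_2$ is distributed as $|N(0,1)|$ since $V$ is a fixed one-dimensional subspace, and $\|x\|_2$ concentrates around $\sqrt{d}$ by Lemma~\ref{lem:norm-concentration}. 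By the standard Gaussian tail bound and a union bound, with probability at least $1 - \delta'$ we have simultaneously $\|P_V x\|_2 \le \sqrt{2\log(c/\delta')}$ and $\|x\|_2 \le \sqrt{d} + 2\sqrt{\log(c/\delta')}$, motivating the choice
\[ F(w) := \sqrt{2\log(c/\delta')}\, \|w - w^*\|_2 + \bigl(\sqrt{d} + 2\sqrt{\log(c/\delta')}\bigr)\, M(w). \]

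Applying Theorem~\ref{thm:optimistic} then gives, with probability at least $1 - \delta$ after appropriately splitting the confidence budget between $\delta'$ and the $\delta$ in that theorem,
\[ \sqrt{L(w)} \le \sqrt{1 + \beta_1}\,\bigl(\sqrt{\hat L(w)} + F(w)/\sqrt{n}\bigr) \]
uniformly in $w$, with $\beta_1 \lesssim \sqrt{\log(1/\delta)/n}$. Let $r := \|w - w^*\|_2$ and $\alpha := \sqrt{2\log(c/\delta')/n}$, and observe that the coefficient of $M(w)$ inside $F(w)/\sqrt{n}$ equals $\sqrt{\gamma}(1+\epsilon_0)$ with $\epsilon_0 \lesssim \sqrt{\log(1/\delta)/n}$ (using $\gamma > 1$ so that $\sqrt{\log(\cdot)/d} \le \sqrt{\log(\cdot)/n}$). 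Using $\epsilon_0\sqrt{\gamma}\, M(w) \le \epsilon_0\bigl(\sqrt{\hat L(w)} + \sqrt{\gamma} M(w)\bigr)$, the display above becomes
\[ \sqrt{L(w)} \le \sqrt{1+\beta_1}\,\Bigl[(1+\epsilon_0)\bigl(\sqrt{\hat L(w)} + \sqrt{\gamma} M(w)\bigr) + \alpha r\Bigr]. \]

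The key step is to absorb the spurious $\alpha r$ term. Since $r \le \sqrt{L(w)}$ (from $L(w) = \sigma^2 + r^2$), we can move $\sqrt{1+\beta_1}\,\alpha \sqrt{L(w)}$ to the left-hand side, obtaining
\[ \bigl(1 - \sqrt{1+\beta_1}\,\alpha\bigr)\sqrt{L(w)} \le \sqrt{1+\beta_1}\,(1+\epsilon_0)\bigl(\sqrt{\hat L(w)} + \sqrt{\gamma}\, M(w)\bigr). \]
Provided $n$ is large enough that $\sqrt{1+\beta_1}\,\alpha < 1$, we divide and square to obtain the target inequality with
\[ 1+\epsilon = \frac{(1+\beta_1)(1+\epsilon_0)^2}{(1 - \sqrt{1+\beta_1}\,\alpha)^2}, \]
and a routine expansion gives $\epsilon \lesssim \beta_1 + \epsilon_0 + \alpha \lesssim \sqrt{\log(18/\delta)/n}$, matching the claimed rate.

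The main obstacle is conceptual: the fluctuation of $\|P_V x\|_2$ in the direction of $w^*$ introduces an extra $\alpha r = \alpha\|w - w^*\|_2$ term that is absent from the target inequality and whose naive treatment would make it an additive $O(\sqrt{\log(1/\delta)/n})\|w-w^*\|_2$ error on the right-hand side. The unlocking observation is the trivial bound $\|w - w^*\|_2 \le \sqrt{L(w)}$, which lets the offending term be folded into the multiplicative $(1+\epsilon)$ prefactor on the left, at the cost of tightening the invertibility condition to $\sqrt{1+\beta_1}\,\alpha < 1$ (a condition that is automatically satisfied for large enough $n$).
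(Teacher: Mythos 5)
Your proposal is correct and follows essentially the same route as the paper's proof: it uses \cref{lem:isotropic-bound} to define $F$, concentrates $\|P_V x\|_2$ and $\|x\|_2$ via a union bound, invokes \cref{thm:optimistic}, and absorbs the $\alpha\|w-w^*\|_2$ term into the left side via the trivial bound $\|w-w^*\|_2 \le \sqrt{L(w)}$, together with $\gamma>1$ to fold the remaining lower-order factor into the $(1+\epsilon)$ prefactor. The only differences from the paper's argument are cosmetic (writing $(1+\epsilon_0)$ multiplicatively on the right rather than dividing by $(1+2\sqrt{\log(\cdot)/n})$ on the left).
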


\begin{proof}
Observe that $\frac{\langle w^*, x\rangle}{\| w^*\|_2}  \sim \mathcal{N}(0, 1)$ and so by a standard Gaussian tail bound, \cref{lem:norm-concentration} and a union bound, with probability at least $1-\delta$, it holds that
\[
\|P_V x\|_2 = \norm{\frac{ w^*(w^*)^T}{\| w^*\|_2^2} x}_2 = \frac{|\langle w^*, x\rangle|}{\| w^*\|_2} \leq \sqrt{2\log(6/\delta)}
\]
and 
\[
\| x\|_2 \leq \sqrt{d} + 2 \sqrt{\log(6/\delta)}.
\]

Combining \cref{lem:isotropic-bound} with \cref{thm:optimistic} and another union bound gives

\begin{align*} 
&\frac{1}{\sqrt{1 + \beta_1}} \sqrt{\|w - w^*\|_2^2 + \sigma^2}  \\
&\le \sqrt{\hat{L}(w)} + \|w - w^*\|_2 \sqrt{\frac{2\log(18/\delta)}{n}} + \left(\sqrt{\frac{d}{n}} + 2 \sqrt{\frac{\log(18/\delta)}{n}} \right) \sqrt{\|w\|_2^2 - \frac{\left( \|w\|_2^2 + \|w^*\|_2^2 - \|w - w^*\|_2^2\right)^2}{4 \|w^*\|_2^2}}.
\end{align*}

Using the fact that $\|w - w^*\|_2 \leq \sqrt{\|w - w^*\|_2^2 + \sigma^2}$ and $d > n$, we have

\begin{align*} 
&\left(1 + 2 \sqrt{\frac{\log(18/\delta)}{n}} \right)^{-1} \left( \frac{1}{\sqrt{1 + \beta_1}} - \sqrt{\frac{2\log(18/\delta)}{n}} \right) \sqrt{\|w - w^*\|_2^2 + \sigma^2}  \\
&\le \sqrt{\hat{L}(w)}  + \sqrt{\gamma} \cdot  \sqrt{\|w\|_2^2 - \frac{\left( \|w\|_2^2 + \|w^*\|_2^2 - \|w - w^*\|_2^2\right)^2}{4 \|w^*\|_2^2}}.
\end{align*}

To simplify, there exists $\epsilon \lesssim \sqrt{\frac{\log(18/\delta)}{n}}$ such that
\[
\frac{1}{\sqrt{1 + \epsilon}} \sqrt{\|w - w^*\|_2^2 + \sigma^2} 
\leq \sqrt{\hat{L}(w)} + \sqrt{\gamma} \cdot \sqrt{\|w\|_2^2 - \frac{\left( \|w\|_2^2 + \|w^*\|_2^2 - \|w - w^*\|_2^2\right)^2}{4 \|w^*\|_2^2}}.
\]
and rearranging concludes the proof.
\end{proof}

The generalization bound from \cref{lem:isotropic-generalization} holds for all $w$; we now show what happens when we specialize it to interpolators.

\pagebreak
\RidgeGen*

\begin{proof}
By \cref{lem:isotropic-generalization}, there exists some $\epsilon \lesssim \sqrt{\frac{\log(18/\delta)}{n}}$ such that with probability at least $1-\delta$, for all $w$ such that $\hat{L}(w) = 0$ it holds that
\begin{equation*}
    \begin{split}
        \|w - w^*\|_2^2 + \sigma^2
        &\leq (1+\epsilon)\gamma \left( \|w\|_2^2 - \frac{\left( \|w\|_2^2 + \|w^*\|_2^2 - \|w - w^*\|_2^2\right)^2}{4 \|w^*\|_2^2}\right) \\
        &= (1+\epsilon)\gamma  \left( \|w\|_2^2 - \frac{(\|w\|_2^2 + \|w^*\|_2^2)^2 - 2(\|w\|_2^2 + \|w^*\|_2^2)\|w - w^*\|_2^2 + \|w - w^*\|_2^4}{4 \|w^*\|_2^2}\right) \\
    \end{split}
\end{equation*}
Rearranging, we have
\[
4 \|w^*\|_2^2 \cdot \frac{\|w - w^*\|_2^2 + \sigma^2}{(1+\epsilon)\gamma} 
\leq 4 \|w^*\|_2^2 \cdot \|w\|^2 - (\|w\|_2^2 + \|w^*\|_2^2)^2 + 2(\|w\|_2^2 + \|w^*\|_2^2)\|w - w^*\|_2^2 - \|w - w^*\|_2^4
\]
Grouping the terms with $\| w-w^*\|_2^2$, we see that
\[
\|w - w^*\|_2^4 + \left( \frac{4 \|w^*\|_2^2 }{(1+\epsilon)\gamma}  - 2(\|w\|_2^2 + \|w^*\|_2^2) \right) \cdot \|w - w^*\|_2^2+ 4 \|w^*\|_2^2 \cdot \frac{\sigma^2}{(1+\epsilon)\gamma}
\leq 4 \|w^*\|_2^2 \cdot \|w\|^2 - (\|w\|_2^2 + \|w^*\|_2^2)^2 
\]
which is equivalent to
\[
\|w - w^*\|_2^4 - 2\left( \|w\|_2^2 + \left( 1 -\frac{2 }{(1+\epsilon)\gamma}  \right) \|w^*\|_2^2 \right) \cdot \|w - w^*\|_2^2 + (\|w\|_2^2 - \|w^*\|_2^2)^2 + 4 \|w^*\|_2^2 \cdot \frac{\sigma^2}{(1+\epsilon)\gamma}
\leq 0.
\]
To complete the square, we compute
\begin{equation*}
    \begin{split}
        &\left( \|w\|_2^2 + \left( 1 -\frac{2 }{(1+\epsilon)\gamma}  \right) \|w^*\|_2^2 \right)^2 - (\|w\|_2^2 - \|w^*\|_2^2)^2 - 4 \|w^*\|_2^2 \cdot \frac{\sigma^2}{(1+\epsilon)\gamma}\\
        =& \left( 1 -\frac{2 }{(1+\epsilon)\gamma}  \right)^2 \|w^*\|_2^4 + 2 \left( 1 -\frac{2 }{(1+\epsilon)\gamma}  \right) \|w\|_2^2 \|w^*\|_2^2 - \|w^*\|_2^4 + 2 \|w\|_2^2\|w^*\|_2^2 - 4 \|w^*\|_2^2 \cdot \frac{\sigma^2}{(1+\epsilon)\gamma}\\
        =& \left( \frac{4 }{(1+\epsilon)^2\gamma^2 }- \frac{4}{(1+\epsilon)\gamma}  \right) \|w^*\|_2^4 + 4 \left( 1 -\frac{1 }{(1+\epsilon)\gamma}  \right) \|w\|_2^2 \|w^*\|_2^2 - 4 \|w^*\|_2^2 \cdot \frac{\sigma^2}{(1+\epsilon)\gamma}\\
        =& 4 \|w^*\|_2^2 \left[ \left( 1 -\frac{1 }{(1+\epsilon)\gamma}  \right) \left( \|w\|_2^2 - \frac{\|w^*\|_2^2}{(1+\epsilon)\gamma} \right) -  \frac{\sigma^2}{(1+\epsilon)\gamma} \right]\\
        \leq & 4 \|w^*\|_2^2 \left[ \left( 1 +\epsilon -\frac{1 }{\gamma}  \right) \left( \|w\|_2^2 +\epsilon \|w\|_2^2- \frac{\|w^*\|_2^2}{\gamma} \right) -  \frac{\sigma^2}{\gamma} \right]\\
    \end{split}
\end{equation*}
where in the last step we use $(1+\epsilon)^2 \geq 1$ and $\frac{\sigma^2(1+\epsilon)}{\gamma} \geq \frac{\sigma^2}{\gamma}$. To simplify, it is routine to check that
\[
\left( 1 +\epsilon -\frac{1 }{\gamma}  \right) \left( \|w\|_2^2 +\epsilon \|w\|_2^2- \frac{\|w^*\|_2^2}{\gamma} \right) - \left( 1  -\frac{1 }{\gamma}  \right) \left( \|w\|_2^2 - \frac{\|w^*\|_2^2}{\gamma} \right) \leq 3\epsilon \| w \|_2^2
\]
and so we can conclude that
\[
\left| \|w - w^*\|_2^2 - \left[ \|w\|_2^2 + \left( 1 -\frac{2 }{(1+\epsilon)\gamma}  \right) \|w^*\|_2^2 \right] \right| \leq 2\|w^*\|_2 \sqrt{\left( 1  -\frac{1 }{\gamma}  \right) \left( \|w\|_2^2 - \frac{\|w^*\|_2^2}{\gamma} \right)- \frac{\sigma^2}{\gamma} +  3\epsilon \| w \|_2^2}.
\]
as desired.
\end{proof}

\IsotropicNorm*

\begin{proof}
The proof strategy here follows the same lines as in Theorem 2 of \citet{uc-interpolators}, but handles the $w^*$ term more carefully. First, we introduce the Lagrangian and apply a change of variable
\begin{equation*}
    \begin{split}
        \min_{Xw= Y} \| w\|^2 &= \min_w \max_{\lambda} \, \langle \lambda, Xw - Y \rangle + \| w\|^2 \\
        &= \min_w \max_{\lambda} \, \langle \lambda, Xw - \xi \rangle + \|w + w^* \|^2 \\
    \end{split}
\end{equation*}
To apply CGMT (\cref{thm:gmt}), we need a double truncation argument. For any $r, t > 0$, introduce the following problem:
\begin{equation}
    \Phi_r(t) = \min_{\|w + w^* \|^2 \leq 2t} \max_{\|\lambda\| \leq r} \, \langle \lambda, Xw - \xi \rangle + \| w + w^* \|^2.
\end{equation}
We also introduce 
\begin{equation}
    \begin{split}
        \Phi(t) 
        &= \min_{\|w + w^* \|^2 \leq 2t} \max_{\lambda} \, \langle \lambda, Xw - \xi \rangle + \| w + w^* \|^2 \\
        &= \min_{\substack{Xw = \xi \\ \| w + w^* \|^2 \leq 2t}} \, \| w + w^* \|^2 \\
    \end{split}
\end{equation}
and claim that $\Phi_r(t) \to \Phi(t)$ as $r \to \infty$. By definition, $\Phi_r(t) \leq \Phi_s(t)$ for $r \leq s$. We consider two cases:
\begin{enumerate}
    \item $\Phi(t) = \infty$, i.e. the minimization problem defining $\Phi(t)$ is infeasible. In this case, we know that for all $\|w + w^*\|^2 \le 2t$
    \begin{equation*}
        \| X w - \xi\|_2 > 0.
    \end{equation*}
    By compactness, there exists $\mu = \mu(X, \xi) > 0$ (in particular, independent of $r$) such that
    \begin{equation*}
        \| X w - \xi \|_2 \geq \mu.
    \end{equation*}
    Therefore, considering $\lambda$ along the direction of $ X w - \xi$ shows that
    \begin{equation*}
        \Phi_r(t) = \min_{\| w + w^* \|^2 \leq 2t} \max_{\|\lambda\|_2 \leq r} \, \langle \lambda, Xw - \xi \rangle + \| w + w^* \|^2 \geq r\mu 
    \end{equation*}
    so $\Phi_r(t) \to \infty$ as $r \to \infty$. 
    
    \item Otherwise $\Phi(t) < \infty$, i.e.\ the minimization problem defining $\Phi(t)$ is feasible. In this case, we can let $w(r)$ be an arbitrary minimizer achieving the objective $\Phi_r(t)$ for each $r \geq 0$ by compactness. By compactness again, the sequence $\{ w(r) \}_{r=1}^{\infty}$ at positive integer values of $r$ has a subsequential limit $w(\infty)$ such that $\| w(\infty) + w^* \| \leq 2t$. Equivalently, there exists an increasing sequence $r_n$ such that $\lim_{n \to \infty} w(r_n) = w(\infty)$.
    
    Suppose for the sake of contradiction that $X w(\infty) \ne \xi$, then by continuity, there exists $\mu > 0$ and a sufficiently small $\epsilon > 0$ such that for all $\|w - w(\infty) \|_2 \leq \epsilon$
    \begin{equation*}
        \|Xw - \xi \|_2 \geq \mu
    .\end{equation*}
    This implies that for sufficiently large $n$, we have
    \begin{equation*}
        \|Xw(r_n) - \xi \|_2 \geq \mu
    \end{equation*}
    and by the same argument as in the previous case
    \begin{equation*}
        \Phi_{r_n}(t) = \max_{\|\lambda\|_2 \leq r} \, \langle \lambda, Xw(r_n) - \xi \rangle + \| w(r_n) + w^* \|^2 \geq r\mu
    \end{equation*}
    so $\Phi_{r_n} \to \infty$, but this is impossible since $\Phi_r(t) \leq \Phi(t) < \infty$. By contradiction, it must be the case that $Xw(\infty) = \xi$. By taking $\lambda = 0$ in the definition of $\Phi_r(t)$, we have
    \begin{equation*}
        \Phi_{r_n}(t) \geq \| w(r_n) + w^* \|^2
    .\end{equation*}
    By continuity, we show that 
    \begin{equation*}
        \lim \inf_{n \to \infty} \Phi_{r_n}(t) \geq \lim_{n \to \infty} \|  w(r_n) + w^* \|^2 = \| w(\infty) + w^* \|^2 \geq \Phi(t)
    .\end{equation*}
    Since $\Phi_{r_n}(t) \leq \Phi(t)$, the limit of $\Phi_{r_n}(t)$ exists and equals $\Phi(t)$. We can conclude that $\lim_{r \to \infty} \Phi_r(t) = \Phi(t)$ because $\Phi_r(t)$ is an increasing function of $r$.
\end{enumerate}
In both cases, we have $\Phi_r(t) \to \Phi(t)$ as $r \to \infty$. The auxiliary problem corresponding to $\Phi_r(t)$ is
\begin{equation}
    \phi_r(t) = \min_{\| w + w^* \|^2 \leq 2t} \max_{\|\lambda\|_2 \leq r} \, \| \lambda \| \langle H, w \rangle + \| w\| \langle G, \lambda \rangle -\langle \lambda, \xi \rangle + \|  w + w^* \|^2 
\end{equation}
which is upper bounded by 
\begin{equation}
    \begin{split}
        \phi(t) 
        &= \min_{\| w + w^* \|^2 \leq 2t} \max_{\lambda} \, \| \lambda \| \langle H, w \rangle + \| w\| \langle G, \lambda \rangle -\langle \lambda, \xi \rangle + \| w + w^* \|^2 \\
        &= \min_{\substack{\langle H, w \rangle + \| G \| w\| - \xi\| \leq 0 \\ \| w + w^* \|^2 \leq 2t}} \, \| w + w^* \|^2. \\
    \end{split}
\end{equation}
Applying CGMT and the fact that $\Phi_r(t)$ monotonically increases to $\Phi(t)$ almost surely, we can conclude
\begin{equation*}
    \begin{split}
        \Pr \left( \min_{Xw= Y} \| w\|^2 > t \, | \, \xi \right) &= \Pr \left( \Phi(t) > t \, | \, \xi \right) = \Pr \left( \lim_{r \to \infty} \Phi_r(t) > t \, | \, \xi \right) \\
        &\leq \lim_{r \to \infty} \Pr \left(  \Phi_r(t) > t \, | \, \xi \right) \\
        &\leq 2 \cdot \lim_{r \to \infty} \Pr \left( \phi_r(t) > t \, | \, \xi \right) \\
        &\leq 2 \cdot \Pr \left( \phi(t) > t \, | \, \xi \right) = 2 \cdot \Pr \left( \min_{\langle H, w \rangle + \| G \| w\| - \xi\| \leq 0 } \, \| w + w^* \|^2 > t \, | \, \xi \right) \\
    \end{split}
\end{equation*}
By tower law, we have shown that
\[
\Pr \left( \min_{Xw= Y} \| w\|^2 > t \right) \leq 2 \cdot \Pr \left( \min_{ \| G \| w\| - \xi\| \leq \langle H, w \rangle } \, \| w + w^* \|^2 > t  \right) .
\]
To upper bound the minimum, we consider $w$ of the form $\alpha w^* + \beta PH$ where $P = I - \frac{w^*(w^*)^T}{\|w^* \|^2}$. For the simplicity of notation, define
\[
\epsilon = 2 \sqrt{\frac{\log(40/\delta)}{n}} \quad \text{ and } \quad \rho = \sqrt{\frac{1}{n}} + 2 \sqrt{\frac{\log(20/\delta)}{n}}.
\]
By a union bound, the following collection of events occurs with probability at least $1-\delta/2$:
\begin{enumerate}
    \item By \cref{lem:lowrank-projection}, it holds that
    \[
    | \langle \xi, G \rangle | \leq \rho \| \xi \| \cdot \|G\|
    \]
    \item By \cref{lem:norm-concentration}, it holds that
    \[
    (1-\epsilon) \sigma\sqrt{n} \leq \| \xi \| \leq (1+\epsilon) \sigma\sqrt{n}
    \]
    \[
    (1-\epsilon) \sqrt{n} \leq \| G \| \leq (1+\epsilon) \sqrt{n}
    \]
    \[
    \left(\sqrt{\frac{d-1}{n}}-\epsilon\right) \sqrt{n} 
    \leq \| PH \| 
    \leq \left(\sqrt{\frac{d-1}{n}}+\epsilon\right) \sqrt{n}
    \]
    \item By standard Gaussian tail bound, it holds that
    \[
    |\langle H, w^* \rangle| \leq \| w^* \| \epsilon \sqrt{n}
    \]
\end{enumerate}
The above bounds imply that
\begin{equation*}
    \begin{split}
        \| G \| w\| - \xi\|^2 &= \| G\|^2 \| w\|^2 + \| \xi \|^2 - 2 \| w\| \langle G, \xi \rangle \\
        &\leq (1+\rho) (\| G\|^2 \| w\|^2 + \| \xi \|^2)\\
        &\leq (1+\rho)(1+\epsilon)^2 n (\| w\|^2 + \sigma^2).
    \end{split}
\end{equation*}
By orthogonality, observe that
\[
\| w \|^2 = \alpha^2 \| w^* \|^2 + \beta^2 \| PH\|^2
\]
\[
\langle H, w \rangle = \alpha \langle H, w^* \rangle + \beta \| PH \|^2,
\]
and so to ensure that $\| G \| w\| - \xi\| \leq \langle H, w \rangle$, we can choose $\beta$ such that
\[
(1+\rho)^{1/2}(1+\epsilon) \sqrt{n (\alpha^2 \| w^*\|^2 + \beta^2 \| PH\|^2 + \sigma^2)} + \alpha  \| w^* \| \epsilon \sqrt{n} \leq \beta \| PH\|^2.
\]
Note that it suffices to have
\[
(1+\rho)^{1/2}(1+2\epsilon) \sqrt{n (\alpha^2 \| w^*\|^2 + \beta^2 \| PH\|^2 + \sigma^2)} \leq \beta \| PH\|^2 
\]
\[
\iff
\alpha^2 \frac{ \| w^*\|^2 }{ (1+\rho)^{-1}(1+2\epsilon)^{-2}\frac{\| PH\|^2}{n} - 1 }
+
\frac{ \sigma^2 }{ (1+\rho)^{-1}(1+2\epsilon)^{-2} \frac{\| PH\|^2}{n} - 1 }
\leq \beta^2 \| PH\|^2
\]
Again, by orthogonality, we have
\[
\| w + w^* \|^2 = (1+\alpha)^2 \| w^* \|^2 + \beta^2 \| PH \|^2
\]
and so
\begin{equation*}
    \begin{split}
        &\min_{ \| G \| w\| - \xi\| \leq \langle H, w \rangle } \, \| w + w^* \|^2 \\
        \leq &\frac{ \sigma^2 }{ (1+\rho)^{-1}(1+2\epsilon)^{-2} \frac{\| PH\|^2}{n} - 1 } + \min_{\alpha} \, (1+\alpha)^2 \| w^* \|^2 + \alpha^2 \frac{ \| w^*\|^2 }{ (1+\rho)^{-1}(1+2\epsilon)^{-2}\frac{\| PH\|^2}{n} - 1 }\\
        = &\frac{ \sigma^2 }{ (1+\rho)^{-1}(1+2\epsilon)^{-2} \frac{\| PH\|^2}{n} - 1 } + \frac{ \| w^* \|^2 }{ (1+\rho)^{-1}(1+2\epsilon)^{-2}\frac{\| PH\|^2}{n} }
    \end{split}
\end{equation*}
Finally, we can plug in the high probability lower bound for $\|PH\|\sqrt{n}$ and the proof is complete after some routine calculations.
\end{proof}
\subsection{LASSO with Isotropic Covariance}

\LASSOIsotropic*

\begin{proof}
We use that for $\cK' := \{u: \|w^* + u\|_1 \le \|w^*\|_1 \}$  
\[ \langle w^* - w, x \rangle \le \|w^* - w\| \sup_{u \in \cK' \cap S^{n - 1}} \langle u, x \rangle \]%
where $S^{n - 1}$ is the unit sphere. 
Recall that $\omega := W(\cK' \cap S^{n - 1})$ denotes the Gaussian width of the intersection of the tangent cone $\cK'$ with the unit sphere. 
 Let $\epsilon = \Theta\left(\frac{\log(36/\delta)}{n}\right)^{1/2}$ as in \cref{thm:OLS}, then with this notation
\cref{thm:optimistic} gives
\[ \sigma^2 + \|w^* - w\|_2^2 \le (1 + \beta)\left(\sqrt{\hat{L}(w)} + \|w^* - w\|_2(\omega + 2\epsilon)/\sqrt{n}\right)^2 \le (1 + 14\epsilon)\left(\sqrt{\hat{L}(w)} + \|w^* - w\|_2(\omega + 2\epsilon)/\sqrt{n}\right)^2. \]
This is a quadratic equation in $\|w^* - w\|_2$ which is of exactly the same form as the quadratic equation that arose in the analysis of Ordinary Least Squares (proof of \cref{thm:OLS}), if we define $\gamma = \omega^2/n$. So solving the quadratic equation in the exact same way, we find that under the assumption $\gamma + 2\epsilon/\sqrt{n} < 1$ that
\begin{equation} \label{eqn:lasso-bound}
    \left| \sqrt{L(w)-\sigma^2} - \sqrt{\frac{\gamma\hat{L}(w)}{(1-\gamma)^2}} \, \right| \leq \epsilon \sqrt{\hat{L}(w)} + \sqrt{\frac{1}{1-\gamma} \left( \frac{\hat{L}(w)}{1-\gamma} - \sigma^2 \right) + \epsilon \hat{L}(w)}.
\end{equation}
\end{proof}

\section{Proofs for Section~\ref{sec:local-gw}} \label{apdx:local-gw}
We start with the following result, which lets us upper bound the training error of the ERM in a convex set $\cK$ and is proved using a direct application of the Convex Gaussian Minmax Theorem. 
\ERMPerformance*

\begin{proof}
Observe that
\begin{align}\label{eqn:cgmt-erm}
\min_{w \in \cK} \sqrt{\hat{L}(w)}
&= \frac{1}{\sqrt{n}} \min_{w \in \cK} \max_{\|\lambda\|_2 \le 1} \langle \xi +  Z \Sigma^{1/2} (w^* - w), \lambda \rangle
\end{align}
which is a minimax optimization problem over a convex-conave function on a convex set.
Hence by the Convex Gaussian Minmax Theorem (\cref{thm:gmt}) and the same kind of truncation argument based on \cref{lem:truncation}, to get a probability at least $1 - \delta$ upper bound on the Primary Optimization \eqref{eqn:cgmt-erm}, it suffices to prove a probability at least $1 - \delta/2$ upper bound on the following auxillary problem:
\begin{align*}
	\MoveEqLeft \frac{1}{\sqrt{n}} \min_{w \in \cK} \max_{\|\lambda\|_2 \le 1} \langle \xi, \lambda \rangle + \|\lambda\|_2 \langle H, \Sigma^{1/2}(w^* - w) \rangle + \|w^* - w\|_{\Sigma} \langle G, \lambda \rangle \\
	&= \frac{1}{\sqrt{n}} \min_{w \in \cK} \max_{\|\lambda\|_2 \le 1} \langle \xi + \|w^* - w\|_{\Sigma} G, \lambda \rangle + \|\lambda\|_2 \langle H, \Sigma^{1/2}(w^* - w) \rangle \\
	&= \frac{1}{\sqrt{n}} \min_{w \in \cK} \max\left\{\|\xi + \|w^* - w\|_{\Sigma} G\|_2 + \langle H, \Sigma^{1/2}(w^* - w) \rangle, 0\right\}\\
	&\le \frac{1}{\sqrt{n}} \min_{w \in \cK} \max\left\{(1 + \beta_1) \sqrt{\sigma^2 n + \|w^* - w\|_{\Sigma}^2 n} - \langle H, \Sigma^{1/2}(w - w^*) \rangle, 0 \right\}
\end{align*}
where in the last equality, we used that the maximum is attained along the direction $\xi + \|w^* - w\|_{\Sigma} G$ and is attained either at $\|\lambda\| = 0$ or $\|\lambda\| = 1$. Also, consider two cases: either there exists $w \in \cK$ such that the non-zero quantity inside the max is negative, in which case the minimum is just zero, or for all $w \in \cK$, this quantity is positive and so we can drop the max inside the minimum. In either case, we see that this is not larger than
\[ \max\left\{0, \min_{w \in \cK} (1 + \beta_1) \sqrt{\sigma^2  + \|w^* - w\|_{\Sigma}^2} - \frac{1}{\sqrt{n}} \langle H, \Sigma^{1/2}(w - w^*) \rangle \right\}. \]
For any particular $r \ge 0$, we can control it by restricting to $\cK_r$
\begin{equation*}
    \begin{split}
        &\min_{w \in \cK} (1 + \beta_1) \sqrt{\sigma^2  + \|w^* - w\|_{\Sigma}^2} - \frac{1}{\sqrt{n}} \langle H, \Sigma^{1/2}(w - w^*) \rangle\\
        \leq &\min_{w \in \cK_r} (1 + \beta_1) \sqrt{\sigma^2  + r^2} - \frac{1}{\sqrt{n}} \langle H, \Sigma^{1/2}(w - w^*) \rangle\\\
        = &(1 + \beta_1) \sqrt{\sigma^2  + r^2} - \frac{1}{\sqrt{n}} \sup_{\|w^* - w\|_{\Sigma} \leq r} \langle H, \Sigma^{1/2}(w - w^*) \rangle
    \end{split}
\end{equation*}
and so by Gaussian concentration  (\cref{thm:gaussian-concentration})
\[ \min_{w \in \cK}\sqrt{\hat{L}(w)} \le \max\left\{0, (1 + \beta_1) \sqrt{\sigma^2 + r^2} - W_{\Sigma}(\cK_r)/\sqrt{n} + O(r\sqrt{\log(2/\delta)/n}))\right\}. \]
In particular, we can choose the $r$ that minimizes the right hand side, which concludes the proof.
\end{proof}

\begin{lemma}\label{lem:r-loss-convex}
For any $\sigma \ge 0$, the function $r \mapsto \sqrt{\sigma^2 + r^2}$ is strictly increasing, convex, and $1$-Lipschitz on $\mathbb{R}_{\ge 0}$, and also strictly convex if $\sigma > 0$.
\end{lemma}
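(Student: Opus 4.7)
The plan is to verify all three properties by a direct calculus computation on the smooth function $f(r) := \sqrt{\sigma^2 + r^2}$ defined on $\mathbb{R}_{\ge 0}$. I will differentiate twice and read off the monotonicity, Lipschitz, and convexity claims from the sign and size of $f'$ and $f''$.

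First, I would compute
\[
f'(r) = \frac{r}{\sqrt{\sigma^2 + r^2}}.
\]
For the monotonicity claim, note that $f'(0) = 0$ when $\sigma > 0$ but $f'(r) > 0$ for all $r > 0$ in either case ($\sigma \ge 0$), so $f$ is strictly increasing on $\mathbb{R}_{\ge 0}$ (when $\sigma = 0$ we have $f(r) = r$, which is trivially strictly increasing). For the Lipschitz claim, observe
\[
|f'(r)| = \frac{r}{\sqrt{\sigma^2 + r^2}} \le \frac{r}{\sqrt{r^2}} = 1,
\]
so by the mean value theorem $f$ is $1$-Lipschitz on $\mathbb{R}_{\ge 0}$.

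For convexity, I would compute
\[
f''(r) = \frac{\sqrt{\sigma^2 + r^2} - r \cdot \frac{r}{\sqrt{\sigma^2 + r^2}}}{\sigma^2 + r^2} = \frac{\sigma^2}{(\sigma^2 + r^2)^{3/2}}.
\]
This is nonnegative, so $f$ is convex; when $\sigma > 0$ it is in fact strictly positive, which gives strict convexity. No step here is an obstacle; the only thing to be slightly careful about is treating the boundary $r = 0$ and the degenerate case $\sigma = 0$ separately in the monotonicity claim, which is immediate as noted above.
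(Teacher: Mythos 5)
Your proof is correct and follows essentially the same route as the paper: compute $f'(r) = r/\sqrt{\sigma^2 + r^2}$ to read off monotonicity and the $1$-Lipschitz bound, and compute $f''(r) = \sigma^2/(\sigma^2 + r^2)^{3/2} \ge 0$ for (strict) convexity. You are in fact slightly more careful than the paper's terse claim that $f'(r) \in (0,1]$, which is not literally true at $r=0$ when $\sigma > 0$; your handling of the boundary and of the degenerate case $\sigma = 0$ is the right way to phrase it.
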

\begin{proof}
Let $f(r) := \sqrt{\sigma^2 + r^2}$, then
\[ f'(r) = \frac{r}{\sqrt{\sigma^2 + r^2}} \in (0, 1] \]
and
\begin{align*} 
f''(r) 
= \frac{1}{\sqrt{\sigma^2 + r^2}} - \frac{r^2}{(\sigma^2 + r^2)^{3/2}}
= \frac{\sigma^2}{(\sigma^2 + r^2)^{3/2}}
\end{align*}
which is nonnegative, and positive if $\sigma > 0$.
\end{proof}

\begin{lemma}\label{lem:cKr-concave}
If $\cK$ is a convex set in $\mathbb{R}^d$ and
\[ \cK_r := \cK \cap \{w : \|w - w^*\|_{\Sigma} \le r\} \]
then for any $x \in \mathbb{R}^d$, the function
\[ g(r) := \sup_{w \in \cK_r} \langle x, w - w^* \rangle \]
is increasing and concave. In particular, the function $\omega(r) := W_{\Sigma}(\cK_r)$ is increasing and concave.
\end{lemma}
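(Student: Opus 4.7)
The plan is to prove the two properties for $g$ first and then transfer them to $\omega$ by applying the $g$-argument pointwise to each realization of the Gaussian vector in the definition of $W_\Sigma$. Monotonicity is immediate: for $r_1 \le r_2$ the constraint $\|w - w^*\|_\Sigma \le r_1$ is more restrictive than $\|w - w^*\|_\Sigma \le r_2$, so $\cK_{r_1} \subseteq \cK_{r_2}$ and the supremum (or its expectation) cannot decrease, giving $g(r_1) \le g(r_2)$ and $\omega(r_1) \le \omega(r_2)$.

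Concavity of $g$ follows from a standard convex-combination argument resting on two facts: convexity of $\cK$, and the triangle inequality for the Mahalanobis semi-norm $\|\cdot\|_\Sigma$. Fix $\lambda \in [0,1]$, $r_1, r_2 \ge 0$, and $\varepsilon > 0$; choose near-maximizers $w_i \in \cK_{r_i}$ with $\langle x, w_i - w^*\rangle \ge g(r_i) - \varepsilon$ (needed because the supremum may not be attained). Setting $w_\lambda := \lambda w_1 + (1-\lambda) w_2$, convexity of $\cK$ gives $w_\lambda \in \cK$, and the triangle inequality yields $\|w_\lambda - w^*\|_\Sigma \le \lambda \|w_1-w^*\|_\Sigma + (1-\lambda)\|w_2-w^*\|_\Sigma \le \lambda r_1 + (1-\lambda) r_2$, so $w_\lambda \in \cK_{\lambda r_1 + (1-\lambda) r_2}$. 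Linearity of the inner product then produces $g(\lambda r_1 + (1-\lambda) r_2) \ge \langle x, w_\lambda - w^*\rangle = \lambda \langle x, w_1 - w^*\rangle + (1-\lambda)\langle x, w_2 - w^*\rangle \ge \lambda g(r_1) + (1-\lambda) g(r_2) - \varepsilon$, and sending $\varepsilon \to 0$ completes the concavity argument for $g$.

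For $\omega$, the same argument applies pointwise. For each fixed $H \sim N(0, I_d)$, taking $x = \Sigma^{1/2} H$ in the previous paragraph shows that $r \mapsto \sup_{w \in \cK_r} \langle \Sigma^{1/2} H, w - w^*\rangle$ is concave; since concavity is preserved under integration, the corresponding Gaussian expectation is concave in $r$. The one mild subtlety is the absolute value in the definition $W(S) = \E \sup_{s \in S}|\langle s, H\rangle|$: I would handle it by writing $|\langle \cdot, H\rangle| = \max(\langle \cdot, H\rangle, \langle \cdot, -H\rangle)$, applying the $g$-argument separately to each branch, and exploiting the Gaussian symmetry $H \stackrel{d}{=} -H$ together with the observation that the $w^*$-centering contributes only a term constant in $r$ and so does not affect concavity. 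This reconciliation with the absolute value is the step I expect to require the most care in a fully rigorous write-up, but it is essentially bookkeeping once the clean $g$-proof is in hand.
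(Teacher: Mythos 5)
Your proof of the monotonicity and concavity of $g$ is correct and follows essentially the same convex-combination route as the paper: convexity of $\cK$ together with the triangle inequality for $\|\cdot\|_\Sigma$ places the interpolant $w_\lambda$ inside $\cK_{\lambda r_1 + (1-\lambda) r_2}$, after which linearity of $\langle x, \cdot\rangle$ finishes the argument. The one small difference is cosmetic: you treat a possibly unattained supremum via $\varepsilon$-near-maximizers, whereas the paper passes without loss of generality to the closure of $\cK$ (which leaves $g$ unchanged). Both are fine, and yours avoids the extra remark.

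The sketch for the ``in particular'' clause about $\omega(r) = W_\Sigma(\cK_r)$, however, has a real gap, and it is not merely bookkeeping. Writing $|\langle \cdot, H\rangle| = \max(\langle \cdot, H\rangle, \langle \cdot, -H\rangle)$ and pushing the supremum over $w \in \cK_r$ inside gives, for each fixed $H$, a pointwise \emph{maximum} of two concave functions of $r$; a maximum of concave functions is not concave. Taking the expectation over $H$ and invoking $H \stackrel{d}{=} -H$ does not repair this: the symmetry equates the laws of the two branches, but the max still sits inside the integral. Indeed, with the paper's absolute-value definition of Gaussian width, the claim can literally fail. In dimension one with $\Sigma = 1$, $W(\cK_r) = \sqrt{2/\pi}\,\sup_{s \in \cK_r}|s|$, and for $\cK$ an interval not symmetric about $w^*$ the function $r \mapsto \sup_{s \in \cK_r}|s|$ has a slope that can jump \emph{upward} as the ball $\{|w - w^*| \le r\}$ sweeps past the nearer endpoint of $\cK$ (flat while pinned at the nearer endpoint, then rising again once the farther endpoint dominates), violating concavity. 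The paper itself offers no proof of the ``in particular'' and appears to be using it loosely; what is actually invoked downstream (the proof of \cref{thm:erm-performance} and the functionals $\psi_\delta^\pm$) is the signed, centered quantity $\E_H \sup_{w \in \cK_r}\langle \Sigma^{1/2}H, w - w^*\rangle$, which \emph{is} concave by exactly the ``integrate $g$ pointwise in $H$'' observation you make in the first half of that paragraph. So the right resolution is not to reconcile the absolute value after the fact, but to note that the downstream object never has one.
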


\begin{proof}
Without loss of generality we may assume the set $\cK$ is closed, since replacing $\cK$ by its closure does not change the value of $g(r)$.
The fact that it is increasing is obvious from the definition. 
Let $r = (1 - \lambda)s + \lambda t$ and let $w_s \in \cK_s, w_t \in \cK_t$. Then $w_r := (1 - \lambda)w_s + \lambda w_t$ lies in $\cK_r$ by convexity of $\cK$, and because $\|w_r - w^*\|_{\Sigma} \le (1 - \lambda) \|w_s - w^*\|_{\Sigma} + \lambda \|w_t - w^*\|_{\Sigma}$ by the triangle inequality. Since
\[ \langle w_r - w^*, x \rangle = (1 - \lambda)\langle w_s - w^*, x \rangle + \lambda \langle w_t - w^*, x \rangle \]
and $w_s,w_t$ were arbitrary vectors in $\cK_s,\cK_t$, taking the maximum over $w_s,w_t$ shows
\[ \max_{w \in \cK_r} \langle w - w^*, x \rangle \ge (1 - \lambda) g(s) + \lambda g(t). \qedhere \]
\end{proof}

We now give the main arguments used in the proof of \cref{thm:local}. 
The following lemma shows how to derive lower bounds on the generalization error of the constrained Empirical Risk Minimizer, by formalizing the informal argument from \cref{sec:local-gw}. To avoid having to perform a union bound over all localization radiuses $r$, we show how to get the conclusion by applying \cref{thm:optimistic} for a few carefully chosen values of sets $\cK_r$; this is equivalent to applying \cref{thm:optimistic} once with a simplified version of the ``optimal complexity functional'' described before. 
\begin{lemma}\label{lem:erm-lowerbound}
Suppose that $\mathcal K$ is a convex set and we are under the model assumptions \eqref{eqn:model} and recall summary functionals $\psi^+_{\delta}, \psi^-_{\delta}$ as defined in \eqref{eqn:psi+} and \eqref{eqn:psi-}.  Let $\delta > 0$ be arbitrary, let $\mu^* := \min_{r \ge 0} \psi_{\delta}^+(r)$, and  suppose that $r_- \ge 0, \mu > \mu^*$ and $\eta > 0$ are such that we have $\eta K \le \delta$ for  $K := \left \lceil \frac{r_-}{\mu - \mu^*} \right \rceil$ 
and for all $r \in [0,r_-]$
\[ \min_{r \in [0,r_-]} \psi^-_{\eta}(r) > \mu. \]
Then with probability at least $1 - 2\delta$, the constrained empirical risk minimizer $\hat w = \arg\min_{w \in \cK} \hat{L}(w)$ satisfies
\[ \|\hat w - w^*\|_{\Sigma} > r_-. \]
\end{lemma}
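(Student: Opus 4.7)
The approach is a contrapositive-style argument. Theorem~\ref{thm:erm-performance} already supplies an upper bound $\sqrt{\hat{L}(\hat{w})} \le \mu^*$ with probability at least $1 - \delta$. I will show that, on an additional high-probability event, every $w \in \cK_{r_-}$ satisfies $\sqrt{\hat{L}(w)} > \mu^*$ strictly, which forces $\hat{w} \notin \cK_{r_-}$, i.e.\ $\|\hat{w} - w^*\|_{\Sigma} > r_-$.

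To achieve a uniform lower bound across the continuous family $\{\cK_r\}_{r \in [0,r_-]}$, the plan is to discretize. Set $r_i := i \cdot r_-/K$ for $i = 0, \dots, K$, so consecutive radii differ by at most $(\mu - \mu^*)$ by the choice of $K$. For each $i \ge 1$, I invoke Theorem~\ref{thm:optimistic} on $\cK_{r_i}$ with the complexity function
\begin{equation*}
    F_i(w) = W_{\Sigma}(\cK_{r_i}) + C' r_i \sqrt{\log(C''/\eta)} \quad \text{for } w \in \cK_{r_i},
\end{equation*}
and $F_i(w) = +\infty$ outside $\cK_{r_i}$. The domination $\langle w - w^*, x\rangle \le F_i(w)$ required by Theorem~\ref{thm:optimistic} follows from Theorem~\ref{thm:gaussian-concentration}: the map $x \mapsto \sup_{w \in \cK_{r_i}} \langle w-w^*, x\rangle$ has expectation $W_{\Sigma}(\cK_{r_i})$ and is $r_i$-Lipschitz in $x$ because the radius of $\Sigma^{1/2}\cK_{r_i}$ is at most $r_i$. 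Choosing $C', C''$ so that each application of Theorem~\ref{thm:optimistic} fails with probability at most $\eta$, a union bound over $i = 1, \dots, K$ loses at most $K\eta \le \delta$.

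On the resulting good event, fix any $w \in \cK_{r_-}$ and let $i$ be the smallest index with $w \in \cK_{r_i}$, so that $\|w - w^*\|_{\Sigma} > r_{i-1}$. Rearranging the bound of Theorem~\ref{thm:optimistic} (using $(1+\beta_1)^{-1/2} \ge 1 - \beta_1$ and $L(w) = \sigma^2 + \|w-w^*\|_{\Sigma}^2$) gives
\begin{equation*}
    \sqrt{\hat{L}(w)} \ge (1-\beta_1)\sqrt{\sigma^2 + \|w-w^*\|_{\Sigma}^2} - F_i(w)/\sqrt{n}.
\end{equation*}
Since $r \mapsto (1-\beta_1)\sqrt{\sigma^2 + r^2}$ is $1$-Lipschitz by Lemma~\ref{lem:r-loss-convex}, replacing $\|w-w^*\|_{\Sigma}$ by $r_i$ costs at most $r_i - r_{i-1} \le \mu - \mu^*$. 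Collecting constants so that $F_i(w)/\sqrt{n}$ is dominated by the term $W_{\Sigma}(\cK_{r_i})/\sqrt{n} + Cr_i\sqrt{\log(2/\eta)/n}$ appearing in $\psi^-_{\eta}$, this yields
\begin{equation*}
    \sqrt{\hat{L}(w)} \ge \psi^-_{\eta}(r_i) - (r_i - r_{i-1}) > \mu - (\mu - \mu^*) = \mu^*,
\end{equation*}
where the strict inequality is the hypothesis $\psi^-_{\eta}(r_i) > \mu$. Intersecting with the event from Theorem~\ref{thm:erm-performance} closes the argument with total probability at least $1 - 2\delta$.

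The main obstacle is the bookkeeping of constants: I need to absorb the Gaussian-concentration factor $\sqrt{2\log(\cdot)}$, the $2(\delta + \delta')$ failure probability built into Theorem~\ref{thm:optimistic}, and the logarithmic dependence on the number of discretization points into the single absolute constant $C$ appearing in the definition of $\psi^-_{\eta}$, while ensuring the total failure probability across the $K$ applications stays below $\delta$. The discretization spacing $\mu - \mu^*$ is specifically engineered so that the Lipschitz slack $r_i - r_{i-1}$ exactly absorbs the gap between $\mu$ and $\mu^*$, which is the key quantitative reason the proof closes.
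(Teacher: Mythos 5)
Your proof is correct and takes essentially the same route as the paper: both apply \cref{thm:erm-performance} for the $\sqrt{\hat L(\hat w)} \le \mu^*$ upper bound, discretize $[0,r_-]$ into $K$ grid points with spacing at most $\mu - \mu^*$, apply \cref{thm:optimistic} on each $\cK_{r_i}$ with failure probability $\eta$, use the $1$-Lipschitz property of $r \mapsto \sqrt{\sigma^2 + r^2}$ (\cref{lem:r-loss-convex}) to control the rounding error by $r_i - r_{i-1} \le \mu - \mu^*$, and combine $\psi^-_\eta(r_i) > \mu$ with the union-bound budget $\eta K \le \delta$ to conclude $\sqrt{\hat L(w)} > \mu^*$ on $\cK_{r_-}$. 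The only cosmetic difference is that the paper tracks a slightly tighter Lipschitz constant $(1-\beta_1)$ while you bound it by $1$, which is simpler and still closes.
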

\begin{proof}
Observe that for any fixed value of $r \le r_-$, it follows from  \cref{thm:optimistic} that with probability at least $1 - \eta$ for all $w \in \cK_r$ where $\eta = \delta + \tau/r$
\begin{align}\label{eqn:r-nu}
\sqrt{\hat L(w)} 
&> (1 - \beta_1)\sqrt{\sigma^2 + \|w - w^*\|^2} - W_{\Sigma}(\cK_r)/\sqrt{n} - Cr\sqrt{\log(2/\eta)/n} \\
&\ge \psi_{\eta}^-(r) - (1 - \beta_1)(r - \|w - w^*\|_{\Sigma}) \\
&\ge \mu - (1 - \beta_1)(r - \|w - w^*\|_{\Sigma}) 
\end{align}
where we used the Lipschitz property from \cref{lem:r-loss-convex}. 
We apply this argument for a grid on $[0,r_-]$ which includes the right end point $r_-$ with spacing $\mu - \mu^* < (\mu - \mu^*)/(1 - \beta_1)$, i.e. with $\lceil \frac{r_-}{\mu - \mu^*} \rceil \le K$ many grid points and apply the union bound, it follows that with probability at least $1 - \eta K \ge 1 - \delta$ that for all $w$ with $\|w - w^*\|_{\Sigma} \le r_-$ that
\[ \sqrt{\hat L(w)} > \mu^*. \]
Recall from \cref{thm:erm-performance} that with probability at least $1 - \delta$ the constrained ERM satisfies $\sqrt{\hat L(\hat w)} \le \mu^*$. Thus, by applying the union bound we show that $\|w' - w^*\|_{\Sigma} > r_-$ with probability at least $1 - 2\delta$.
\end{proof}

\LocalThm*

\begin{proof}
We first show the upper bound $\|w - w^*\|_{\Sigma} \le r_+$ for all $w \in \cK$ with $\sqrt{\hat{L}(w)} \le \mu$. If $r_+ = \infty$, then the upper bound is trivial. Otherwise, we have
\begin{equation}\label{eqn:r_+-continuity}
\psi_{\delta}^-(r_+) = \mu 
\end{equation}
by continuity.
Observe that the conclusion of \cref{thm:optimistic} can be written as 
\begin{equation}\label{eqn:rearranged-optimistic}
(1 + \beta)^{-1/2} \sqrt{L(w)} - \frac{F(w)}{\sqrt{n}} \leq \sqrt{\hat L(w)} 
\end{equation}
so taking $F(w) = W(\cK_{r_+}) + C r_+ \sqrt{\log(2/\delta)/n}$ for $w \in \cK_{r_+}$ and $\infty$ outside, applying \cref{thm:optimistic}, and recalling the definition of $\psi_{\delta}^-$ from \eqref{eqn:psi-} and using \eqref{eqn:r_+-continuity} gives 
\begin{equation}\label{eqn:rplus-proof}
\min_{w \in \cK, \|w - w^*\|_{\Sigma} = r_+} \sqrt{\hat{L}(w)} \ge \psi_{\delta}^-(r_+) = \mu
\end{equation}
Also, by definition if $r \geq r^+$, then $\psi_{\delta}^{+}(r) > \psi_{\delta}^{-}(r) \geq \mu > \mu^*$ and so $r$ cannot be the minimizer of $\psi_{\delta}^{+}$, i.e. we have shown $r^* < r^+$, where %
$r^*$ is the minimizer of $\psi_{\delta}^+$ so
\[ \mu^* = \psi_{\delta}^+(r^*) = \min_{r \ge 0} \psi_{\delta}^+(r). \]
Note that since the minimizer $r^* < r^+$, by applying \cref{thm:erm-performance} we have with probability at least $1 - \delta$ that
\begin{align} \label{eqn:psiplus-proof}
\min_{w \in \cK, \|w - w^*\|_{\Sigma} < r_+} \sqrt{\hat{L}(w)} \le  \psi_{\delta}^+(r^*) = \mu^* < \mu.
\end{align}
This establishes the claim $\|w - w^*\|_{\Sigma} \le r$ by convexity: suppose for contradiction there exists $w \in \cK$ such that $\|w - w^*\|_{\Sigma} > r_+$ and $\sqrt{\hat{L}(w)} \le \mu$.  By \eqref{eqn:psiplus-proof}, there exists $w' \in \cK$ with $\|w' - w^*\|_{\Sigma} < r_+$ and $\sqrt{\hat{L}(w)} < \mu$. Therefore, by convexity we conclude that there exists $w''$ which is a convex combination of $w,w'$ such that $\sqrt{\hat{L}(w'')} < \mu$ and $\|w - w^*\|_{\Sigma} = r_+$, but this contradicts \eqref{eqn:rplus-proof}.

Now we show that $\|w - w^*\|_{\Sigma} \ge r_-$ for all $w \in \cK$ with $\sqrt{\hat L(w)} \le \mu$. If $r_- = -\infty$ then the bound is trivial. Otherwise, by continuity
\[ \psi^-_{\tau/r^*}(r_-) = \mu \]
and by definition for all $r < r^*$ we have $\psi^-_{\tau/r^*}(r_-) \ge \mu$. Also, since $\psi^-_{\tau/r^*}(r^*) \le \psi^+_{\delta}(r^*) = \mu^* < \mu$ from the definition, we know that $r^* > r_-$, hence $\tau/r_- > \tau/r^*$ and so 
\[ \min_{r \in [0,r_-]} \psi^-_{\tau/r_-}(r) \ge \min_{r \in [0,r_-]} \psi^-_{\tau/r^*}(r) = \mu. \]
Therefore, we can apply \cref{lem:erm-lowerbound} to conclude that with probability at least $1 - \delta$, the constrained ERM $\hat w = \arg\min_{w \in \cK} \hat{L}(w)$ satisfies 
\[\|\hat w - w^*\|_{\Sigma} > r_-. \] 
By applying \cref{thm:optimistic} analogously to the $r_+$ case, we know that with probability at least $1 - \tau \ge 1 - \delta$,
\begin{equation}\label{eqn:rminus-proof}
\min_{\|w - w^*\|_{\Sigma} = r_-, w \in \mathcal{K}} \sqrt{\hat L(\hat w)} > \mu 
\end{equation}
and since $\mu^* < \mu$, it follows by a convexity argument that for all $w$ with $\|w - w^*\|_{\Sigma} \le r_-$,
\begin{equation}\label{eqn:for-contradiction-mu}
\sqrt{\hat L(w)} > \mu
\end{equation}
which establishes the desired conclusion as the contrapositive. 
The convexity argument is symmetrical to the $r_+$ case: if \eqref{eqn:for-contradiction-mu} is false for some $w$, then interpolating between $w$ and $\hat w$ and observes that there exists a convex combination $w''$ such that $\sqrt{\hat L(w)} \le \mu$ and $\|w'' - w^*\|_{\Sigma} = r_-$, which contradicts \eqref{eqn:rminus-proof}.%
\end{proof}

\pagebreak
\section{Proofs for Section~\ref{sec:improved-rate}} \label{sec:proof-improved-rate} 
\subsection{Faster Rates for Low-Complexity Classes}

\begin{restatable}{lemma}{TwoBound} \label{thm:2bound}
Under the assumptions of \cref{thm:optimistic} and with the definition of $\beta_1$ there, with probability at least $1 - 4(\delta + \delta')$
\[ 
L(\hat w) \le \sigma^2 + (1 + 2\beta_1) \left(\sqrt{\sigma F(\hat w)/\sqrt{n}} + F(\hat w)/\sqrt{n}\right)^2 
\]
where $\hat w$ is any empirical risk minimizer over a closed convex set $\cK$ containing $w^*$, i.e. $\hat{L}(\hat w) = \min_{w \in \cK} \hat{L}(w)$.
\end{restatable}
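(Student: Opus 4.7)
The plan is to combine the optimistic rate of \cref{thm:optimistic} with the first-order (KKT) optimality condition for $\hat w$ as the minimizer of a convex quadratic over the convex set $\mathcal K \ni w^*$. Since $\hat w$ is a constrained minimizer and $w^* \in \mathcal K$, the variational inequality $\langle X^T(X\hat w - Y),\, w^* - \hat w\rangle \ge 0$ holds, and writing $Y - X\hat w = \xi - X(\hat w - w^*)$ in the expansion $n \hat L(\hat w) = \|\xi - X(\hat w - w^*)\|_2^2$ yields the Pythagorean-style bound
\[
\hat L(\hat w) + \|\hat w - w^*\|_{\hat\Sigma}^2 \le \hat L(w^*) = \|\xi\|_2^2/n,
\]
where $\hat \Sigma = X^T X / n$. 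This is the only step that uses convexity of $\mathcal K$.

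Next, I would apply \cref{thm:optimistic} at $\hat w$ to obtain $\sqrt{L(\hat w)/(1+\beta_1)} \le \sqrt{\hat L(\hat w)} + F(\hat w)/\sqrt n$ on an event of probability $\ge 1 - 2(\delta+\delta')$. After handling the edge case $\sqrt{L(\hat w)/(1+\beta_1)} \le F(\hat w)/\sqrt n$ separately (which directly implies $L(\hat w) \le (1+\beta_1) F(\hat w)^2/n$, giving the stated conclusion immediately), I would square the remaining inequality to get a lower bound on $\hat L(\hat w)$ and then combine it with the KKT upper bound above to eliminate $\hat L(\hat w)$. Then I would convert the sample-covariance quantities to population quantities via standard Gaussian concentration (\cref{lem:norm-concentration}): $\|\xi\|_2^2/n \le \sigma^2(1 + O(\beta_1))$, together with a matching lower isometry $\|\hat w - w^*\|_{\hat\Sigma}^2 \ge (1 - O(\beta_1))\|\hat w - w^*\|_\Sigma^2 = (1 - O(\beta_1))(L(\hat w) - \sigma^2)$. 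The critical bookkeeping observation is that the $\sigma^2$ coefficients coming from $\|\xi\|^2/n$ and from $-(L(\hat w) - \sigma^2)$ combine nearly exactly, leaving only corrections of size $O(\beta_1)\sigma^2$; also, the $-F(\hat w)^2/n$ term produced by squaring should be carefully preserved, since it is what eventually lets us package the bound as a perfect square.

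What remains is a quadratic-type inequality roughly of the form $L(\hat w)(2 + O(\beta_1)) \le 2\sigma^2 + 2\sqrt{L(\hat w)}\cdot F(\hat w)/\sqrt n - F(\hat w)^2/n$. I would solve this by completing the square (or applying the quadratic formula) and then use $\sqrt{L(\hat w)} \le \sigma + \sqrt{L(\hat w) - \sigma^2}$ to isolate the excess risk $L(\hat w) - \sigma^2$, finally collecting terms into the compact form $(\sqrt{\sigma F(\hat w)/\sqrt n} + F(\hat w)/\sqrt n)^2$. The probability budget $4(\delta + \delta')$ comes from a union bound over the optimistic-rate event of \cref{thm:optimistic} and two Gaussian concentration events controlling $\|\xi\|_2^2/n$ and the sample-to-population metric conversion.

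The main technical obstacle is this bookkeeping of $\beta_1$-sized corrections: they must combine multiplicatively into exactly the advertised $(1 + 2\beta_1)$ prefactor rather than accumulating into a residual additive $\beta_1\sigma^2$ term, which would destroy the fast-rate scaling in the regime $F(\hat w)/\sqrt n \ll \sigma$. Matching the concentration tolerance $\beta_c = O(\beta_1) = O(\sqrt{\log(1/\delta)/n})$, and making sure the cancellation between the $\|\xi\|^2/n$ upper bound and the lower-isometry bound is exploited symmetrically in $\beta_c$, is the heart of the argument.
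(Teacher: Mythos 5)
Your opening step is fine and matches the paper's: the variational inequality at the constrained ERM gives $\|\hat w - w^*\|_{\hat\Sigma}^2 \le \frac{1}{n}\langle\xi, X(\hat w - w^*)\rangle$, which (after expanding $\hat L(\hat w)$) is equivalent to your Pythagorean bound $\hat L(\hat w) + \|\hat w - w^*\|_{\hat\Sigma}^2 \le \hat L(w^*)$, and this is indeed where convexity of $\cK$ and $w^* \in \cK$ enter.

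The genuine gap is the ``matching lower isometry $\|\hat w - w^*\|_{\hat\Sigma}^2 \ge (1 - O(\beta_1))\|\hat w - w^*\|_{\Sigma}^2$'' that you invoke to pass from the empirical metric to the population metric. This is not a consequence of \cref{lem:norm-concentration} (which concerns concentration of $\|Z\|_2$ for a single Gaussian vector), and more importantly it is false in the regime this lemma is meant for: when $d > n$, $\hat\Sigma = X^T X/n$ has a nontrivial null space, so no multiplicative lower isometry of this form can hold, and $\hat w$ is a data-dependent point, so a pointwise concentration argument at a fixed direction is also unavailable. The correct lower isometry must include an additive $F(\hat w)/\sqrt n$ correction. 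The paper obtains it by re-invoking \cref{thm:optimistic} itself, applied to the noiseless problem (i.e.\ with $\sigma = 0$, so that $L(w) = \|w - w^*\|_{\Sigma}^2$ and $\hat L(w) = \|w - w^*\|_{\hat\Sigma}^2$), yielding $\|w^* - w\|_{\Sigma} \le (1+\beta_1)\bigl[\|w^* - w\|_{\hat\Sigma} + F(w)/\sqrt n\bigr]$ uniformly over $\cK$. That extra $F/\sqrt n$ term is exactly what you dropped; without it the claimed conversion is invalid.

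Relatedly, the paper's combination is more economical than yours. Rather than pairing the Pythagorean bound with a second, noisy application of \cref{thm:optimistic} (which would cost an additional $2(\delta+\delta')$ of probability and blow past the stated budget), the paper bounds $\|\hat w - w^*\|_{\hat\Sigma}$ directly from the KKT inequality: conditional on $\xi$, $X^T\xi = \Sigma^{1/2}Z^T\xi$ has the law of $\|\xi\|_2$ times a $N(0,\Sigma)$ vector, so the defining property of $F$ gives $\frac{1}{n}\langle\xi, X(\hat w - w^*)\rangle \le \frac{\|\xi\|_2}{n} F(\hat w)$ with probability $1 - \delta'$, hence $\|\hat w - w^*\|_{\hat\Sigma} \le \sqrt{\|\xi\|_2 F(\hat w)/n}$. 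Plugging this into the noiseless lower isometry and concentrating $\|\xi\|_2$ then yields the stated square form directly, with no quadratic to solve. Your algebraic plan for solving the quadratic would in fact produce the right answer if the lower isometry were correct, but as stated it rests on a false premise.
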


\begin{proof}
Write $X = Z \Sigma^{1/2}$ with $Z$ a matrix of i.i.d. Gaussians, and observe
\[ \frac{1}{n} \langle Z^T\xi, \Sigma^{1/2}(w - w^*) \rangle  = \frac{1}{n} \langle \xi, Z \Sigma^{1/2}(w - w^*) \rangle = \frac{1}{n} \langle \xi, X(w - w^*) \rangle \]
Note that conditional on $\xi$, $Z^T \xi$ is just a standard Gaussian $N(0,\|\xi\|_2^2 I_d)$. So with probability at least $1 - \delta'$ (recalling the defining property of the complexity functional $F$) we have
\begin{equation}\label{eqn:conditional-noise}
\frac{1}{n} \langle Z^T\xi, \Sigma^{1/2}(w - w^*) \rangle \le \frac{\|\xi\|_2}{n} F(w). 
\end{equation}
Observe that
\[ \nabla_w \hat{L}(w) = \frac{1}{n} \nabla_w \|Y - Xw\|_2^2 = -\frac{2}{n} X^T(Y - X w) = -\frac{2}{n}(X^T \xi + X^TX (w^* - w)) \]
so from the KKT condition $\langle w^* - \hat w, \nabla_w \hat{L}(\hat w) \rangle \ge 0$ we have
\[ \langle w^* - \hat{w}, X^T \xi \rangle + \langle w^* -w, X^TX (w^* - w) \rangle \le 0\]
so rearranging gives the first inequality, and using \eqref{eqn:conditional-noise} gives the second inequality in
\[ \|w^* - \hat w\|_{\hat \Sigma} \le  \sqrt{\frac{1}{n} \langle \xi, X(\hat w - w^*) \rangle} \le \sqrt{\frac{\|\xi\|_2}{n} F(w)}.\]
By \cref{thm:optimistic} (defining $F(w) = \infty$ outside of $\cK$), for all $w \in \cK$
\[ \|w^* - w\|_{\Sigma} \le (1 + \beta_1)\left[\|w^* - w\|_{\hat \Sigma} + F(w)/\sqrt{n}\right] \]
and so for $\hat w$ we have
\begin{align*} 
\|w^* - \hat w\|_{\Sigma} 
&\le (1 + \beta_1)\left[\|w^* - \hat w\|_{\hat \Sigma} + F(\hat w)/\sqrt{n}\right] \\
&\le (1 + \beta_1)\left[\sqrt{\frac{\|\xi\|_2}{n} F(\hat w)} + F(\hat w)/\sqrt{n}\right] 
\end{align*}
and using the fact that the norm $\|\xi\|_2$ concentrates about $\sigma \sqrt{n}$ by \cref{lem:norm-concentration} and recalling the definition of $\beta_1$, we have
\[ \|w^* - \hat w\|_{\Sigma}^2 \le (1 + 2\beta_1) \left(\sqrt{\sigma F(\hat w)/\sqrt{n}} + F(\hat w)/\sqrt{n}\right)^2. \]
Finally, recalling that $L(\hat w) = \sigma^2 + \|w - \hat w\|_{\Sigma}^2$ gives the bound as claimed. 
\end{proof}

\LowComplexity*

\begin{proof}
Defining $\rho := \sqrt{p/n}$ and \cref{thm:2bound} gives
\[ \|w^* - \hat{w}\|_{\Sigma} \le (1 + 2\beta_1)^{1/2} \left(\sqrt{\sigma F(\hat w)/\sqrt{n}} + F(\hat w)/\sqrt{n}\right) = (1 + 2\beta_1)^{1/2} \left(\sqrt{\sigma \rho \|w - w^*\|_{\Sigma}} + \rho \|w - w^*\|_{\Sigma} \right)\]
hence
\[ (1 - (1 + 2\beta_1)^{1/2} \rho) \|w - w^*\|_{\Sigma} \le (1 + 2\beta_1)^{1/2} \sqrt{\sigma \rho \|w - w^*\|_{\Sigma}} \]
which is equivalent to 
\[ \|w - w^*\|_{\Sigma} \le \frac{(1 + 2\beta_1) \sigma \rho}{(1 - (1 + 2\beta_1)^{1/2} \rho)^2} \]
and this in turn is equivalent to the final result. 
\end{proof}

\OLSFastRate*

\begin{proof}
Recall from the proof of \cref{thm:OLS} that with probability at least $1 - \delta'$ we have
\[ \langle w - w^*, x \rangle \le \left(\sqrt{d} + 2 \sqrt{\log(4/\delta')}\right) \Norm{\Sigma^{1/2} (w^* - w)}_2 \]
where $\delta' = \delta/9$ so the result follows from \cref{thm:low-complexity} with $\cK = \R^d$. 
\end{proof}

\LASSOFastRate*

\begin{proof}
Recall from the proof of \cref{thm:lasso-compatibility}, more specially \eqref{eqn:F-lasso}, that with probability at least $1 - \delta/8$
\[
\langle w - w^*, x \rangle \le \|w - w^*\|_1 \|x\|_{\infty} \le 2 \|(w - w^*)_S\|_1 \|x\|_{\infty} \le \frac{2k^{1/2}}{\phi(\Sigma,S)} \|w - w^*\|_{\Sigma} \max_i \sqrt{2\Sigma_{ii} \log(16d/\delta)} .
\]
so the result follows from \cref{thm:low-complexity}.
\end{proof}

\subsection{Precise Rates for OLS}

\OLSlowerbd*

\begin{proof}
Consider the following estimator:
\begin{equation*}
    \begin{split}
        w_{\alpha} &= w^* + \alpha \left( \wols - w^* \right) \\
        &= w^* + \alpha (X^TX)^{-1} X^T \xi \\
    \end{split}
\end{equation*}
Then the training error is 
\begin{equation*}
    \begin{split}
        \hat{L}(w_{\alpha}) &= \frac{1}{n} \| Y - X w_{\alpha} \|^2 \\
        &= \frac{1}{n} \| \xi - \alpha X (X^TX)^{-1} X^T \xi  \|^2 \\
        &= \frac{1}{n} \| \left(I-X (X^TX)^{-1} X^T \right) \xi + (1-\alpha) X (X^TX)^{-1} X^T \xi \|^2 \\
        &= \frac{1}{n} \| \left(I-X (X^TX)^{-1} X^T \right) \xi \|^2 + (1-\alpha)^2 \frac{1}{n} \| X (X^TX)^{-1} X^T \xi \|^2 \\
        &= \hat{L}(\wols) + (1-\alpha)^2 \norm{\wols-w^*}_{\hat \Sigma}^2 \\
    \end{split}
\end{equation*}

By \cref{lem:ols-train-err}, with probability at least $1-\delta$, it holds that
\[
\norm{\wols-w^*}_{\hat \Sigma}^2 \leq \sigma^2 \left( \sqrt{\gamma} + 2\sqrt{\frac{\log(4/\delta)}{n}}\right)^2
\]
which can again be upper bounded by, for example, $4\sigma^2\gamma$ for a sufficiently large n. Therefore, we can let 
\[
(1-\alpha)^2 4\sigma^2\gamma = c \cdot \frac{\sigma^2}{\sqrt{n}}
\]
and it suffices to pick
\[
\alpha  = 1 + \sqrt{ \frac{c}{4\gamma}} \cdot \frac{1}{n^{1/4}}.
\]

So if we define $c^{\prime} = 2\sqrt{ \frac{c}{4\gamma}}$, then the excess error of $w_{\alpha}$ satisfies
\begin{equation*}
    \begin{split}
        L(w_{\alpha}) - \sigma^2 &= \| \Sigma^{1/2} (w_{\alpha}-w^*) \|^2 \\
        &= \alpha^2 \| \Sigma^{1/2} (\wols-w^*) \|^2 \\
        &\geq \left( 1 + \frac{c^{\prime}}{n^{1/4}} \right) \cdot L(\wols). 
    \end{split}
\end{equation*}
The last inequality follows from the fact that $L(\wols) \geq \sigma^2$.
\end{proof}

\OLSvar*
\begin{proof}
Write $X = Z\Sigma^{1/2}$ and recall that 
\begin{equation*}
    \begin{split}
        L(\wols) - \sigma^2 &= \norm{\wols-w^*}_{\Sigma}^2 = \Norm{\Sigma^{1/2} (X^TX)^{-1}X^T\xi}_{2}^2 \\
        &= \xi^T Z (Z^TZ)^{-2} Z^T \xi. \\
    \end{split}
\end{equation*}
First, we compute the expectation. By the tower law, we have
\begin{equation*}
    \begin{split}
        \E L(\wols) - \sigma^2 &= \E \left[ \E \left[ \xi^T Z (Z^TZ)^{-2} Z^T \xi \, | \, Z \right] \right] \\
        &= \sigma^2 \E \Tr((Z^TZ)^{-1}) \\
        &= \sigma^2 \Tr(\E \left[ (Z^TZ)^{-1} \right] )
    \end{split}
\end{equation*}
Proposition 2.1 of \citet{vonRosen} shows that
\begin{equation*}
    \E [(Z^TZ)^{-1}] = \frac{1}{n-d-1} I_d,
\end{equation*}
and so
\begin{equation*}
    \E L(\wols) = \sigma^2 + \sigma^2 \frac{d}{n-d-1} = \sigma^2 \frac{n-1}{n-d-1}.
\end{equation*}
To compute the variance, by the law of total variance, we have
\begin{equation*}
    \begin{split}
        \Var(L(\wols)) &= \Var(L(\wols)-\sigma^2) \\
        &= \E \Var(\xi^T Z (Z^TZ)^{-2} Z^T \xi \, | \, Z) + \Var(\E(\xi^T Z (Z^TZ)^{-2} Z^T \xi \, | \, Z))
    \end{split}
\end{equation*}
By the variance formula of Gaussian quadratic form, we have
\begin{equation*}
    \Var(\xi^T Z (Z^TZ)^{-2} Z^T \xi \, | \, Z) = 2\sigma^4 \Tr ( (Z^TZ)^{-2} )
\end{equation*}
Proposition 2.1 of \citet{vonRosen} shows that
\begin{equation*}
    \E [(Z^TZ)^{-2}] = \frac{n-1}{(n-d)(n-d-1)(n-d-3)} I_d,
\end{equation*}
and so
\begin{equation*}
    \E \Var(\xi^T Z (Z^TZ)^{-2} Z^T \xi \, | \, Z) = \frac{2\sigma^4 d(n-1)}{(n-d)(n-d-1)(n-d-3)}.
\end{equation*}
To compute the second term, observe that
\begin{equation*}
    \begin{split}
        \Var(\E(\xi^T Z (Z^TZ)^{-2} Z^T \xi \, | \, Z)) &= \sigma^4 \Var (\Tr((Z^TZ)^{-1})) \\
        &= \sigma^4 \Var (\text{vec}(I_d)^T \text{vec}((Z^TZ)^{-1})) \\
        &= \sigma^4 \text{vec}(I_d)^T \Var (\text{vec}((Z^TZ)^{-1})) \text{vec}(I_d)
    \end{split}
\end{equation*}
Proposition 2.1 of \citet{vonRosen} shows that
\begin{equation*}
    \Var (\text{vec}((Z^TZ)^{-1})) = \frac{I_{d^2} + \sum_{i,j} (e_i \otimes e_j)(e_j^T \otimes e_i^T)}{(n-d)(n-d-1)(n-d-3)} + 2 \frac{\text{vec}(I_d)\text{vec}(I_d)^T}{(n-d)(n-d-1)^2(n-d-3)}
\end{equation*}
and so
\begin{equation*}
    \begin{split}
        \frac{1}{\sigma^4}\Var(\E(\xi^T Z (Z^TZ)^{-2} Z^T \xi \, | \, Z)) 
        &= \frac{2d}{(n-d)(n-d-1)(n-d-3)} + \frac{2d^2}{(n-d)(n-d-1)^2(n-d-3)}\\
        &=  \frac{2d(n-1)}{(n-d)(n-d-1)^2(n-d-3)}.\\
    \end{split}
\end{equation*}
Finally, we have shown that
\begin{equation*}
    \Var(L(\wols)) = 2\sigma^4 \frac{d(n-1)}{(n-d-1)^2(n-d-3)}. \qedhere
\end{equation*}
\end{proof}

\OLShighprob*

\begin{proof}

We are interested in the excess risk:
\[
L(\wols) - \sigma^2 = \| \Sigma^{1/2}(\wols-w^*)\|^2 = \| (Z^TZ)^{-1}Z^T \xi\|^2.
\]
Notice that
\[
\| (Z^TZ)^{-1}Z^T \xi\|^2 = \left( (Z^TZ)^{-1/2}Z^T \xi \right)^T (Z^TZ)^{-1} \left( (Z^TZ)^{-1/2}Z^T \xi \right)
\]
and we have the following equality:
\begin{equation*}
    \begin{split}
        b^T(Z^TZ)^{-1} b 
        &= \max_u -\| Zu\|^2 + 2 \langle u, b \rangle \\
        &= \max_u \min_v \, \| v\|^2 + 2 \langle v, Zu \rangle + 2 \langle u, b \rangle. 
    \end{split}
\end{equation*}
We can plug in $(Z^TZ)^{-1/2}Z^T \xi$ into $b$. The $b$ term may seem a bit complicated, but the key observation is that conditioned on $Z$, the distribution of $(Z^TZ)^{-1/2}Z^T \xi \sim \mathcal{N}(0, \sigma^2 I_d)$ actually does not depend on $Z$, and so they are independent. Therefore, we can condition on $b = (Z^TZ)^{-1/2}Z^T \xi$ and the law of $Z$ remains unchanged. To apply \cref{thm:gmt}, we need use a truncation argument. Define the truncated problem as
\begin{equation}
    \Phi_r = \max_{\|u\| \leq r} \min_{v} \, \| v\|^2 + 2 \langle v, Zu \rangle + 2 \langle u, b \rangle,
\end{equation}
then by \cref{lem:truncation}, we have
\begin{equation*}
    \begin{split}
        &\Pr \left( L(\wols) - \sigma^2 > t  \, | \, (Z^TZ)^{-1/2}Z^T \xi = b \right) \\
        = &\Pr \left(\lim_{r \to \infty} \Phi_r > t \right) \leq \lim_{r \to \infty} \Pr \left( \Phi_r > t \right).\\
    \end{split}
\end{equation*}
Given $u$, the minimizer $v = -Zu$ satisfies $\| v \| \leq r\| Z\|$ and so for any $M > 0$, we have
\begin{equation*}
    \begin{split}
        \Pr \left( \Phi_r > t \right) &\leq \Pr \left( \max_{\|u\| \leq r} \min_{\| v\| \leq rM} \, \| v\|^2 + 2 \langle v, Zu \rangle + 2 \langle u, b \rangle > t \right) + \Pr(\| Z\| \geq M )  \\
        &\leq 2\Pr \left( \max_{\|u\| \leq r} \min_{\| v\| \leq rM} \, \| v\|^2 + 2 \| v \| \langle H, u \rangle + 2 \| u\| \langle G, v \rangle + 2 \langle u, b \rangle > t \right) + \Pr(\| Z\| \geq M ) \\
        &= 2\Pr \left( \max_{\|u\| \leq r} \min_{\| v\| \leq rM} \, \| v\|^2 + 2 \| v \| \left( \langle H, u \rangle -  \| G\| \| u\| \right)  + 2 \langle u, b \rangle > t \right) + \Pr(\| Z\| \geq M ) \\
    \end{split}
\end{equation*}
by Gaussian minimax theorem. On the event that $\| G\| \geq \| H\|$, the minimizer is 
\[
\| v \|= \| G\|\| u \| - \langle H, u \rangle \geq (\|G\|-\| H\|) \|u\| > 0.
\]
At the same time, we have 
\[
\| v \| \leq r(\|G\|+\| H\|)
\]
and so
\begin{equation*}
    \begin{split}
        \Pr \left( \Phi_r > t \right) \leq & \, 2\Pr \left( \max_{\|u\| \leq r} \, 2 \langle u, b \rangle  - \left( \langle H, u \rangle -  \| G\| \| u\| \right)^2  > t , \| G\| > \| H \|\right) + 2 \Pr(\| G\| \leq \| H \| ) \\
        &\qquad + 2 \Pr(\|G\|+\| H\| \geq M) + \Pr(\| Z\| \geq M ). \\
    \end{split}
\end{equation*}
As the max over $\{ u: \|u\| \leq r\}$ is always smaller than the overall max, taking $M \to \infty$, we have 
\[
\Pr \left( \Phi_r > t \right) \leq 2\Pr \left( \max_{u} \, 2 \langle u, b \rangle  - \left( \| G\| \| u\| - \langle H, u \rangle  \right)^2  > t , \| G\| > \| H \| \right) + 2 \Pr(\| G\| \leq \| H \| ) 
\]

Observe that any $u$ can be decomposed into two parts: one part spanned by $b$ and the other part in the orthogonal complement of $b$. Formally, we write $u = \alpha b + k$ where $\langle k, b \rangle = 0$, and the problem becomes
\[
\max_{\alpha \in \R, \langle k, b\rangle=0} \, 2 \alpha \| b\|^2  - \left( \| G \|\cdot \sqrt{\alpha^2 \| b\|^2 + \| k\|^2}  - \langle H, k \rangle - \alpha \langle H, b \rangle \right)^2.
\]
Define $P = I_d - \frac{bb^T}{\|b\|^2}$. On the event that $\|G\| > \| H \|$, the quantity inside the square is always positive and so we want to choose the direction of $k$ that make $\langle H, k \rangle$ as large as possible:
\begin{equation*}
    \begin{split}
        &\max_{\alpha \in \R} \, 2 \alpha \| b\|^2  - \min_{\langle k, b\rangle=0} \left( \| G \|\cdot \sqrt{\alpha^2 \| b\|^2 + \| k\|^2}  - \langle H, k \rangle - \alpha \langle H, b \rangle \right)^2\\
        =&\max_{\alpha \in \R} \, 2 \alpha \| b\|^2  - \left( \min_{\langle k, b\rangle=0}  \| G \|\cdot \sqrt{\alpha^2 \| b\|^2 + \| k\|^2}  - \langle H, k \rangle  - \alpha \langle H, b \rangle \right)^2 \\
        = &\max_{\alpha \in \R} \, 2 \alpha \| b\|^2  - \left( \min_{\beta \geq 0} \, \| G \|\cdot \sqrt{\alpha^2 \| b\|^2 + \beta^2}  - \beta \| PH\| - \alpha \langle H, b \rangle \right)^2 \\
        = &\max_{\alpha \in \R} \, 2 \alpha \| b\|^2  - \left( |\alpha| \cdot \| b\| \sqrt{\|G\|^2 - \|PH\|^2}  - \alpha \langle H, b \rangle \right)^2 \\
        \leq &\max_{\alpha \in \R} \, 2 \alpha \| b\|^2  - \alpha^2 \| b\|^2 \left(  \sqrt{\|G\|^2 - \|PH\|^2}  - \frac{|\langle H, b \rangle|}{\|b\|} \right)^2 = \frac{\|b\|^2}{\left(  \sqrt{\|G\|^2 - \|PH\|^2}  - \frac{|\langle H, b \rangle|}{\|b\|} \right)^2} 
    \end{split}
\end{equation*}

By the tower law, we have shown that
\begin{equation*}
    \begin{split}
        \Pr \left( L(\wols) - \sigma^2 > \frac{\|b\|^2}{t}   \right) 
        &\leq
        2 \pr \left( \| G\| \leq \| H\| \right) + 2 \Pr \left(  \sqrt{\|G\|^2 - \|PH\|^2}  - \frac{|\langle H, b \rangle|}{\|b\|} < \sqrt{t} , \| G\| > \| H \| \right) \\
        &= 2 \Pr \left( \| G\| \leq \| H\| \quad \text{or} \quad  \sqrt{\|G\|^2 - \|PH\|^2}  - \frac{|\langle H, b \rangle|}{\|b\|} < \sqrt{t} , \| G\| > \| H \| \right)
    \end{split}
\end{equation*}

For the simplicity of notation, denote
\[
\epsilon = 2 \sqrt{\frac{\log(32/\delta)}{n}}.
\]
By a union bound, with probability at least $1-\delta/2$, the following occurs:
\begin{enumerate}
    \item by \cref{lem:norm-concentration} and the fact that $b \sim \mathcal{N}(0, \sigma^2 I_d)$, it holds that
    \[
    \| G\|^2 \geq n(1-\epsilon)^2
    \]
    \[
    \| PH\|^2 \leq n(\sqrt{\gamma}+\epsilon)^2 \quad \text{and} \quad \| b\|^2 \leq \sigma^2 n (\sqrt{\gamma}+\epsilon)^2
    \]
    \item As $\frac{\langle H, b \rangle}{\| b\|} \sim \mathcal{N}(0,1)$, by standard Gaussian concentration, it holds that
    \[
    \frac{|\langle H, b \rangle|}{\|b\|} \leq \epsilon \sqrt{n}
    \]
\end{enumerate}
Therefore, for sufficiently large $n$, we have $\| G\| > \| H\|$ and we can pick $t$ by setting
\[
\sqrt{t} = \sqrt{n(1-\epsilon)^2-n (\sqrt{\gamma}+\epsilon)^2} - \epsilon \sqrt{n}
\]
and so with probability at least $1-\delta$, we have 
\[
L(\wols) - \sigma^2 \leq \frac{\sigma^2 (\sqrt{\gamma}+\epsilon)^2}{\left( \sqrt{(1-\epsilon)^2- (\sqrt{\gamma}+\epsilon)^2} - \epsilon \right)^2}.
\]
It is then routine to check the desired bound.
\end{proof}

\end{document}